\pgfplotsset{compat=1.16}
\tikzset{
    font={\fontsize{12pt}{12}\selectfont},
}
\pgfplotsset{
    % version compatibility
    compat=1.5.1,
    % basic line-styles
    primary/.style={color=black, style=solid, line width=1.5pt}, 
    secondary/.style={color=red, style=solid, line width=1.5pt}, 
}
\definecolor{BurntOrange}{HTML}{CC5500}
\definecolor{DarkFern}{HTML}{407428}
\definecolor{CBRed}{HTML}{994F00}
\definecolor{CBBlue}{HTML}{006CD1}
\colorlet{Fern}{DarkFern!85!white}
\colorlet{LightFern}{DarkFern!20!white}
\colorlet{LightCerulean}{CBBlue!20!white}
\def\bfI{\mathbf{I}}
\def\bfP{\mathbf{P}}
\def\bbN{\mathbb{N}}
\def\calB{\mathcal{B}}
\def\calL{\mathcal{L}}
\def\calQ{\mathcal{Q}}
\def\calS{\mathcal{S}}
\def\calT{\mathcal{T}}
\def\calX{\mathcal{X}}
\DeclarePairedDelimiter{\floor}{\lfloor}{\rfloor}
\newcommand{\abs}[1]{\left\vert #1\right\vert}
\newcommand{\rbr}[1]{\left(#1\right)}
\newcommand{\sbr}[1]{\left[#1\right]}
\newcommand{\cbr}[1]{\left\{#1\right\}}
\newcommand{\abr}[1]{\left\langle#1\right\rangle}
\def\norm#1{\|#1\|}
\def\argmax{\mathop{\rm arg\,max}}
\newcommand{\sign}{\text{sign}}
\def\half{\frac 1 2}
\newcommand{\inv}[1]{\frac{1}{#1}}
\newcommand{\into}{\rightarrow}
\newcommand{\R}{\mathbb{R}}
\newcommand{\tL}{\tilde{L}}
\newcommand{\tmu}{\tilde{\mu}}
\newcommand{\w}{w}
\newcommand{\wk}{w_k}
\newcommand{\wkk}{w_{k+1}}
\newcommand{\wopt}{w^*}
\newcommand{\x}{x}
\newcommand{\xk}{x_t}
\newcommand{\xkk}{x_{t+1}}
\newcommand{\xopt}{x^*}
\newcommand{\xbar}{\xbar{w}}
\newcommand{\etak}{\eta_k}
\newcommand{\grad}{\nabla f}
\theoremstyle{plain}
\newtheorem{theorem}{Theorem}[section]
\newtheorem{proposition}[theorem]{Proposition}
\newtheorem{lemma}[theorem]{Lemma}
\newtheorem{corollary}[theorem]{Corollary}
\theoremstyle{definition}
\newtheorem{definition}[theorem]{Definition}
\theoremstyle{remark}
\newtheorem{remark}[theorem]{Remark}
\begin{document}

% If your paper is accepted and the title of your paper is very long,
% the style will print as headings an error message. Use the following
% command to supply a shorter title of your paper so that it can be
% used as headings.
%
%\runningtitle{I use this title instead because the last one was very long}

% If your paper is accepted and the number of authors is large, the
% style will print as headings an error message. Use the following
% command to supply a shorter version of the authors names so that
% they can be used as headings (for example, use only the surnames)
%
%\runningauthor{Surname 1, Surname 2, Surname 3, ...., Surname n}

\twocolumn[

    \aistatstitle{Level Set Teleportation: An Optimization Perspective}

    \aistatsauthor{Aaron Mishkin \And Alberto Bietti \And Robert M. Gower }

    \aistatsaddress{Stanford University \And  CCM, Flatiron Institute \And CCM, Flatiron Institute} ]

%\texttt{amishkin@cs.stanford.edu} \\
%\texttt{alberto@bietti.me} \\
%\texttt{gowerrobert@gmail.com} \\

\begin{abstract}
    We study level set teleportation, an optimization routine which tries to
    accelerate gradient descent (GD) by maximizing the gradient norm over a
    level set of the objective.
    %level-curve of parameters with the same objective value.
    %Since the descent lemma implies that gradient descent (GD) decreases the
    %objective proportional to gradient norm, level set
    %teleportation maximizes this one-step progress guarantee.
    While teleportation intuitively speeds-up GD via bigger steps,
    current work lacks convergence theory for convex functions, guarantees for
    solving the teleportation operator, and even clear empirical evidence
    showing this acceleration.
    %our objective is to establish whether teleportation really accelerates
    %convergence and the magnitude of this improvement.
    We resolve these open questions.
    For convex functions satisfying Hessian
    stability, we prove that GD with teleportation obtains a combined
    sub-linear/linear convergence rate which is strictly faster than GD when
    the optimality gap is small.
    This is in sharp contrast to the standard (strongly) convex setting, where
    teleportation neither improves nor worsens convergence.
    To evaluate teleportation in practice, we develop a projected-gradient
    method requiring only Hessian-vector products.
    We use this to show that gradient methods with access to a teleportation
    oracle out-perform their standard versions on a variety of problems.
    We also find that GD with teleportation is faster than truncated Newton
    methods, particularly for non-convex optimization.
\end{abstract}

\section{INTRODUCTION}\label{sec:intro}

% !TEX root=../main.tex

We consider minimizing a differentiable function \( f \).
When \( \nabla f \) is \( L \)-Lipschitz continuous (\( L \)-smooth), the
\emph{descent lemma} \citep{bertsekas1997nonlinear} implies
that gradient descent (GD) with step-size \( \etak \) makes progress
proportional to the squared-norm of the gradient,
\begin{equation}
    \label{eq:descent-lemma}
    f(\wkk) \leq f(\wk) - \etak\rbr{1 - \frac{\etak L}{2}}\norm{\grad(\wk)}_2^2.
\end{equation}
If all other quantities are held constant, then increasing the gradient-norm
increases the one-step progress guaranteed by smoothness.
As a result, optimization trajectories which maximize the observed gradients
may converge faster than their naive counterparts.
This same argument has been used to incrementally grow neural
networks~\citep{evci2022gradmax}.

\begin{figure*}[t]
    \centering
    \includegraphics[width=0.48\textwidth]{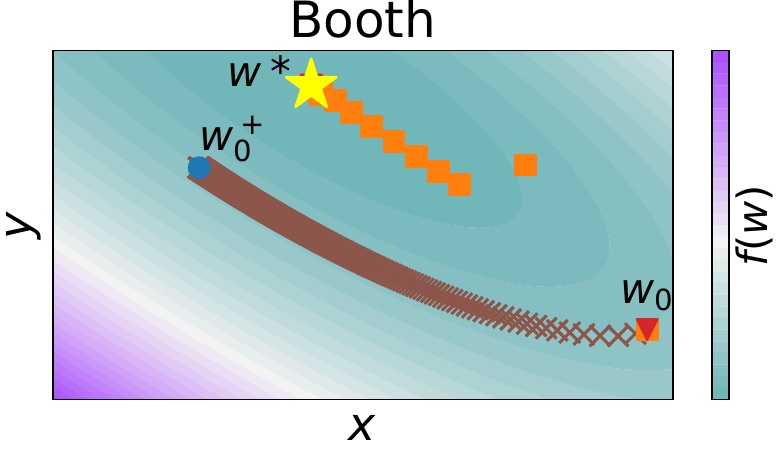}%
    \includegraphics[width=0.48\textwidth]{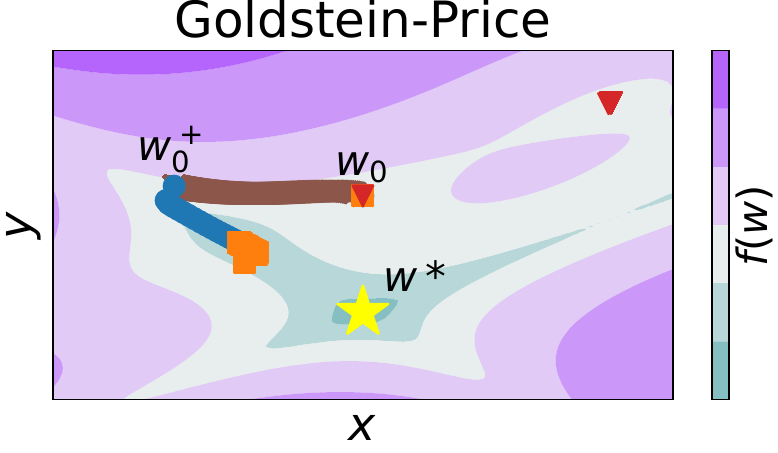}

    \includegraphics[width=0.9\textwidth]{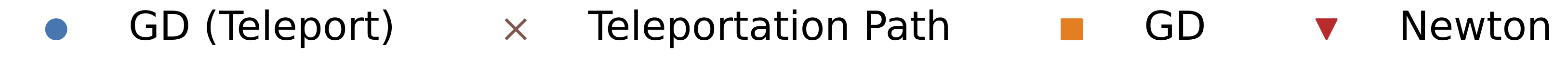}
    \caption{
        Initializing by level set teleportation two test functions.
        The Booth function is a convex quadratic and teleportation aligns $w_0^+$ with
        the maximum eigenvalue-eigenvector pair.
        The next iteration of GD is equivalent to a Newton update and converges in one
        step.
        The Goldstein-Price function is non-convex and teleporting pushes \( w_0^+ \)
        up a narrow ``valley'' from which convergence is slow.
        Newton's method diverges on the non-convex function.
    }%
    \label{fig:toy-teleport}
\end{figure*}

Level set teleportation attempts to leverage the descent lemma by maximizing
the gradient norm without changing the objective value.
At an iteration \( k \) satisfying a pre-determined scheduling rule,
level set teleportation solves the maximization problem,
\begin{equation}
    \label{eq:teleport}
    \wk^+ \in \argmax_w \half \norm{\nabla f(w)}^2_2 \quad \text{s.t.} \quad f(w) = f(w_k),
\end{equation}
where the feasible set \( \calL_k := \cbr{w : f(w) = f(\wk)} \) is the
level set of \( f \) at \( f(\wk) \).
When \( f \) is strongly convex, the Newton and gradient directions at \(
\wk^+ \) coincide \citep{zhao2023symmetry} and the next gradient step is
equivalent to a scaled Newton update (see \cref{fig:toy-teleport}).
For some functions, like quadratics, the gradient norm is also maximized
over the level sets along the gradient flow from \( \wk^+ \) to \( \wopt \)
\citep{zhao2023improving}.

Although these properties suggest that level set teleportation may be an
effective heuristic for improving the convergence rate of gradient methods,
existing theoretical evidence is surprisingly limited.
\citet{zhao2023improving} use the connection to Newton's method to show GD with
teleportation obtains mixed linear/quadratic convergence, but their rate
applies only to strongly convex functions and requires line-search.
In contrast, teleportation for non-convex functions shows no improvement in
convergence speed; rather, it strengthens standard sub-linear rates to
guarantee all gradients in the level set are concentrating
\citep{zhao2023improving}.
To the best of our knowledge, no rates on sub-optimality have been shown for
the convex setting.

Teleportation also suffers from a disconnect between theory and practice:
while existing guarantees require solving \cref{eq:teleport} over the full
level set, empirical studies instead focus on \emph{symmetry teleportation}
\citep{zhao2023symmetry, zhao2023improving}, which restricts optimization to
group symmetries of the objective.
For example, neural networks with positively homogeneous activation functions
are invariant under certain positive rescaling operators.
Although these symmetries may not fully capture \( \calL_k \), optimizing
over parameterized group operators can be very fast and yields an approximate
form of level set teleportation \citep{zhao2023symmetry}.

In this paper, we conduct a rigorous theoretical and empirical evaluation of
level set teleportation for convex functions.
We prove that while teleportation does not impair convergence, additional
assumptions are necessary to obtain improved rates for both convex and
strongly-convex \( f \).
Our novel proof technique, which merges the linear rate from Newton steps
and the sub-linear rate from standard GD, shows that teleportation is most
effective when the optimality gap is small.
This means teleportation is best used when near termination in order to obtain
highly accurate solutions.

In contrast to previous work, we also provide an algorithm for exactly solving
the level set teleportation problem using sequential quadratic programming
(SQP).
Our method only requires Hessian-vector products and resembles a
step of projected GD.
The procedure is parameter-free---the step-size is selected automatically
using a merit function---and convergence is guaranteed via connections to SQP
methods.
Crucially, we leverage our algorithm to provide the first rigorous empirical
evaluation of level set teleportation.
This is summarized along with our other contributions as follows.
\begin{itemize}
    \item We prove that GD with teleportation cannot improve upon the
          convergence rate of standard GD when \( f \) is strongly convex
          unless adaptive step-sizes are used.  In the convex setting, we show
          that the connection to Newton's method may fail and convergence of GD
          with teleportation depends on the diameter of the initial sub-level
          set.

    \item For convex functions satisfying Hessian stability, our novel
          proof combines the sub-linear progress from standard GD steps and
          the linear progress from GD steps after teleporting to obtain a
          convergence rate which is strictly faster than $O(1/K)$.

    \item We develop a tuning-free algorithm for solving teleportation
          problems and use it to show that stochastic and deterministic
          gradient methods converge faster with access to a teleportation
          oracle.
\end{itemize}

\iffalse
    \begin{itemize}
        \item \textbf{The Good}: We prove teleportation does not
              harm optimization when \( f \) is smooth and strongly-convex, although
              it cannot improve convergence in the worst case.
              We also provide a fast, parameter-free algorithm for solving teleportation and
              scale it to MNIST \citep{lecun1998mnist}.

        \item \textbf{The Bad}: We construct an example showing that level
              set teleportation can move arbitrarily far from the
              minimizer for non-strongly convex functions, making
              convergence guarantees unlikely in this setting.
              For deterministic updates, our experiments show initializing by teleportation
              only accelerates optimization initially and can lead to slow convergence
              later.
              %likely due to problem geometry.

        \item \textbf{The Ugly}: Level set teleportation can improve
              convergence for specific problems, particularly over the first few
              epochs, yet appears to heave little affect in the stochastic setting.
              As a result, it is difficult to dismiss teleportation outright.
    \end{itemize}
\fi

\subsection{Additional Related Work}

\textbf{Level Set Teleportation}:
\citet{armenta2020neural} and \citet{armenta2021representation} first used
group symmetries to randomly perturb or ``teleport'' the weights of neural
networks during training.
\citet{zhao2023symmetry} then proposed to optimize over parametric
symmetries for which teleportation is fast or even closed form, while
\citet{zhao2023flat} studied more sophisticated data-dependent symmetries.
While our focus is on convex problems, \citet{zhao2023improving} analyze
convergence of GD with teleportation for general non-convex functions and show
that the gradient norm converges uniformly over the sub-level set,
strengthening standard guarantees.
They also give rates under the PL condition \citep{karimi2016pl}.

\textbf{Symmetries in Optimization}: Teleportation is closely connected to
sharp/flat minima in deep learning \citep{hochreiter1997flat,
    keskar2017sharp, dinh2017sharp}.
Sharpness aware minimization \citep{foret2021sharpness} biases optimization
towards regions with low curvature, while level set teleportation actively
finds large gradients.
In contrast to GD with teleportation, \citet{neyshabur2015pathsgd} suggest
model invariances harm optimization and propose Path-SGD, a gradient method
which is invariant to rescaling symmetries in neural networks.
Similarly, \citet{bamler2018improving} leverage group operators by separating
optimization into directions with symmetries and those without.

\textbf{Initialization}:
While initialization methods for neural networks typically balance layer norms
to prevent exploding or vanishing gradients \citep{narkhede2022review},
teleportation skews layers to maximize the gradient norm.
Our work is related to \citet{saxe2014initialization, min2021initialization},
who study initialization in linear networks, and
\citet{tarmoun2021understanding}, who propose symmetry operators for
initializing linear models with two layers.
When used for initialization, teleportation can be thought of as data
dependent pre-training \citep{mikko1998weight, bengio2006greedy}.

\section{LEVEL SET TELEPORTATION}\label{sec:teleport}

% !TEX root=../main.tex

\begin{algorithm}[t]
	\caption{GD with Teleportation}%
	\label{alg:gd-with-teleport}
	\begin{algorithmic}
		\STATE \textbf{Inputs}: \( w_0 \); step-sizes \( \etak \); block indices \( \calB \), sizes \( b_k \).
		\STATE \( \calT \gets \bigcup_{k \in \calB} \cbr{k, k+1, \ldots, k+b_k-1} \)
		\FOR{\( k \in \cbr{0, \ldots, K} \)}
		\IF{\( k \in \calT \)}
		\STATE \( w_k^+ \in \argmax \cbr{\norm{\grad(w)}_2 : f(w) \leq f(\wk)} \)
		\ELSE
		\STATE \( w_k^+ \gets w_k \)
		\ENDIF
		\IF{line-search}
		\STATE \( g_k \gets \norm{\grad(\wk^+)}_2^2 \)
		\STATE \( \etak \!\gets\! \max \cbr{\eta\!:\! f(\wk^+ \!-\! \eta \grad(\wk^+)) \!\leq\! f(\wk^+) \!-\! \frac{\eta}{2} g_k } \)
		\ENDIF
		\STATE \( \wkk \gets \wk^+ - \etak \grad(\wk^+) \)
		\ENDFOR
		\STATE \textbf{Output}: \( w_K \)
	\end{algorithmic}
\end{algorithm}

We begin by considering the convergence of gradient methods with intermittent
teleportation.
We assume that \( f \) is \( L \)-smooth, has at least one minimizer \( \wopt
\), and is coercive.
Coercivity holds if and only if the level sets of \( f \) are bounded and is
sufficient to guarantee that the teleportation problem admits a finite
solution;
this property is easy to check in practice since boundedness of one level set
is sufficient for all level sets to be bounded when \( f \) is convex
\citep[Prop. 1.4.5]{bertsekas2009convex}.
We show in \cref{prop:relu-teleport} that teleportation is ill-posed for
non-coercive neural network problems.

Instead of solving \cref{eq:teleport}, we focus on
the more general sub-level set teleportation problem,
\begin{equation}
    \label{eq:sublevel-teleport}
    w_{k}^+ = \argmax_{w} \half \norm{\grad(w)}_2^2 \quad \text{s.t.} \quad f(w) \leq f(\wk),
\end{equation}
where the feasible set is \( \calS_k = \cbr{w : f(w) \leq f(\wk)} \).
For convex \( f \), \cref{eq:sublevel-teleport} admits at least one solution
on the boundary of \( \calS_k \); if \( f \) is strictly convex, then every
solution is on the boundary and the relaxation is equivalent to level set
teleportation (\cref{lemma:relaxation-solutions}).
However, sub-level set teleportation is acceptable even for general functions
since our goal is to minimize \( f \).

Although sub-level set teleportation can be used with any iterative optimizer,
we focus on (stochastic) gradient methods.
To simplify the presentation, we group teleportation steps into non-overlapping blocks.
Let \( \calB \) be the starting iterations of the blocks, \( b_k \) the length
of the block starting at iteration \( k \), and \(\calT \) the complete
teleportation schedule (tele-schedule) so that
\( k \in \calB \) implies \( k, k+1, \ldots, k+b_k-1 \in \calT \).
Non-overlapping means \( k \in \calB \) implies \( k - 1 \not \in \calT \).
See \cref{alg:gd-with-teleport}.

Given initialization \( w_0 \), let \( \wopt \) be the
projection of \( w_0 \) onto the optimal set.
The initial sub-level set \( \calS_0 = \cbr{w : f(w) \leq f(w_0)} \) and
diameter \( R = \sup \cbr{\norm{\wopt - w}_2 : w \in \calS_0} \) play a
critical role in our analysis.
This is because sub-level set teleportation is not non-expansive
(\cref{def:non-expansive}), meaning
\( \norm{w_k^+ - \wopt}_2 > \norm{w_k - \wopt}_2 \) may occur.
However, the optimality gaps \( \delta_k := f(\wk) - f(\wopt) \)
contract if the step-sizes are sufficiently small, so the
iterates remain in \( \calS_0 \).

\begin{restatable}{lemma}{diameterBound}\label{lemma:diameter-bound}
    If \( f \) is \( L \)-smooth and \( \etak < 2/L \), then GD
    with tele-schedule \( \calT \) satisfies
    \( \delta_{k+1} \leq \delta_k \), \( \wk \in \calS_0 \), and
    \( \norm{\wk - \wopt}_2 \leq R \) for every \( k \in \bbN \).
\end{restatable}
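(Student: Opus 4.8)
The plan is to derive all three conclusions from a single monotonicity fact: neither the teleportation step nor the subsequent gradient step ever increases the objective value. Once monotone decrease of \( f(\wk) \) is established, membership in \( \calS_0 \) and the distance bound follow immediately from the definitions of \( \calS_0 \) and \( R \). Note that no convexity is needed, consistent with the hypotheses, since the descent lemma \cref{eq:descent-lemma} relies only on \( L \)-smoothness.

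First I would verify that teleportation is feasibility-preserving. By the definition of the sub-level set teleportation sub-problem \cref{eq:sublevel-teleport}, the point \( \wk^+ \) satisfies the constraint \( f(\wk^+) \leq f(\wk) \) whenever \( k \in \calT \); when \( k \notin \calT \) we have \( \wk^+ = \wk \) and equality holds. Hence in all cases \( f(\wk^+) \leq f(\wk) \). Next I would control the gradient step by applying \cref{eq:descent-lemma} at \( \wk^+ \) with the stated step-size \( \etak < 2/L \), which gives \( f(\wkk) \leq f(\wk^+) - \etak(1 - \etak L/2)\norm{\grad(\wk^+)}_2^2 \); since \( \etak < 2/L \) forces \( 1 - \etak L/2 > 0 \), the correction term is non-negative and \( f(\wkk) \leq f(\wk^+) \). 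Chaining this with the previous inequality yields \( f(\wkk) \leq f(\wk) \), equivalently \( \delta_{k+1} \leq \delta_k \), which is the first claim. A trivial induction then gives \( f(\wk) \leq f(w_0) \) for all \( k \), i.e. \( \wk \in \calS_0 \), which is the second claim. The distance bound is immediate: since \( \wk \in \calS_0 \), the definition \( R = \sup\{\norm{\wopt - w}_2 : w \in \calS_0\} \) gives \( \norm{\wk - \wopt}_2 \leq R \).

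I do not expect a genuine obstacle here; the argument is a telescoping of two one-step inequalities. The only point requiring care is that sub-level teleportation, unlike a gradient step, is \emph{not} non-expansive, so boundedness of the iterates cannot be argued by contraction toward \( \wopt \). The resolution is exactly the route above: we bound the distance not through the geometry of the updates but through the invariance of the sub-level set \( \calS_0 \) under the whole procedure, using coercivity (a standing assumption) to ensure \( R < \infty \). For the optional line-search branch, the Armijo-type condition defining \( \etak \) enforces \( f(\wkk) \leq f(\wk^+) - \tfrac{\etak}{2}\norm{\grad(\wk^+)}_2^2 \leq f(\wk^+) \) directly, so the same chain of inequalities and all three conclusions carry over unchanged.
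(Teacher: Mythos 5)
Your proposal is correct and follows essentially the same route as the paper: descent lemma at \( \wk^+ \), feasibility of the teleportation step (\( f(\wk^+) \leq f(\wk) \)), induction, and then membership in \( \calS_0 \) plus the definition of \( R \). The only cosmetic difference is that you drop the non-positive descent term immediately, whereas the paper's chain also invokes \( \norm{\grad(\wk^+)}_2 \geq \norm{\grad(\wk)}_2 \) from the maximization property --- an extra fact that, as your argument shows, is not actually needed.
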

If \( f \) is \( \mu \)-strongly convex (\( \mu \)-SC), meaning
\( \forall \, w, w' \in \R^d \),
\begin{equation}
    \label{eq:strong-convexity}
    f(w) \geq f(w') + \abr{\grad(w'), w - w'} + \frac{\mu}{2} \norm{w - w'}_2^2,
\end{equation}
then we also have progress in \( \Delta_k = \norm{\wk - \wopt}_2^2 \).
\begin{restatable}{lemma}{scProgress}\label{lemma:sc-progress}
    Suppose \( f \) is \( L \)-smooth and \( \mu \)-SC.
    If \( k \in \calT \),
    then GD with step-size \( \etak \) satisfies
    \begin{equation}
        \Delta_{k+1}
        \leq
        \max\cbr{(1 \!-\! \eta_k L)^2, (1 \!-\! \eta_k \mu)^2}\\
        \norm{\wk^+ - \wopt}_2^2.
    \end{equation}
\end{restatable}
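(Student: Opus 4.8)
The plan is to recognize that \cref{lemma:sc-progress} is really a statement about a single gradient step and is independent of teleportation itself: the point \( \wk^+ \) enters only as an arbitrary starting point from which we bound the contraction of one GD step. Because \( f \) is \( \mu \)-SC it has a unique minimizer, so \( \wopt = \Pi_{\calW^*}(w_0) \) is that minimizer and \( \grad(\wopt) = 0 \). Writing the update \( \wkk = \wk^+ - \etak \grad(\wk^+) \) and subtracting \( \wopt \), the key step is to express the gradient increment \( \grad(\wk^+) - \grad(\wopt) \) as a symmetric linear operator applied to \( \wk^+ - \wopt \).

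Assuming \( f \) is twice differentiable (consistent with the Hessian-based assumptions used later in the paper), I would introduce the averaged Hessian
\begin{equation}
    H_k \coloneqq \int_0^1 \nabla^2 f\rbr{\wopt + t(\wk^+ - \wopt)} \, dt,
\end{equation}
so that the fundamental theorem of calculus gives \( \grad(\wk^+) = \grad(\wk^+) - \grad(\wopt) = H_k (\wk^+ - \wopt) \). Strong convexity and \( L \)-smoothness imply \( \mu \bfI \preceq \nabla^2 f(w) \preceq L \bfI \) pointwise, so \( H_k \) is symmetric with \( \mu \bfI \preceq H_k \preceq L \bfI \). Substituting into the update yields the exact identity \( \wkk - \wopt = (\bfI - \etak H_k)(\wk^+ - \wopt) \).

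Taking norms and passing to the spectral norm then gives \( \Delta_{k+1} \leq \norm{\bfI - \etak H_k}_2^2 \, \norm{\wk^+ - \wopt}_2^2 \). Since the eigenvalues of \( \bfI - \etak H_k \) lie in \( [1 - \etak L, \, 1 - \etak \mu] \), its spectral norm equals \( \max\cbr{\abs{1 - \etak L}, \abs{1 - \etak \mu}} \); squaring produces exactly the factor \( \max\cbr{(1 - \etak L)^2, (1 - \etak \mu)^2} \), which completes the proof.

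The main obstacle is that the \emph{tight} max-form factor genuinely requires the averaged-operator structure above. The crude one-point bounds \( \abr{\grad(\wk^+), \wk^+ - \wopt} \geq \mu \norm{\wk^+ - \wopt}_2^2 \) and \( \norm{\grad(\wk^+)}_2 \leq L\norm{\wk^+ - \wopt}_2 \), plugged into the expansion of \( \norm{\wk^+ - \wopt - \etak \grad(\wk^+)}_2^2 \), only yield a factor \( 1 - 2\etak\mu + \etak^2 L^2 \), which exceeds \( (1-\etak\mu)^2 \) whenever \( L > \mu \). Obtaining the sharp constant therefore hinges on treating the gradient map as a single symmetric contraction with spectrum contained in \( [\mu, L] \), which is precisely why twice differentiability is the natural assumption here.
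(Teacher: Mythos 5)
Your argument is correct as far as it goes, but it proves the lemma under an extra hypothesis the paper does not make, and your closing claim about what the sharp constant ``genuinely requires'' is mistaken. The paper's proof never assumes twice differentiability: it expands \( \norm{\wk^+ - \etak\grad(\wk^+) - \wopt}_2^2 \) and invokes the coercivity (interpolation) inequality for \( L \)-smooth, \( \mu \)-strongly convex functions \citep[Theorem 2.1.12]{nesterov2004lectures},
\[
    \abr{\grad(x) - \grad(y), x - y}
    \geq \frac{\mu L}{\mu + L}\norm{x - y}_2^2
    + \frac{1}{\mu + L}\norm{\grad(x) - \grad(y)}_2^2,
\]
which couples smoothness and strong convexity in a single bound. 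After substituting this inequality, the coefficient of the leftover term \( \norm{\grad(\wk^+)}_2^2 \) is \( \etak\rbr{\etak - \tfrac{2}{\mu+L}} \); a case analysis on its sign then uses \( \norm{\grad(\wk^+)}_2 \leq L\norm{\wk^+ - \wopt}_2 \) (smoothness) when it is nonnegative, giving exactly \( (1-\etak L)^2 \), and \( \norm{\grad(\wk^+)}_2 \geq \mu\norm{\wk^+ - \wopt}_2 \) (strong convexity) when it is negative, giving exactly \( (1-\etak\mu)^2 \). So the sharp max-form constant does \emph{not} hinge on treating the gradient map as a symmetric operator with spectrum in \( [\mu, L] \); what it hinges on is using the joint interpolation inequality rather than the two one-point bounds applied separately (which, as you correctly observe, only yield \( 1 - 2\etak\mu + \etak^2 L^2 \)). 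Your averaged-Hessian identity \( \wkk - \wopt = (\bfI - \etak H_k)(\wk^+ - \wopt) \) is a clean and valid alternative, and arguably more transparent, but it needs \( f \in C^2 \) (or at least a rigorous fundamental-theorem-of-calculus representation of \( \grad \) along the segment, which Lipschitz gradients alone do not immediately supply on a measure-zero set), whereas the paper's argument works under exactly the stated first-order hypotheses of the lemma.
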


Unfortunately, \( \norm{\wk^+ - \wopt}_2^2 \) and \( \Delta_k \) differ by a
factor of \( L / \mu \) in the worst case since teleportation is expansive,
leaving us to prove a weaker rate in the general setting.

\begin{restatable}{proposition}{scTeleport}\label{prop:sc-teleport}
    If \( f \) is \( L \)-smooth and \( \mu \)-SC,
    then GD with \( \etak < 2 / L \) and
    tele-schedule \( \calT \) satisfies,
    \begin{equation}
        \label{eq:slow-linear-rate}
        \delta_K \leq \prod_{i=0}^{K-1}
        \sbr{1 - 2 \mu \eta_i \rbr{1 - \frac{\eta_i L}{2}}}
        \delta_0.
    \end{equation}
    Moreover, if the teleportation operator is non-expansive, then for any
    step-sizes \( \etak \) we have,
    \begin{equation}
        \label{eq:quadratic-rate}
        \delta_K \leq \frac{L}{\mu} \prod_{i=0}^{K-1} \sbr{
            \max\cbr{(1 - \eta_i L)^2, (1 - \eta_i \mu)^2}
        }
        \delta_0.
    \end{equation}
    Finally, there exists \( f \) for which~\eqref{eq:quadratic-rate} holds and
    is tight.
\end{restatable}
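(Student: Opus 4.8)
The three claims are handled separately: the first two chain one-step guarantees, and the last is a saturating construction.

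\textbf{Slow linear rate \eqref{eq:slow-linear-rate}.} The plan is to combine the descent lemma with the Polyak--\L ojasiewicz inequality implied by strong convexity. For every \( k \), applying \eqref{eq:descent-lemma} at \( \wk^+ \) gives \( \delta_{k+1} \leq \delta_k^+ - \etak(1 - \etak L/2)\norm{\grad(\wk^+)}_2^2 \), where \( \delta_k^+ := f(\wk^+) - \fopt \). Minimizing the strong-convexity lower bound \eqref{eq:strong-convexity} over its second argument yields the gradient-domination estimate \( \norm{\grad(\wk^+)}_2^2 \geq 2\mu\,\delta_k^+ \), so \( \delta_{k+1} \leq \sbr{1 - 2\mu\etak(1-\etak L/2)}\delta_k^+ \). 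Two facts then close the argument: teleportation never increases the objective, so \( \delta_k^+ \leq \delta_k \) (with equality when \( k \notin \calT \)); and, writing \( t = \etak L \in (0,2) \), the contraction factor equals \( 1 - (\mu/L)\rbr{1-(1-t)^2} \in [1-\mu/L, 1) \), hence is nonnegative, which legitimizes replacing \( \delta_k^+ \) by \( \delta_k \). Telescoping over \( i = 0, \dots, K-1 \) gives \eqref{eq:slow-linear-rate}.

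\textbf{Quadratic rate \eqref{eq:quadratic-rate}.} Here I would work with the iterate distances \( \Delta_k \) and invoke \cref{lemma:sc-progress}. For \( k \in \calT \) that lemma bounds \( \Delta_{k+1} \leq \max\cbr{(1-\etak L)^2, (1-\etak \mu)^2}\norm{\wk^+ - \wopt}_2^2 \), and the non-expansiveness hypothesis replaces \( \norm{\wk^+ - \wopt}_2^2 \) by \( \Delta_k \). For \( k \notin \calT \) the identical inequality is just the classical strongly-convex GD contraction (the case \( \wk^+ = \wk \)). Thus in all cases \( \Delta_{k+1} \leq \max\cbr{(1-\etak L)^2, (1-\etak \mu)^2}\Delta_k \), and telescoping yields the product bound on \( \Delta_K \). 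I then convert back to function values using \( L \)-smoothness at \( w_K \), namely \( \delta_K \leq \frac{L}{2}\Delta_K \), and strong convexity at \( w_0 \), namely \( \Delta_0 \leq \frac{2}{\mu}\delta_0 \); multiplying these two one-sided estimates with the telescoped iterate bound produces exactly the prefactor \( L/\mu \) in \eqref{eq:quadratic-rate}.

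\textbf{Tightness, and the main obstacle.} The derivation above shows that the \( L/\mu \) factor comes \emph{only} from the two one-sided conversions, so a tight example must saturate both at once: \( f \) must be ``seen with curvature \( \mu \)'' at \( w_0 \) and ``with curvature \( L \)'' at \( w_K \). No quadratic can do this, since along any eigendirection the curvature is constant and, moreover, teleportation toward the maximal-curvature direction contracts the distance by precisely the compensating factor \( \mu/L \), so there \eqref{eq:quadratic-rate} is loose. My plan is therefore to take an even, separable \( f \) with \( f'' \in [\mu, L] \) whose curvature is \( \approx L \) on a small neighbourhood \( \cbr{\abs{x} \leq a} \) of the minimizer and \( \approx \mu \) outside it, together with the empty teleportation schedule (vacuously non-expansive). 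Starting far away, the method contracts the distance at rate \( \approx (1-\etak\mu) =: \rho \) throughout the low-curvature region, so after \( K \) steps tuned so that the iterate has just entered \( \cbr{\abs{x}\leq a} \) one has \( \delta_0 \approx \frac{\mu}{2}x_0^2 \), \( \delta_K \approx \frac{L}{2}x_K^2 \), and \( x_K \approx \rho^K x_0 \), giving \( \delta_K \approx \frac{L}{\mu}\rho^{2K}\delta_0 \). The delicate point is that the curvature transition at \( \abs{x}=a \) makes the outer GD dynamics affine rather than exactly geometric, so the iterate contracts slightly faster than \( \rho \); I expect to recover \emph{exact} tightness either by smoothing the transition and taking the limit \( a \to 0 \) with \( K \) scaled so that \( \rho^K x_0 \to a \), or by a piecewise-quadratic \( f \) for which the bookkeeping closes in closed form. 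This last step is where I anticipate the real work, since it must decouple the level-set radius from the local curvature — precisely what teleportation on a quadratic cannot achieve.
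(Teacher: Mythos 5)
Your proofs of the first two claims are sound. For \eqref{eq:slow-linear-rate} you take a slightly different (and arguably cleaner) route than the paper: you apply the PL inequality at \( \wk^+ \) and then use \( \delta_k^+ \leq \delta_k \) together with nonnegativity of the contraction factor, whereas the paper first replaces \( \norm{\grad(\wk^+)}_2 \) by \( \norm{\grad(\wk)}_2 \) (using that teleportation can only increase the gradient norm) and applies PL at \( \wk \). Both are valid; yours needs only monotonicity of the objective under teleportation, not the gradient-maximizing property. Your proof of \eqref{eq:quadratic-rate} is essentially identical to the paper's (Lemma~\ref{lemma:sc-progress}, non-expansiveness with \( \wopt \) as a fixed point of the teleportation operator, then the \( L/2 \) and \( 2/\mu \) conversions).

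The genuine gap is the tightness claim, which you leave unproven and, more importantly, approach under a misreading of what must be shown. You interpret ``tight'' as requiring saturation of the \( L/\mu \) prefactor with \( L > \mu \), dismiss quadratics on that basis, and sketch a piecewise-quadratic construction that you yourself acknowledge only gives approximate tightness (the affine outer dynamics contract strictly faster than \( (1-\eta\mu) \), and your fix is a limiting argument \( a \to 0 \), \( K \to \infty \) that is never carried out). But the statement only asserts the \emph{existence} of some \( L \)-smooth, \( \mu \)-strongly convex \( f \) for which \eqref{eq:quadratic-rate} holds with equality, and the paper achieves this with the isotropic quadratic \( f(w) = \frac{1}{2}\norm{w}_2^2 \), for which \( L = \mu = 1 \). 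On this function every point of a level set (a sphere) has the same gradient norm, so sub-level set teleportation leaves the objective value unchanged (\( \norm{\wk^+}_2 = \norm{\wk}_2 \) by Lemma~\ref{lemma:relaxation-solutions}), and each GD step gives exactly \( \delta_{k+1} = (1-\eta_k)^2 \delta_k \); since \( L/\mu = 1 \) and \( \max\cbr{(1-\eta_i L)^2, (1-\eta_i\mu)^2} = (1-\eta_i)^2 \), the bound \eqref{eq:quadratic-rate} is an equality. This suffices for the proposition's purpose, stated right after it in the text: there is a worst-case instance on which teleportation cannot accelerate GD. Indeed, your stronger reading is likely unattainable exactly: equality in \eqref{eq:quadratic-rate} for finite \( K \) forces equality in all three chained inequalities (\( \delta_K \leq \frac{L}{2}\Delta_K \), the per-step distance contraction, and \( \Delta_0 \leq \frac{2}{\mu}\delta_0 \)) simultaneously, which conflicts unless \( L = \mu \) --- so the degenerate case you excluded is precisely the one that works.
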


\cref{eq:quadratic-rate} is the tight minimax rate for GD on a strongly-convex
function \citep[Section 1.3]{bertsekas1997nonlinear};
since it is also tight for GD with teleportation, \cref{prop:sc-teleport} shows
that a teleportation oracle cannot accelerate GD in the worst case.
This is consistent with \citet{zhao2023improving}, who
require adaptive step-sizes from a Wolfe line-search
\citep{wolfe1969convergence, wolfe1971convergence} to show a linear/quadratic
rate.
Before giving conditions under which teleportation does strictly improve
convergence, we show that the connection to Newton's method may fail for
non-strongly convex \( f \).
%While \citet{zhao2023improving} also give an upper bound for symmetry teleportation
%with GD under the weaker Polyak-{\L}ojasiewicz condition \citep{karimi2016pl}, the final rate
%is worse and they do not investigate the lower bounds.

\subsection{Convergence for Convex Functions}\label{sec:convex-functions}

The link between GD with teleportation and Newton's method follows from
optimality conditions for teleportation.
Let \( \wk^+ \) be a local maximum of \cref{eq:sublevel-teleport}.
If the linear independence constraint qualification (LICQ) holds, then there
exists \( \lambda_k \geq 0 \) such that \( \wk^+ \) satisfies the KKT
conditions~\citep{bertsekas1997nonlinear},
\begin{equation}
    \label{eq:kkt-conditions}
    \begin{aligned}
        \nabla^2 f(\wk^+) \grad(\wk^+) - \lambda_k \grad(\wk^+) = 0, \\
        f(\wk^+) \leq f(\wk), \, \,
        \text{and} \, \, \lambda_k (f(\wk^+) - f(\wk)) = 0.
    \end{aligned}
\end{equation}
\cref{eq:kkt-conditions} reveals that the dual parameter \( \lambda_k \)
is also an eigenvalue of \( \nabla^2 f(\wk^+) \).
If \( \lambda_k \neq 0 \), then
\begin{equation}
    \label{eq:newton-direction}
    \grad(\wk^+) = \lambda_k \rbr{\nabla^2 f(\wk^+)}^{\dagger} \grad(\wk^+),
\end{equation}
where \( A^\dagger \) denotes the pseudo-inverse of \( A \).
Thus, the first step of GD after teleporting is a Newton step with unknown
scale \( \lambda_k \).
Since the teleportation problem satisfies LICQ unless \( \grad(\wk^+) = 0 \)
(in which case \( f \) is stationary over \( \calS_k \)), the connection to
Newton may only fail when \( \nabla^2 f \) is positive semi-definite and \(
\lambda_k = 0 \).
This is possible when \( f \) is non-strongly convex.

\begin{restatable}{proposition}{newtonDirection}\label{prop:newton-direction}
    There exists an \( L \)-smooth, convex, and \( C^2 \) function for which the
    gradient direction after teleporting is not the Newton direction.
\end{restatable}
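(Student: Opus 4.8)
The plan is to construct an explicit convex, coercive, $C^2$ function whose teleportation solution $w^+$ has $\nabla f(w^+)\neq 0$ lying in the kernel of $\nabla^2 f(w^+)$. As observed just before the statement, \cref{eq:kkt-conditions} forces $\nabla f(w^+)$ to be an eigenvector of $\nabla^2 f(w^+)$ with multiplier $\lambda_k\geq 0$, and \cref{eq:newton-direction} makes the first post-teleportation step a Newton step exactly when $\lambda_k>0$. Hence it suffices to drive the multiplier to $\lambda_k = 0$: if the nonzero gradient sits in the kernel of the (necessarily PSD) Hessian, then $(\nabla^2 f(w^+))^{\dagger}\nabla f(w^+) = 0 \neq \nabla f(w^+)$, so the gradient direction cannot be the Newton direction.

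Separable choices $f(x,y)=a(x)+b(y)$ are a dead end: the only way for the gradient to enter a one-dimensional Hessian kernel is for its non-kernel component to vanish, which pins the maximizer to a coordinate minimum of the curved part---precisely where the gradient norm is smallest along that axis, so the true maximizer escapes. I would instead use a radial profile $f(w)=\varphi(\|w\|_2^2)$, which makes global maximality automatic. Writing $t=\|w\|_2^2$, one computes $\nabla f = 2\varphi'(t)\,w$ and $\nabla^2 f = 2\varphi'(t) I + 4\varphi''(t)\, w w^{\top}$, so the Hessian has tangential eigenvalue $2\varphi'(t)$ and radial eigenvalue $2\varphi'(t)+4t\varphi''(t)$; the gradient is radial and therefore is always an eigenvector, with eigenvalue equal to the radial one. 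I would pick $\varphi$ so that $\varphi'>0$ everywhere (strict monotonicity in $\|w\|_2$ and a positive tangential eigenvalue) and so that the radial eigenvalue is a nonnegative perfect square vanishing at a single radius $t_0>0$. The perfect-square requirement simultaneously yields $\nabla^2 f \succeq 0$ (convexity) and $\lambda_k=0$ on the sphere $\{\|w\|_2^2=t_0\}$.

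A concrete instance is $\varphi'(t)=t^2-\tfrac{10}{3}t+5$ (which has negative discriminant, hence $\varphi'>0$), for which the radial eigenvalue factors as $2\varphi'(t)+4t\varphi''(t)=10(t-1)^2$ and $\varphi(t)=\tfrac13 t^3-\tfrac53 t^2+5t$, i.e.\ $f(w)=\tfrac13\|w\|_2^6-\tfrac53\|w\|_2^4+5\|w\|_2^2$. This $f$ is $C^{\infty}$ on $\R^d$ (a polynomial in $\|w\|_2^2$), convex (both eigenvalues are nonnegative), coercive, and minimized at the origin. Since $\varphi'>0$, each level set is a single sphere and each sublevel set $\calS_k$ a ball. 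On the sphere $\|w\|_2=1$ the gradient norm $2\varphi'(1)\|w\|_2=\tfrac{16}{3}$ is constant, so every point of that sphere is a global maximizer of the teleportation objective over the enclosing unit ball; equivalently $\|\nabla f\|_2^2=4t\varphi'(t)^2$ is nondecreasing on $[0,1]$ because its derivative is $20\varphi'(t)(t-1)^2\geq 0$, forcing the maximum onto the boundary. At such a maximizer $w^+$ the gradient is nonzero while the radial eigenvalue $10(1-1)^2=0$ annihilates it, so $\nabla^2 f(w^+)\nabla f(w^+)=0$, giving $\lambda_k=0$ and a vanishing Newton direction.

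The one remaining gap is global $L$-smoothness: the Hessian grows like $\|w\|_2^4$, so the polynomial is only locally smooth. Because the entire argument lives inside the unit ball, I would close this by splicing $f$ with a quadratic outside a slightly larger ball through a $C^2$ convex transition, producing a globally $L$-smooth, convex, coercive function that is unchanged near $\calS_k$ and hence has the same teleportation solution. I expect the genuine difficulty to be reconciling ``$\lambda_k=0$ at $t_0$'' with global convexity: naive attempts make the degenerate point a saddle or a minimum of the gradient-norm objective, so it fails to be the teleportation maximizer. Expressing the radial eigenvalue as a perfect square is the device that resolves this, and the radial ansatz is what removes the otherwise delicate task of certifying that the degenerate point is a \emph{global} rather than merely local maximizer.
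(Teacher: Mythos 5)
Your construction is correct and takes a genuinely different route from the paper. The paper also drives the KKT multiplier to zero, but it does so with a \emph{separable} function: it sums second-order Huber functions \( f_q(x) = \sum_{i=1}^q h_2(x_i) \), where \( h_2 \) is quartic on \( \abs{x} < 1 \) and linear outside, so that at the teleportation solution every coordinate sits in a linear tail, the Hessian is the \emph{zero matrix}, the gradient is \( \mathbf{1} \), and the Newton direction is undefined. This makes global \( L \)-smoothness automatic (\( 0 \leq h_2'' \leq 3/2 \)), with no splicing step needed. Your radial ansatz \( f(w) = \varphi(\norm{w}_2^2) \) with \( \varphi'(t) = t^2 - \tfrac{10}{3}t + 5 \) instead produces a teleportation point where the Hessian is \emph{nonzero} (it equals \( \tfrac{16}{3}\rbr{I - w^+ (w^+)^\top} \), rank \( d-1 \)) yet annihilates the nonzero gradient, so the pseudo-inverse Newton direction is well defined and equal to zero; this is arguably a sharper illustration of the failure mode, since degeneracy occurs only on a single sphere rather than on a fat region where the function is affine. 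Your computations check out: the radial eigenvalue is \( 10(t-1)^2 \), the derivative of \( 4t\varphi'(t)^2 \) is \( 20\varphi'(t)(t-1)^2 \geq 0 \), and the gradient-norm maximum over the unit ball is attained exactly on the unit sphere. The splicing gap you flag is real but routine to close, and there is a particularly clean fix: freeze \( \varphi' \) at \( t_1 = 5/3 \), the vertex of the parabola, where \( \varphi''(5/3) = 0 \); the capped \( \tilde\varphi'(t) = \varphi'(\min(t, 5/3)) \) is then \( C^1 \) with continuous second derivative, the resulting \( f \) is \( C^2 \), convex (both Hessian eigenvalues stay nonnegative), coercive, and globally \( L \)-smooth, and nothing changes on the unit ball where the teleportation problem lives.

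One remark: your claim that separable choices are a ``dead end'' is contradicted by the paper's own proof. Your objection applies only when the Hessian kernel is a proper coordinate subspace and the gradient retains components along curved coordinates; the paper escapes this by making \emph{every} coordinate flat at the maximizer, so the kernel is all of \( \R^q \) and there is no non-kernel component to vanish. This misjudgment does not affect the validity of your construction, but it is worth knowing that the simplest separable idea does succeed.
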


Without the connection to Newton's method, we can only prove that GD with
teleportation attains the same order of convergence as GD without
teleportation.

\begin{restatable}{proposition}{convexTeleport}\label{prop:convex-teleport}
    If \( f \) is \( L \)-smooth and convex, then GD with \( \eta < 2 /
    L \) and tele-schedule \( \calT \) satisfies
    \[
        \delta_K \leq 2 R^2 / \rbr{K \eta \rbr{2 - L \eta}}.
    \]
    Moreover, there exists \( f \) for which the convergence of GD with
    and without teleportation is identical.
\end{restatable}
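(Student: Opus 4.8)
The proposition has two parts: a worst-case $O(1/K)$ upper bound matching vanilla GD, and a matching example showing the rate is genuinely not improved. My plan for the upper bound is to run the textbook convex GD analysis, but anchored at the post-teleportation iterate $\wk^+$ rather than at $\wk$. First I would record that $\wk^+ \in \calS_0$: since $f(\wk^+) \le f(\wk) \le f(w_0)$ — the first inequality is the teleportation constraint of \cref{eq:sublevel-teleport} and the second follows from the monotonicity $\delta_{k+1}\le\delta_k$ in \cref{lemma:diameter-bound} — the definition of $R$ gives $\norm{\wk^+ - \wopt}_2 \le R$. Combining the descent lemma \cref{eq:descent-lemma} applied at $\wk^+$ with the convexity inequality $f(\wk^+) - f(\wopt) \le \abr{\grad(\wk^+), \wk^+ - \wopt} \le \norm{\grad(\wk^+)}_2 R$ then yields, writing $\delta_k^+ = f(\wk^+) - f(\wopt)$,
\[
\delta_{k+1} \le \delta_k^+ - \frac{\eta(2 - \eta L)}{2R^2}\rbr{\delta_k^+}^2 .
\]

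Next I would convert this quadratic recursion into a bound on $1/\delta_K$. Setting $c = \eta(2-\eta L)/(2R^2)$ and using $(1-x)^{-1}\ge 1+x$, the recursion gives $1/\delta_{k+1} \ge 1/\delta_k^+ + c$; since the teleportation constraint forces $\delta_k^+ \le \delta_k$ and hence $1/\delta_k^+ \ge 1/\delta_k$, I obtain $1/\delta_{k+1} \ge 1/\delta_k + c$ uniformly over all $k$, whether or not $k \in \calT$ (when $k\notin\calT$ we simply have $\wk^+ = \wk$ and $\delta_k^+ = \delta_k$, recovering the standard step). Telescoping from $0$ to $K-1$ and dropping the nonnegative term $1/\delta_0$ gives $\delta_K \le 1/(Kc)$, the claimed bound. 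The conceptual point — and the reason one cannot do better — is that this argument cannot exploit the enlarged gradient norm produced by teleportation: convexity only converts the optimality gap into a gradient-norm lower bound through the factor $\norm{\wk^+-\wopt}_2$, and because teleportation is expansive (\cref{lemma:diameter-bound} only certifies $\wk^+ \in \calS_0$) this factor can equal the full diameter $R$, leaving the constant $c$ unchanged from vanilla GD.

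For the matching example I would exhibit a function on which teleportation is a literal no-op, so that the two trajectories — and hence the gaps $\delta_K$ — coincide for every $K$. The cleanest choice is a one-dimensional even, convex, $L$-smooth, non-strongly-convex function such as $f(w) = \sqrt{1 + w^2} - 1$, with minimizer $0$, $\abs{f'(w)} = \abs{w}/\sqrt{1+w^2}$, and $f''(w) = (1+w^2)^{-3/2}\in(0,1]$ so that $L=1$. For $w_k > 0$ the sub-level set is $\calS_k = \sbr{-w_k, w_k}$ and $\norm{\grad(w)}_2 = \abs{f'(w)}$ is increasing in $\abs{w}$, so \cref{eq:sublevel-teleport} is solved at $\wk^+ \in \cbr{w_k, -w_k}$; in either case $\abs{f'(\wk^+)} = \abs{f'(\wk)}$. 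Hence the GD step from $\wk^+$ has the same magnitude as the step from $\wk$, and by induction $\abs{\wkk}$, and therefore $\delta_{k+1}$, is identical with and without teleportation. This already gives a convex instance where the rates coincide exactly. I expect the main obstacle to lie not in the symmetry argument, which immediately kills teleportation, but in the optional refinement of making the common rate provably the worst-case $\Theta(1/K)$: for that I would instead take an even, globally $L$-smooth convex function that is sufficiently flat near its minimizer, and the delicate steps would be verifying global smoothness and invoking a standard convex lower bound to certify $\delta_K = \Theta(1/K)$ while the reflection symmetry keeps teleportation inert.
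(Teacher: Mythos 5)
Your proof is correct, and for the upper bound it is essentially the paper's argument: the standard convex-GD telescoping of the inverse optimality gap, with the sub-level-set diameter $R$ replacing $\norm{w_0 - \wopt}_2$. The only difference is anchoring: the paper transfers the descent inequality from $\wk^+$ back to $\wk$ --- using that teleportation both decreases $f$ and, being a maximizer over a set containing $\wk$, increases the gradient norm, while the coefficient $\eta\rbr{\frac{L\eta}{2} - 1}$ is negative --- and then telescopes in $\delta_k$ and $\norm{\grad(\wk)}_2$; you keep everything at $\wk^+$ and invoke $\delta_k^+ \le \delta_k$ at the end, which is interchangeable (and, as you observe, uses only the constraint $f(\wk^+) \le f(\wk)$, never the gradient-norm maximization). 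The genuine difference is the tightness example. The paper takes $f(w) = h_2(w_1)$ with $h_2$ the second-order Huber function, so that $f$ is constant in every coordinate but the first; then $\wk$ is itself a maximizer of the gradient norm over its own sub-level set, teleportation may be taken to be the identity, and the two trajectories coincide exactly. Your example, $f(w) = \sqrt{1+w^2}-1$ in one dimension, instead makes the teleportation argmax equal to $\cbr{w_k, -w_k}$, and you check that either selection yields the same $\abs{w_{k+1}}$, hence the same $\delta_{k+1}$, by evenness of $f$. Both are valid; yours is arguably tidier on the selection issue, since strict convexity pins the argmax down to two points and you handle both, whereas the paper's construction has a large solution set and implicitly assumes the oracle selects $\wk$ from it. You are also right that the proposition only asks for identical rates, so your proposed strengthening to a certified $\Theta(1/K)$ lower bound is optional and not required.
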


In contrast, \citet[Theorem 3.3]{bubeck2015convex} prove that standard GD
converges at the rate,
\[
    \delta_k \leq 2 \norm{\w_0 - \wopt}_2^2 \, / \rbr{K \eta \rbr{2 - L \eta}}.
\]
Compared with standard results, GD with teleportation depends
on the diameter of the initial sub-level set, \( R^2 \), rather than the
initial distance \( \norm{\w_0 - \wopt}_2^2 \).
This dependence is necessary since one step of teleportation can nearly
attain the diameter of the sub-level set.
\begin{restatable}{proposition}{teleportDistances}\label{prop:teleport-distances}
    There exists an \( L \)-smooth and convex function such that teleporting
    at \( k = 0 \) guarantees,
    \[
        \norm{\w_0^+ - \wopt}_2 \geq R / 4.
    \]
\end{restatable}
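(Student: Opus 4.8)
The plan is to exhibit an explicit one-dimensional convex function whose initial sub-level set is elongated, with a flat ``valley'' on one side and a steep ``wall'' on the other, to initialize at the shallow end, and to show that the teleportation step is forced to jump to the far (steep) endpoint of the level set, which attains the diameter of \( \calS_0 \). A convenient choice is \( f:\R\to\R \) given by
\[
f(w) =
\begin{cases}
\frac{\alpha}{2} w^2, & w \geq 0,\\
0, & -a \leq w \leq 0,\\
\frac{L}{2}(w+a)^2, & w \leq -a,
\end{cases}
\]
with \( 0 < \alpha < L \) (e.g.\ \( \alpha = L/2 \)) and \( a>0 \) a free parameter. If a \( C^2 \) example is desired, one mollifies the two kinks; this perturbs the quantities below by a controllable amount.

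\emph{Step 1 (regularity).} One checks that \( \grad \) is the continuous, non-decreasing, piecewise-linear map with slopes \( 0,\alpha,L \in [0,L] \); hence \( f \) is convex and \( \grad \) is \( L \)-Lipschitz. \emph{Step 2 (sub-level set and \( R \)).} The minimizing set is \( \calW^* = [-a,0] \) with minimum value \( 0 \); initializing at \( w_0 > 0 \) gives \( \wopt = \Pi_{\calW^*}(w_0) = 0 \) and \( f(w_0) = \frac{\alpha}{2}w_0^2 =: c \). Solving \( f(w)=c \) on each branch yields \( \calS_0 = [w_-,w_+] \) with \( w_+ = w_0 = \sqrt{2c/\alpha} \) and \( w_- = -a - \sqrt{2c/L} \); choosing \( a \geq \sqrt{2c/\alpha} \) makes the wall endpoint the farthest point, so \( R = \abs{w_-} = a + \sqrt{2c/L} \). \emph{Step 3 (teleportation lands at the wall).} By \cref{lemma:relaxation-solutions} a maximizer of \( \half\norm{\grad(w)}_2^2 \) over \( \calS_0 \) lies on the boundary \( \cbr{w_-,w_+} \). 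Comparing endpoints, \( \abs{f'(w_+)} = \alpha w_0 = \sqrt{2\alpha c} \) while \( \abs{f'(w_-)} = L\sqrt{2c/L} = \sqrt{2Lc} \); since \( \alpha < L \) the wall endpoint strictly dominates, and the interior gradient norms are strictly smaller on each branch, so \( w_0^+ = w_- \) is the unique maximizer. \emph{Step 4 (conclusion).} Therefore \( \norm{w_0^+ - \wopt}_2 = \abs{w_-} = R \geq R/4 \). Taking \( a \) large with \( \alpha,c \) fixed additionally shows \( R \approx a \) can be made arbitrarily large while \( \norm{w_0 - \wopt}_2 = \sqrt{2c/\alpha} \) stays fixed, so teleportation reaches the diameter even from an initialization that is close to \( \wopt \); this is precisely what forces the dependence on \( R \) in \cref{prop:convex-teleport}.

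The main obstacle is the tension between \( L \)-smoothness and the gradient comparison in Step 3: making the wall steeper increases the jump, but \( L \)-smoothness caps the wall curvature at \( L \), which is exactly what bounds \( \abs{f'(w_-)} \le \sqrt{2Lc} \); one must simultaneously keep the shallow side far (small \( \alpha \)) yet strictly gentler in gradient, and verify the far endpoint is both the diameter-achieving point and the unique teleportation solution. A secondary obstacle is the smoothing step, since replacing the kinks by a \( C^2 \) join perturbs both \( R \) and the two boundary gradients; spending this slack rather than chasing the optimal constant is what yields the stated factor \( R/4 \) instead of \( R \).
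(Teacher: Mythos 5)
Your proof is correct, and it takes a genuinely different, more elementary route than the paper. The paper works in two dimensions: it builds \( f_{(\epsilon,\alpha)}(x,y) = g_1(x) + g_\epsilon(y) + \tfrac{1}{2}\mathbbm{1}_{y\geq\alpha}(y-\alpha)^2 \) from Huber functions, argues that any maximizer of the gradient norm on the level set must satisfy \( y^+ \geq \alpha \) (hence \( \norm{w_0^+ - \wopt}_2 \geq \alpha \)), and then separately upper-bounds the radius by \( R < 4\alpha \); the constant \( 1/4 \) is purely an artifact of that upper bound. Your one-dimensional construction --- curvature-\( \alpha \) quadratic on the right, a flat valley \( [-a,0] \) of minimizers, and a curvature-\( L \) wall on the far left --- accomplishes the same thing by direct computation: with \( a \geq \sqrt{2c/\alpha} \) the farthest point of \( \calS_0 = [w_-, w_+] \) from \( \wopt = 0 \) is the wall endpoint \( w_- \), and since \( \sqrt{2Lc} > \sqrt{2\alpha c} \) the gradient norm is uniquely maximized there (for both the level-set and sub-level-set versions of teleportation, since the level set is exactly \( \cbr{w_-, w_+} \)), so \( \norm{w_0^+ - \wopt}_2 = R \), strictly stronger than the claimed \( R/4 \). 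Note also that your mollification remark is unnecessary: the proposition asks only for \( L \)-smoothness and convexity, and your \( f \) is \( C^1 \) with a monotone, \( L \)-Lipschitz gradient --- the same regularity as the paper's own Huber-based example, which is likewise not \( C^2 \). What each approach buys: yours is shorter, achieves the optimal constant \( 1 \), and makes transparent that \( R \) can be made arbitrarily large while \( \norm{w_0 - \wopt}_2 \) stays fixed, which is precisely the point of contrast with \cref{prop:convex-teleport}; the paper's example has a unique minimizer and a genuinely two-dimensional sub-level set, at the cost of a longer case analysis and the weaker constant. The only structural difference worth flagging is that your optimal set \( \calW^* = [-a,0] \) is an interval rather than a singleton, but the paper's framework explicitly accommodates this through \( \wopt = \Pi_{\calW^*}(w_0) \), so nothing is lost.
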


\subsection{Convergence under Hessian Stability}\label{sec:stability}

So far we have shown that teleportation does not improve the convergence order
of GD in the general convex setting \emph{and} may blow-up the initial
distance to \( \wopt \).
However, in the special case where the Hessian is well-behaved, we show that
teleportation leads to strictly faster optimization.
We quantify the behavior of the Hessian using the notion of stability
\citep{bach2010self, karimireddy2019newton, gower2019newton}.

\begin{definition}\label{def:hessian-stability}
    We say \( f \) is \( (\tL, \tmu) \) Hessian stable
    over \( \calQ \) if \( \forall \, x, y \in \calQ \),\,
    \( \nabla^2 f(x) (y - x) \neq 0 \) and,
    \begin{align}
        f(y)
         & \leq f(x) + \abr{\grad(x), y \!-\! x}
        + \frac{\tL}{2} \norm{y \!-\! x}^2_{\nabla^2 f(x)},%
        \label{eq:stability-upper-bound}         \\
        f(y)
         & \geq f(x) + \abr{\grad(x), y \!-\! x}
        + \frac{\tmu}{2} \norm{y \!-\! x}^2_{\nabla^2 f(x)}.%
        \label{eq:stability-lower-bound}
    \end{align}
\end{definition}

Hessian stability holds for many problems, including logistic
regression \citep{bach2010self}.
Choosing \( \calQ = \calS_0\) and using \cref{lemma:diameter-bound}, shows
that Hessian stability holds at every pair \( (\wkk, \wk^+) \).
As a result, \( \nabla^2 f(\wk^+) \grad(\wk^+) \neq 0 \) and
\cref{eq:newton-direction} implies the Newton and gradient directions are
collinear.
In particular, this means the first step of GD after teleporting obtains a
linear rate.
\begin{restatable}{lemma}{stabilityTeleportProgress}\label{lemma:stability-teleport-progress}
    Suppose \( f \) is \( L \)-smooth, convex, and satisfies Hessian stability on
    \( \calS_0 \).
    Then the first step of GD after teleporting with step-size
    \( \eta < 2 / L \tL \) makes linear progress as
    \begin{equation}
        \label{eq:stability-teleport-progress}
        \delta_{k+1}
        \leq \rbr{1 + \tmu \lambda_k \eta
            \rbr{\eta \lambda_k \tL - 2}} \delta_k.
    \end{equation}
\end{restatable}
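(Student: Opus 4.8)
The plan is to chain the upper and lower inequalities furnished by Hessian stability around the teleported point \( \wk^+ \), exploiting the fact that \( \grad(\wk^+) \) is an eigenvector of the Hessian. Write \( g = \grad(\wk^+) \) and \( H = \nabla^2 f(\wk^+) \) for brevity. The first thing I would record is that Hessian stability on \( \calS_0 \) forces \( H g = \lambda_k g \) with \( \lambda_k > 0 \): since \( \nabla^2 f(x)(y - x) \neq 0 \) for all \( x, y \in \calS_0 \) and \( \wkk - \wk^+ = -\eta g \) with \( \wkk, \wk^+ \in \calS_0 \) by \cref{lemma:diameter-bound}, we get \( H g \neq 0 \), so the KKT identity \eqref{eq:kkt-conditions} makes \( \grad(\wk^+) \) an eigenvector of \( H \) with strictly positive eigenvalue \( \lambda_k \). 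Complementary slackness \( \lambda_k(f(\wk^+) - f(\wk)) = 0 \) then gives \( f(\wk^+) = f(\wk) \), so \( \delta_k = f(\wk^+) - f(\wopt) \) and I may work with \( \wk^+ \) in place of \( \wk \) throughout.

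Next I would produce the one-step decrease. Applying the stability upper bound \eqref{eq:stability-upper-bound} at \( x = \wk^+ \) and \( y = \wkk = \wk^+ - \eta g \), and using \( \norm{\wkk - \wk^+}^2_{H} = \eta^2 \abr{g, H g} = \eta^2 \lambda_k \norm{g}^2 \) together with \( \abr{g, \wkk - \wk^+} = -\eta \norm{g}^2 \), yields
\[
    f(\wkk) \leq f(\wk^+) - \eta \norm{g}^2 \rbr{1 - \frac{\eta \lambda_k \tL}{2}}.
\]
For this to be useful I need the parenthesized factor to be positive; since \( 0 < \lambda_k \leq L \) by \( L \)-smoothness and \( \eta < 2/(L\tL) \), indeed \( \frac{\eta \lambda_k \tL}{2} \leq \frac{\eta L \tL}{2} < 1 \).

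The heart of the argument is a PL-type inequality derived from the stability lower bound \eqref{eq:stability-lower-bound}. I would view its right-hand side at \( x = \wk^+ \) as a quadratic minorant \( m(y) = f(\wk^+) + \abr{g, y - \wk^+} + \frac{\tmu}{2}\norm{y - \wk^+}^2_H \) of \( f \), so that \( f(\wopt) \geq \min_y m(y) \). Because \( g \) is an eigenvector of \( H \) with nonzero eigenvalue, it lies in the range of \( H \) (hence orthogonal to \( \ker H \), which is what keeps \( m \) bounded below), and \( H^\dagger g = \lambda_k^{-1} g \); the minimizer is \( y = \wk^+ - \tmu^{-1} H^\dagger g \), and substituting gives \( \min_y m(y) = f(\wk^+) - \frac{1}{2 \tmu \lambda_k} \norm{g}^2 \), i.e. \( \norm{g}^2 \geq 2 \tmu \lambda_k \, \delta_k \). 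Inserting this into the one-step decrease (legitimate since its coefficient is positive) and subtracting \( f(\wopt) \) gives
\[
    \delta_{k+1} \leq \sbr{1 - 2 \eta \tmu \lambda_k \rbr{1 - \frac{\eta \lambda_k \tL}{2}}} \delta_k,
\]
which is exactly \eqref{eq:stability-teleport-progress} after regrouping the sign.

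I expect the main obstacle to be the lower-bound step: correctly handling the possibly singular positive-semidefinite Hessian when minimizing the quadratic minorant, and justifying that \( g \in \mathrm{range}(H) \) so that both the pseudo-inverse identity \( H^\dagger g = \lambda_k^{-1} g \) and the eigenvalue substitution \( \abr{g, H g} = \lambda_k \norm{g}^2 \) are valid and the minorant is not unbounded below along \( \ker H \). Everything else reduces to routine algebra once \( \lambda_k > 0 \) and \( f(\wk^+) = f(\wk) \) are established.
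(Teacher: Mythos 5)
Your proposal is correct and takes essentially the same route as the paper's proof: both apply the stability upper bound along the post-teleportation gradient step, use the KKT identity \( \nabla^2 f(\wk^+)\grad(\wk^+) = \lambda_k \grad(\wk^+) \) to evaluate the Hessian-weighted terms, and close with the PL-type inequality obtained by minimizing the stability lower bound. The only differences are matters of self-containedness: you derive that PL-type inequality inline (carefully handling a possibly singular Hessian via the pseudo-inverse), where the paper instead invokes \cref{lemma:stability-bounds}, and you make explicit, via complementary slackness, the fact that \( f(\wk^+) = f(\wk) \), which the paper's chain of inequalities uses implicitly.
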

Choosing the ideal step-size \( \eta = 1/(\lambda_k \tL) \) gives \( \delta_{k+1}
\leq (1- \tmu / \tL) \delta_k \), but in general we cannot know \( \lambda_k
\).
One way around this difficulty is to use the Armijo line-search
\citep{armijo1966minimization} after teleporting,
\begin{equation}
    \label{eq:armijo-ls}
    f(\wkk) \leq f(\wkk^+) - \frac{\eta}{2} \norm{\grad(\wk^+)}_2^2.
\end{equation}
Choosing \( \eta \) to be the largest step-size satisfying \cref{eq:armijo-ls}
gives the following progress condition:
\begin{restatable}{lemma}{stabilityTeleportProgressLS}\label{lemma:stability-teleport-progress-ls}
    Suppose \( f \) is \( L \)-smooth, convex, and satisfies Hessian stability on
    \( \calS_0 \).
    Then the first step of GD with Armijo line-search after teleporting
    satisfies,
    \begin{equation}
        \label{eq:stability-teleport-progress-ls}
        \delta_{k+1} \leq
        \rbr{1 - \tmu / \tL}
        \delta_k.
    \end{equation}
\end{restatable}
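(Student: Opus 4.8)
The plan is to prove that the Armijo line-search automatically selects a step-size at least as large as the ideal $\eta = 1/(\lambda_k\tL)$ appearing in \cref{lemma:stability-teleport-progress}, and then to turn the one-step Armijo decrease into the claimed contraction via a gradient-domination inequality extracted from the stability lower bound. The whole argument hinges on the observation, already recorded in \eqref{eq:kkt-conditions}--\eqref{eq:newton-direction}, that after teleporting $\nabla^2 f(\wk^+)\grad(\wk^+) = \lambda_k \grad(\wk^+)$; that is, $\grad(\wk^+)$ is an eigenvector of the Hessian with eigenvalue $\lambda_k$. Here $\lambda_k > 0$: convexity gives $\lambda_k \geq 0$, and Hessian stability on $\calS_0$ (via \cref{lemma:diameter-bound}) guarantees $\nabla^2 f(\wk^+)\grad(\wk^+) \neq 0$ with $\grad(\wk^+) \neq 0$, ruling out $\lambda_k = 0$. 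This eigenvector structure is precisely what makes both the Hessian-weighted norm and its pseudo-inverse-weighted counterpart collapse into scalar multiples of $\norm{\grad(\wk^+)}_2^2$, so that the unknown $\lambda_k$ ultimately cancels.

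First I would lower-bound the step-size returned by the line-search. Applying the stability upper bound \eqref{eq:stability-upper-bound} with $x = \wk^+$ and $y = \wk^+ - \eta\grad(\wk^+)$ and using $\norm{\grad(\wk^+)}^2_{\nabla^2 f(\wk^+)} = \lambda_k\norm{\grad(\wk^+)}_2^2$ gives \[ f\rbr{\wk^+ - \eta\grad(\wk^+)} \leq f(\wk^+) - \eta\rbr{1 - \frac{\eta\lambda_k\tL}{2}}\norm{\grad(\wk^+)}_2^2. \] Comparing with the Armijo criterion \eqref{eq:armijo-ls}, the right-hand side sits below $f(\wk^+) - \frac{\eta}{2}\norm{\grad(\wk^+)}_2^2$ whenever $1 - \eta\lambda_k\tL/2 \geq \half$, i.e. for every $\eta \leq 1/(\lambda_k\tL)$. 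Since the line-search returns the largest admissible step-size, this shows $\eta \geq 1/(\lambda_k\tL)$.

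Next I would extract a PL-type bound from the stability lower bound \eqref{eq:stability-lower-bound}. Minimizing its right-hand side over $y$ (the minimizer lies in the range of $\nabla^2 f(\wk^+)$, where $\grad(\wk^+)$ also lives) yields $f(\wk^+) - f(\wopt) \leq \frac{1}{2\tmu}\norm{\grad(\wk^+)}^2_{(\nabla^2 f(\wk^+))^{\dagger}}$, and the eigenvector property $(\nabla^2 f(\wk^+))^{\dagger}\grad(\wk^+) = \lambda_k^{-1}\grad(\wk^+)$ turns this into $\norm{\grad(\wk^+)}_2^2 \geq 2\tmu\lambda_k\rbr{f(\wk^+) - f(\wopt)}$. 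Combining with the Armijo decrease $\delta_{k+1} \leq \rbr{f(\wk^+) - f(\wopt)} - \frac{\eta}{2}\norm{\grad(\wk^+)}_2^2$ gives $\delta_{k+1} \leq \rbr{1 - \eta\tmu\lambda_k}\rbr{f(\wk^+) - f(\wopt)}$, and substituting $\eta \geq 1/(\lambda_k\tL)$ collapses the factor to $1 - \tmu/\tL$. Since teleportation only decreases the objective, $f(\wk^+) - f(\wopt) \leq \delta_k$, which finishes the proof.

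I expect the main obstacle to be the first step---certifying that the line-search provably accepts some $\eta \geq 1/(\lambda_k\tL)$---since everything downstream is routine once it is in place: the same $\lambda_k$ that controls the curvature (hence the admissible step-size) reappears in the gradient-domination bound, so the product $\eta\lambda_k$ erases $\lambda_k$ entirely. Two minor points still need checking: that $\tmu \leq \tL$, so the contraction factor $1 - \tmu/\tL$ is nonnegative and the final relaxation $f(\wk^+) - f(\wopt) \leq \delta_k$ points the right way; and that the quadratic lower bound actually attains its minimum rather than diverging, which holds because $\grad(\wk^+)$ lies in the range of $\nabla^2 f(\wk^+)$. Both follow directly from \cref{def:hessian-stability} and \cref{lemma:diameter-bound}.
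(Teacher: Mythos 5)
Your proposal is correct and takes essentially the same route as the paper's proof: both arguments lower-bound the Armijo step-size by \( 1/(\lambda_k \tL) \) using the stability upper bound together with the eigenvector identity \( \nabla^2 f(\wk^+)\grad(\wk^+) = \lambda_k \grad(\wk^+) \), and then convert the resulting decrease into the contraction via the PL-type consequence of the stability lower bound (the paper's \cref{lemma:stability-bounds}), finishing with \( f(\wk^+) \leq f(\wk) \). The only differences are cosmetic---you collapse the Hessian-weighted norms into Euclidean norms via the eigenvalue and re-derive the gradient-domination inequality that the paper cites from \citet{karimireddy2019newton}, while also spelling out the edge-case checks (\( \lambda_k > 0 \), \( \tmu \leq \tL \)) that the paper leaves implicit.
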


\begin{figure*}[t]
    \centering
    \begin{tikzpicture}[scale=1,
    ]
    \begin{axis}[width=0.95\linewidth,
        height=0.33\linewidth,
        axis lines=none,  % don't print axis lines
        yticklabels={,,}, xticklabels={,,},
        ymin=-2, ymax=2.1,
        xmin=-4, xmax=4,
        view={0}-{90},
        ]

        \addplot3[
            domain=-4:4,
            domain y=-4:4,
            contour gnuplot={
                    labels=false,
                    levels={1, 4, 10},
                    draw color=black,
                },
            line width=2pt,
            samples=100,
        ] {x^2 + 4*y^2 };

        \addplot[
            domain=-4:4,
            samples=10,
            line width=2pt,
            draw=red,
        ] {-x/2 + 2/sqrt(2)};

        \addplot[
            ->,
            domain=1.414:1.545,
            samples=10,
            line width=2pt,
            draw=blue,
        ] {8*x - 15/sqrt(2)};

        \addplot[
            domain=-4:4,
            samples=10,
            line width=2pt,
            draw=purple,
        ] {1.0227 + (4 - 2*0.783*(x-0.783) - 0.783^2 - 4*1.0227^2)/(8*1.0227)};

        \node[
        star,
        fill=black,
        inner sep=0pt,
        minimum size=6pt,
        label={[label distance=-0.4mm]3:{$\wopt$}}
        ] (opt) at (axis cs:0,0) {};

        \node[
        label={[label distance=1.9mm]90:{$F(\w)$}}
        ] at (axis cs:1,-0.5) {};

        \node[
        circle,
        fill=black,
        inner sep=0pt,
        minimum size=6pt,
        label={[label distance=-1mm]90:{$x_0$}}
        ] (a) at (axis cs:1.414,0.707) {};

        \node[
        circle,
        fill=black,
        inner sep=0pt,
        minimum size=6pt,
        label={[label distance=1.3mm]180:{$x_0^+$}}
        ] (b) at (axis cs:1.55,1.79) {};

        \node[
        circle,
        fill=black,
        inner sep=0pt,
        minimum size=6pt,
        label={[label distance=0mm]90:{$x_1$}}
        ] (c) at (axis cs:0.783,1.0227) {};

        \draw[
            ->,
            draw=orange,
            line width=2pt,
        ] (b) -- (c);

        \node[
        label={[label distance=-1mm, text=red]90:{$l_0$}}
        ] at (axis cs:2.5,0.164) {};

        \node[
        label={[label distance=-0.8mm, text=purple]90:{$l_1$}}
        ] at (axis cs:-1.35,1.4) {};

        \draw[
        draw=none,
        ] (a) -- (b)
        node[pos=0.9, right, text=blue,
        label={[label distance=-0.2mm, text=blue, ]0:{$\nabla^2 F(x_0) \nabla F(x_0)$}}
        ]{};

        \draw[
        draw=none,
        ] (b) -- (c)
        node[pos=0.80, right,
        label={[label distance=0.2mm, text=orange, ]0:{$v_{0}$}}
        ]{};

        %\draw[
        %    ->, 
        %    draw=BurntOrange,
        %    line width=2pt,
        %    ] (axis cs:0,1) -- (axis cs:0, 0.04) node [text=BurntOrange, left, pos=0.35, distance=-2mm] {$\nabla F(x^*)$};

        %\node[
        %    star,
        %    fill=orange,
        %    inner sep=0pt,
        %    minimum size=10pt,
        %    label={[label distance=-3mm, text=orange]70:{$x^*$}}
        %    ] (c) at (axis cs:0,1) {};

        %\draw [decorate, line width = 0.6mm,
        %    decoration = {calligraphic brace}] (axis cs:2.1,0.707) --  (axis cs:2.25,1.79)
        %node[midway,right]{$v_\varepsilon(a)$};
    \end{axis}
\end{tikzpicture}%
    \caption{
        One iteration of our method for solving teleportation problems
        on a convex quadratic.
        The algorithm combines gradient ascent with projections onto
        the linearization
        \( l_t = \cbr{w : f(\xk) + \abr{\grad(\xk), w - \xk} = f(\wk)} \).
    }%
    \vspace{-2ex}%
    \label{fig:update}
\end{figure*}

Converting Lemmas~\ref{lemma:stability-teleport-progress}
and~\ref{lemma:stability-teleport-progress-ls} into a fast convergence rate
requires merging the linear progress from steps after teleporting with
the sub-linear progress from standard GD.
Our proof technique unrolls the linear rate across adjacent iterations
with teleportation and then bounds \( \delta_k \) using the descent lemma to
yield a combined convergence rate.
In what follows, we assume the step-sizes are selected by line-search; see
\cref{thm:stability-teleport} for an alternative result using a fixed
step-size.
\begin{restatable}{theorem}{stabilityTeleportLS}\label{thm:stability-teleport-ls}
    Suppose \( f \) is \( L \)-smooth, convex, and satisfies Hessian stability on
    \( \calS_0 \).
    Consider any tele-schedule \( \calT \) and let \( M = K - \abs{\calT} \) be
    the number of steps without teleportation.
    Then GD with step-size chosen by Armijo line-search converges as,
    \begin{equation}
        \label{eq:stability-teleport-ls}
        \delta_K \leq
        \frac{2 R^2 L}
        {M + 2 R^2 L \sum_{k \in \calB}
            \sbr{\rbr{\frac{\tL}{\tL - \tmu}}^{b_k} - 1} \inv{\delta_{k-1}}}.
    \end{equation}
\end{restatable}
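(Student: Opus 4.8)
The plan is to track the reciprocal optimality gap $u_j := 1/\delta_j$ and to show that it increases additively on standard gradient steps and multiplicatively on teleportation steps; I would then telescope these increments, aggregate them block-by-block, and invert.

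First I would establish the two per-step recursions. On a non-teleportation step ($j \notin \calT$, so $w_j^+ = w_j$), $L$-smoothness makes the Armijo condition \eqref{eq:armijo-ls} hold for every $\eta \leq 1/L$ (the descent lemma gives $\eta(1 - \eta L/2) \geq \eta/2$ exactly when $\eta \leq 1/L$), so the line-search returns $\eta_j \geq 1/L$ and hence $\delta_{j+1} \leq \delta_j - \tfrac{1}{2L}\norm{\grad(w_j)}_2^2$. Since Armijo forces $f(w_{j+1}) \leq f(w_j^+) \leq f(w_j)$, every iterate stays in $\calS_0$ and $\norm{w_j - \wopt}_2 \leq R$ (cf. \cref{lemma:diameter-bound}); convexity then gives $\delta_j \leq \norm{\grad(w_j)}_2\, R$. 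Combining, $\delta_{j+1} \leq \delta_j - c\,\delta_j^2$ with $c = 1/(2LR^2)$, and dividing through by $\delta_j \delta_{j+1}$ (using $\delta_{j+1} \leq \delta_j$) converts this into $u_{j+1} \geq u_j + c$. On a teleportation step ($j \in \calT$), \cref{lemma:stability-teleport-progress-ls} gives $\delta_{j+1} \leq (1 - \tmu/\tL)\delta_j$, i.e. $u_{j+1} \geq \rho\, u_j$ with $\rho := \tL/(\tL - \tmu) > 1$.

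Next I would telescope $u_K - u_0 = \sum_{j=0}^{K-1}(u_{j+1} - u_j)$ and discard the nonnegative $u_0$. Using the two recursions and the fact that exactly $M = K - \abs{\calT}$ steps are non-teleportation steps,
\[
    u_K \geq c\,M + \sum_{j \in \calT}(\rho - 1)\,u_j .
\]
The crux is to collapse the teleportation sum block-by-block. Fix a block $k \in \calB$ of length $b_k$, occupying indices $k, \ldots, k+b_k-1$, all in $\calT$; iterating the multiplicative increment gives $u_{k+i} \geq \rho^i u_k$ for $0 \leq i \leq b_k - 1$, so the finite geometric series yields
\[
    \sum_{i=0}^{b_k-1}(\rho - 1)\,u_{k+i} \geq (\rho - 1)\,u_k \sum_{i=0}^{b_k-1}\rho^i = \rbr{\rho^{b_k} - 1}\,u_k .
\]
Because blocks do not overlap and $0 \notin \calT$, the index $k-1 \geq 0$ is a non-teleportation step, so $u_k \geq u_{k-1} + c \geq u_{k-1}$; replacing $u_k$ by $u_{k-1} = 1/\delta_{k-1}$ and summing over $k \in \calB$ gives
\[
    u_K \geq c\,M + \sum_{k \in \calB}\rbr{\rho^{b_k} - 1}\frac{1}{\delta_{k-1}} .
\]
Substituting $c = 1/(2R^2 L)$ and $\rho = \tL/(\tL - \tmu)$ and inverting reproduces \eqref{eq:stability-teleport-ls}.

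I expect the main obstacle to be the block-aggregation step: one must confirm that \cref{lemma:stability-teleport-progress-ls} genuinely applies at every index inside a block (each is a single GD step immediately following a teleportation, hence the per-step factor $\rho$ is uniform across the block), manage the geometric-series bookkeeping so the per-step multiplicative gains telescope into the per-block factor $\rho^{b_k}-1$, and correctly handle the index shift $u_k \to u_{k-1}$ — which is precisely what the hypothesis $0 \notin \calT$ guarantees is well-defined. By contrast, establishing $\eta_j \geq 1/L$, the reciprocal-gap trick, and keeping the iterates inside $\calS_0$ are all routine once the per-step recursions are in place.
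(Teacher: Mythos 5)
Your proof is correct and follows essentially the same route as the paper's: both rest on \cref{lemma:stability-teleport-progress-ls} for the per-step contraction after teleporting, the bound \( \eta \geq 1/L \) extracted from the Armijo condition, the estimate \( \delta_j \leq R \norm{\grad(w_j)}_2 \) from convexity and the diameter bound, and a telescoping of reciprocal gaps anchored at the GD step preceding each block --- which is exactly where \( 0 \not\in \calT \) enters, just as in the paper. The only difference is bookkeeping: you pass to reciprocals step-by-step and collapse each block with a geometric series, whereas the paper unrolls the contraction \( \delta_{k+b_k} \leq (1 - \tmu/\tL)^{b_k} \delta_k \) onto \( \delta_{k-1} \) and divides once per block; the resulting per-block inequalities are algebraically equivalent.
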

%See \cref{app:additional-experiments} for discussion and experiments regarding
%the requirement that the first iteration is not a teleportation step.
Assuming we teleport every-other iteration (\( b_k = 1 \)) allows us to
specialize \cref{thm:stability-teleport-ls} to obtain
\begin{equation}
    \delta_K \leq
    \frac{2 R^2 L}
    {K/2 +  \frac{2 R^2 L \tmu}{\tL - \tmu} \sum_{k \in \calT}
        \inv{\delta_{k-1}}},
\end{equation}
which shows teleporting at iteration \( k \) leads to a strictly faster rate
than standard GD if \( \delta_{k-1} \leq (2 R^2 L \tmu)/(\tL - \tmu) \).
That is, teleportation is effective when the optimality gap is small.
This reflects the typical ordering between linear and sub-linear convergence,
where the sub-linear rate is faster than the linear rate for a finite number of
initial iterations \citep{bach2024scaling}.
At the cost of hiding that relationship, weighted telescoping gives a fully
explicit rate for GD with teleportation.

\begin{restatable}{theorem}{stabilityCombinedLS}\label{thm:stability-combined-ls}
    Suppose \( f \) is \( L \)-smooth, convex, and satisfies Hessian stability on
    \( \calS_0 \).
    Then GD with line-search and any tele-schedule \( \calT \)
    converges as,
    \begin{equation}
        \label{eq:combined-rate-ls}
        \delta_K
        %\leq \frac{2 R^2 L}{\sum_{k \not \in \calT}
        %    \rbr{\frac{\tL}{\tL - \tmu}}^{n_k}},
        \leq 2 R^2 L \bigg[\sum_{k \not \in \calT}
            \rbr{\frac{\tL}{\tL - \tmu}}^{n_k}\bigg]^{-1},
    \end{equation}
    where \( n_k =  |\cbr{ i \in \calT : i > k}| \) is the number
    teleportation steps after iteration \( k \).
\end{restatable}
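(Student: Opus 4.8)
The plan is to track the reciprocal sub-optimality $u_k := 1/\delta_k$ and to show that a standard GD step increments $u_k$ additively while a teleportation step scales it multiplicatively; a weighted-telescoping argument then assembles these into the claimed bound. First I would establish the two one-step recursions. For a non-teleportation index $k \not\in \calT$ we have $w_k^+ = \wk$, so the Armijo condition \eqref{eq:armijo-ls} together with the descent lemma forces the accepted step-size to satisfy $\eta_k \geq 1/L$, giving $\delta_{k+1} \leq \delta_k - \tfrac{1}{2L}\norm{\grad(\wk)}_2^2$. Combining this with the convexity inequality $\delta_k \leq \abr{\grad(\wk), \wk - \wopt} \leq R\,\norm{\grad(\wk)}_2$ --- which is legitimate because \cref{lemma:diameter-bound} guarantees $\norm{\wk - \wopt}_2 \leq R$ --- yields $\delta_{k+1} \leq \delta_k - \delta_k^2/(2R^2 L)$. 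Dividing by $\delta_k \delta_{k+1}$ and using $\delta_{k+1} \leq \delta_k$ (again \cref{lemma:diameter-bound}) turns this into the additive recursion $u_{k+1} \geq u_k + c$ with $c = 1/(2R^2 L)$. For a teleportation index $k \in \calT$, \cref{lemma:stability-teleport-progress-ls} gives $\delta_{k+1} \leq (1 - \tmu/\tL)\delta_k$, i.e. the multiplicative recursion $u_{k+1} \geq r\, u_k$ with $r = \tL/(\tL - \tmu) > 1$.

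Next I would unroll these interleaved updates. The key observation is that an additive increment $+c$ contributed at a non-teleportation step $k$ is subsequently amplified by a factor $r$ each time a later teleportation step is applied; hence by the time we reach $u_K$ this increment has become $c\, r^{n_k}$, where $n_k = |\{i \in \calT : i > k\}|$ counts the teleportation steps after $k$. I would make this precise by a forward induction on $j$ showing $u_j \geq S_j$, where
\begin{equation*}
    S_j := c \sum_{\substack{0 \leq k < j \\ k \not\in \calT}} r^{|\{i \in \calT \,:\, k < i < j\}|} + u_0\, r^{|\{i \in \calT \,:\, i < j\}|}.
\end{equation*}
The base case $j = 0$ is the identity $u_0 \geq u_0$. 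In the inductive step, a non-teleportation index $j$ adds a fresh $k = j$ term equal to $c$ and leaves the existing exponents unchanged (so $S_{j+1} = S_j + c$), while a teleportation index $j$ multiplies every existing term --- including the $u_0$ term --- by $r$ (so $S_{j+1} = r\, S_j$); these match exactly the two one-step recursions above. Taking $j = K$ and discarding the nonnegative term $u_0\, r^{|\calT|}$ gives $u_K \geq c \sum_{k \not\in \calT} r^{n_k}$, and inverting yields \eqref{eq:combined-rate-ls}. The hypothesis $0 \not\in \calT$ guarantees the recursion opens with an additive (descent-lemma) step, matching the companion result \cref{thm:stability-teleport-ls}.

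The main obstacle I anticipate is the bookkeeping in the induction: one must verify that the exponent $|\{i \in \calT : k < i < j\}|$ increases by exactly one for every surviving term precisely when a teleportation step is processed, and that the newly created non-teleportation term enters with exponent zero. Getting these indices to line up so that the telescoped quantity collapses to $\sum_{k \not\in \calT} r^{n_k}$ (rather than an off-by-one variant) is where the argument is delicate; everything else is the routine pairing of the descent lemma with convexity and an application of \cref{lemma:stability-teleport-progress-ls}. Alternatively, the same bound can be extracted from \cref{thm:stability-teleport-ls} by substituting the per-block estimates $1/\delta_{k-1}$ back into the denominator, but the direct reciprocal-telescoping route above is cleaner and sidesteps the explicit block lengths $b_k$.
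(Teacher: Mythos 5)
Your proposal is correct, and it reaches the bound by a genuinely different (and cleaner) assembly than the paper's, even though it uses the same three ingredients: Armijo plus the descent lemma forcing \( \eta \geq 1/L \), convexity with Cauchy--Schwarz over the diameter \( R \), and \cref{lemma:stability-teleport-progress-ls} for the post-teleport contraction. The paper never isolates your two one-step recursions on \( u_k = 1/\delta_k \); instead it works \emph{per block}, reusing the recursion \eqref{eq:stability-ls-recursion} from \cref{thm:stability-teleport-ls}, which couples the GD step at \( k-1 \) with the entire following teleportation block of length \( b_k \), and then performs a weighted telescoping with recursively defined weights \( \theta_{k+b_k} = \theta_{k-1}/\zeta_{k-1} \). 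Your forward induction with the potential \( S_j \) is mathematically the same mechanism (the weights \( \theta_k/\theta_K \) are exactly your amplification factors \( r^{n_k} \)), but the per-step decomposition buys you two things. First, it eliminates the block bookkeeping that makes the paper's proof delicate --- the case analysis on whether \( k + b_k + 1 \in \calB \), the boundary conditions tying each block to the preceding GD step, and the attendant notational hazards (the paper's own proof wobbles between \( \zeta \) and its reciprocal when computing \( \theta_k \)). Second, your sum runs over \( k \in \{0, \ldots, K-1\} \setminus \calT \) and therefore includes the term \( r^{n_0} = r^{|\calT|} \), whereas the paper's telescoping produces the sum over \( k \in \{1, \ldots, K\} \setminus \calT \), whose extra term is only \( r^{n_K} = 1 \); since \( r > 1 \), your denominator is at least as large, so your bound is marginally stronger and matches the statement's indexing directly (e.g., in the alternating schedule of \cref{cor:alternating-steps-ls} it is stronger by a factor of \( r \)).

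One small citation repair: you invoke \cref{lemma:diameter-bound} to justify \( \norm{\wk - \wopt}_2 \leq R \), but that lemma is stated for fixed step-sizes \( \etak < 2/L \), not for line-search. The fix is immediate and is how the paper itself argues in \cref{thm:stability-teleport-ls}: satisfaction of the Armijo criterion \eqref{eq:armijo-ls} directly gives \( f(\wkk) \leq f(\wk^+) \leq f(\wk) \), so the iterates remain in \( \calS_0 \) and the diameter bound (and Hessian stability) applies at every iterate. This is a presentational nit, not a gap.
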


Compared to \cref{thm:stability-teleport-ls}, this rate does not depend on
the optimality gaps for progress and is an explicit convergence guarantee.
As a price, the progress no longer decomposes into a sum over teleportation
steps and a sum over regular GD steps.
Despite this downside, it is still useful to interpret the rate
in the special case of teleporting every-other iteration, for which we obtain,
\begin{equation}\label{eq:alternating-linear-rate}
    \delta_K
    \leq
    2 R^2 L \frac{(\tL - \tmu)}{\tmu}\sbr{\rbr{\frac{\tL}{\tL - \tmu}}^{K/2} - 1}^{-1}.
\end{equation}
This surprising result shows that teleportation allows GD to obtain a linear
rate without strong convexity and implies teleportation is
particularly useful for computing high-accuracy solutions.
However, we expect \cref{eq:alternating-linear-rate} and
\cref{thm:stability-combined-ls} to be quite loose in general since they
strongly down-weight steps without teleportation, effectively ignoring the
progress made by standard GD steps.
See \cref{cor:alternating-steps-ls} for details and
\cref{thm:stability-teleport-combined} for fixed step-sizes.

\begin{remark}
    Theorems~\ref{thm:stability-teleport-ls}
    and~\ref{thm:stability-combined-ls} apply (with minor modifications to the
    proofs) to schemes which combine GD steps with intermittent standard Newton
    updates.
    As far as we know, these are the first analyses for alternating
    first-order/second-order methods which combine sub-linear and linear
    progress conditions to obtain a non-trivial convergence rate.
    In particular, using Newton steps at every iteration allows us to exactly
    match the rates for Newton's method \citep{karimireddy2019newton} or
    randomized subspace Newton \citep{gower2019newton} depending on the update
    scheme used.
\end{remark}

%We have shown that teleportation does not improve the convergence of
%gradient methods for functions with curvature.
%Now we consider non-strongly convex \( f \).
%In this case, sub-level set teleportation can arbitrarily increase the
%distance to the minimizer compared to the initial point.
%\begin{restatable}{proposition}{teleportDistances}\label{prop:teleport-distances}
%    For every \( C > 0 \), there exists a smooth and convex function
%    such that sub-level set teleportation satisfies
%    \[
%        \norm{\w^+ - \wopt}_2 \geq C \norm{\w - \wopt}_2.
%    \]
%\end{restatable}
%However, we can still prove a convergence rate by leveraging monotonicity
%of the function values at the iterates generated by GD, which is preserved
%by teleportation.

%Although \cref{prop:convex-teleport} is not a non-convergence result for
%GD with teleportation, it implies standard proofs for smooth, convex functions
%which start by expanding \( \norm{\wkk - \wopt}_2^2 \) are likely to fail. 

\section{EVALUATING THE TELEPORTATION OPERATOR}\label{sec:algorithm}

%!TEX root=../main.tex

\begin{figure*}[t]
    \centering
    \begin{minipage}{0.50\textwidth}
        \input{figures/teleport_algo}
    \end{minipage}%
    \begin{minipage}{0.50\textwidth}
        \begin{figure}[H]
            \centering
            \includegraphics[width=\textwidth]{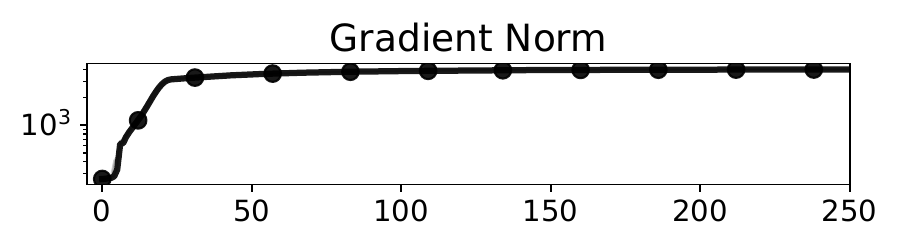}
            \includegraphics[width=\textwidth]{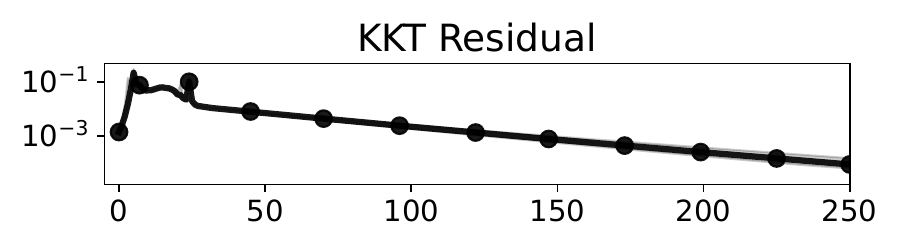}
            \includegraphics[width=\textwidth]{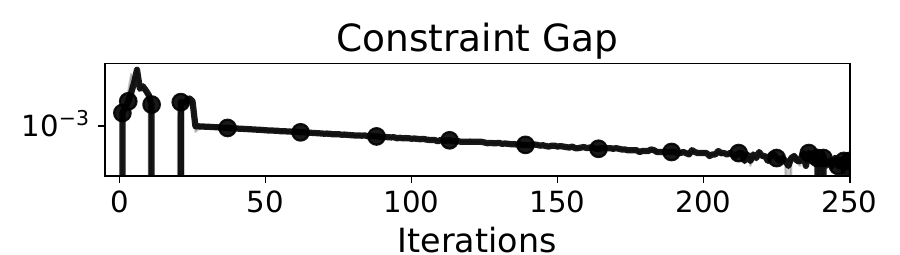}
        \end{figure}
    \end{minipage}
    \caption{
        \textbf{Left}: full algorithm for sub-level set teleportation.
        \textbf{Right}: sub-level set teleportation for a two-layer MLP with 50 hidden
        units on MNIST.
        Our algorithm gives an approximate KKT point where the gradient norm is
        two orders of magnitude larger than that at the standard
        initialization.
    }%
    \label{fig:tele-metrics}
    \vspace{-2ex}
\end{figure*}

The results in the previous section assume access to a
\emph{teleportation oracle}, meaning a function which can solve the general
non-linear programming problem needed for sub-level set teleportation.
Although we could apply standard SQP methods, computing the Hessian
of the teleportation objective requires third-order derivatives of \( f \),
which is not feasible for large-scale problems.
Instead, we develop an iterative, projected-gradient-type method that requires
only Hessian-vector products.
We denote by \( x_t \) the iterates of our method for solving teleportation
problems, with \( x_0 = w_k \).

For general \( f \), the sub-level set \( \calS_k \) is non-convex with no
closed-form projection.
In contrast, the linearization of this constraint around
\( \xk \) yields a half-space,
\[
    \tilde \calS_k(\xk) :=
    \cbr{w : f(\xk) + \abr{\grad(\xk), w - \xk} \leq f(\wk)},
\]
for which projections are simple.
To obtain a tractable algorithm, we maximize a penalized linearization of the
log-gradient-norm subject to this constraint,
\begin{equation}\label{eq:teleportation-sub-problem}
    \begin{aligned}
        \xkk
         & \!=\! \argmax_{x \in \tilde \calS_k(\xk)} \bigg\{\half \log(\norm{\grad(\xk)}_2^2)
        \!-\! \frac{1}{2 \rho_t} \norm{\x \!-\! \xk}_2^2                                             \\
         & \hspace{3em} + \abr{\frac{\nabla^2 f(\xk) \nabla f(\xk)}{\norm{\grad(\xk)}_2^2}, x - \xk}
        \bigg\}.
    \end{aligned}
\end{equation}
Taking the logarithm of the objective encodes positivity of the
gradient norm and leads to a normalized update rule.
Define \( q_t = \nabla^2 f(\xk) \grad(\xk) \) and
\( g_t = \norm{\grad(\xk)}_2^2 \).
\begin{restatable}{proposition}{updateRule}\label{prop:update-rule}
    \cref{eq:teleportation-sub-problem} is solved by,
    \begin{equation}\label{eq:update-rule}
        \begin{aligned}
             & \!v_t
            = \rbr{\rho_t \abr{\grad(\xk), q_t} \!/\! g_t
            \!+\! f(\xk) \!-\! f(\wk)}_+ \!\grad(\xk), \\
             & \!\xkk
            = \xk + \rbr{\rho_t q_t - v_t}/g_t.
        \end{aligned}
    \end{equation}
\end{restatable}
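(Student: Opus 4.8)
The plan is to solve \cref{eq:teleportation-sub-problem} by reducing it to a Euclidean projection onto a half-space, matching the ``gradient ascent followed by projection'' picture in \cref{fig:update}. First I would discard the term $\half \log(\norm{\grad(\xk)}_2^2)$, which depends only on $\xk$ and is therefore constant on the feasible set, so it cannot affect the maximizer. Writing $u = x - \xk$ and $c := f(\wk) - f(\xk)$, the remaining objective is the strictly concave quadratic
\begin{equation*}
    \Phi(u) = -\tfrac{1}{\rho_t}\norm{u}_2^2 + \abr{\frac{q_t}{\norm{\grad(\xk)}_2^2},\, u},
\end{equation*}
and the linearized constraint $\tilde\calS_k(\xk)$ becomes the half-space $\cbr{u : \abr{\grad(\xk),\, u} \leq c}$. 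Completing the square rewrites $\Phi$, up to an additive constant, as a negative multiple of $\norm{u - u_t^{\mathrm{ga}}}_2^2$, where $u_t^{\mathrm{ga}} = \rho_t q_t / \norm{\grad(\xk)}_2^2$ is the unconstrained (gradient-ascent) maximizer. Hence the sub-problem is exactly the projection of the point $\xk + u_t^{\mathrm{ga}}$ onto the half-space.

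Since $\Phi$ is strictly concave and the half-space is closed, nonempty, and convex, a unique maximizer exists and the KKT conditions are both necessary and sufficient. I would introduce a multiplier $\nu \geq 0$ for the constraint, take the stationarity condition $\nabla_u\Phi(u) = \nu\, \grad(\xk)$, and solve for $u$ in terms of $\nu$. Complementary slackness then splits into two cases. If the gradient-ascent point is already feasible the constraint is inactive, $\nu = 0$, and $\xkk = \xk + u_t^{\mathrm{ga}}$, i.e.\ $v_t = 0$. Otherwise the constraint is active, $\abr{\grad(\xk),\, u} = c$, and substituting the stationarity expression for $u$ gives a single linear equation whose solution is a strictly positive $\nu$; back-substituting yields the projected point. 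Recombining the two cases through the nonnegativity of $\nu$ produces precisely the positive part $\rbr{\cdot}_+$ appearing in $v_t$, and collecting terms gives the stated formula for $\xkk$.

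The argument involves no real obstacle: strict concavity plus a single affine constraint makes this a textbook projection problem, so existence, uniqueness, and KKT sufficiency are immediate. The only care required is bookkeeping -- tracking the scaling convention for $\rho_t$ when completing the square, and cleanly merging the active/inactive cases so that the threshold determining $\nu$ collapses into the single expression $\rbr{\rho_t \abr{\grad(\xk), q_t} + f(\xk) - f(\wk)}_+$. As a sanity check I would verify that in the active case the resulting $\xkk$ lies on the boundary hyperplane $\abr{\grad(\xk), \xkk - \xk} = c$ and that the displacement $\xkk - \xk$ is the gradient-ascent direction corrected by a multiple of $\grad(\xk)$, consistent with the orthogonal-projection geometry.
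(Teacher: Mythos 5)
Your proposal is correct and follows essentially the same approach as the paper's proof: the paper likewise first computes the unconstrained maximizer $\bar{x}$ of the sub-problem, splits into cases according to whether $\bar{x}$ lies in $\tilde{\calS}_k(\xk)$, solves the infeasible case by projecting $\bar{x}$ onto the bounding hyperplane (via an explicit Lagrangian-dual computation, which for this isotropic quadratic is equivalent to your stationarity-plus-complementary-slackness argument), and merges the two cases into the single positive-part formula. The only caveat --- shared by the paper's own proof --- is a factor of two: with the penalty $\frac{1}{\rho_t}\norm{\x - \xk}_2^2$ as written in \cref{eq:teleportation-sub-problem}, completing the square places the unconstrained maximizer at $\xk + \tfrac{\rho_t}{2}\, q_t/\norm{\grad(\xk)}_2^2$ rather than at $\xk + \rho_t\, q_t/\norm{\grad(\xk)}_2^2$, so \cref{eq:update-rule} is consistent only with a $\frac{1}{2\rho_t}$ penalty; your completing-the-square step inherits exactly this slip from the paper and is no worse for it.
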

This iteration is equivalent to projected GD with a linearized sub-level set
constraint (see \cref{fig:update}).
It is also equivalent to an SQP method which uses \( \bfI/\rho_t \) as an
approximation to the Hessian of the teleportation objective \citep{torrisi2018projected}.
We leverage SQP to obtain the following guarantee.
\begin{restatable}{theorem}{sqpConvergence}\label{thm:sqp-convergence}
    Assume \( \grad(\wk) \neq 0 \), strict complementarity, and that
    \cref{eq:update-rule} is used with the relaxation step
    \( \hat{\x}_{k+1} \!=\! \alpha_t \xkk \!+ (1 \!-\! \alpha_t) \xk \).
    Then, using appropriate \( (\alpha_t, \rho_t) \),
    \( \hat{\x}_{k} \) converges asymptotically to a KKT
    point \( (x^*\!, \lambda^*) \) of \cref{eq:sublevel-teleport};
    if \( (x^*\!, \lambda^*) \) is second-order critical,
    then local convergence is linear.
\end{restatable}%
Global linear convergence also holds under additional assumptions
\citep{torrisi2018projected} --- see \cref{fig:tele-metrics} for a real-world
example.
Although formally necessary, we have not found relaxation steps to be useful in
practice.
Finally, we remark that the requirement for strict complementarity is standard
in non-linear programming \citep{de2019strict} and known to be satisfied for
generic conic programs \citep{pataki2001generic}.

\begin{figure*}[t]
    \centering
    \includegraphics[width=0.98\textwidth]{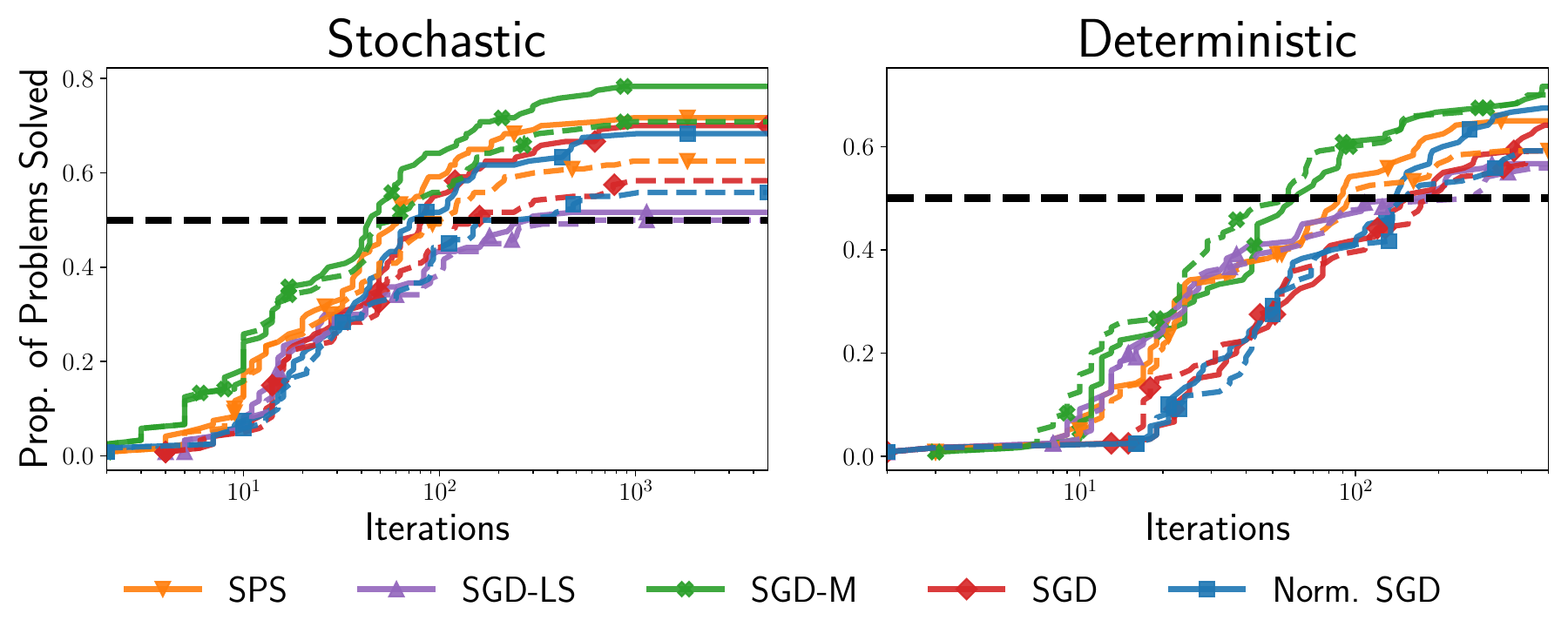}
    \caption{
        Performance profile comparing optimization methods with (solid lines)
        and without (dashed lines) sub-level set teleportation for training
        three-layer ReLU networks.
        Stochastic methods teleport once every \( 10 \) epochs starting
        from the epoch \( 5 \), while
        deterministic methods teleport once every \( 50 \) iterations
        starting from \( k=5 \).
        A problem is ``solved''' when \( \rbr{f(\wk) - f(\wopt)} /
        (f(w_0) - f(\wopt)) \leq \tau \), where \( f(\wopt) \) is estimated
        separately and \( \tau \) is a threshold.
        Performance is judged by comparing time to a
        fixed proportion of problems solved (see dashed line at 50\%).
        Algorithms with intermittent teleportation uniformly dominate their
        standard counterparts.
        %See \cref{app:experiment-details} for more details.
    }%
    \label{fig:network-profile}
    \vspace{-2ex}
\end{figure*}

\subsection{Step-sizes and Termination}

A disadvantage of our update is that it requires a step-size \( \rho_t > 0 \).
Since teleportation is a sub-routine of a larger optimization procedure,
tuning \( \rho_t \) for good performance is not acceptable.
Following practice for SQP methods
(see, e.g., \citet[Theorem 18.2]{nocedal1999numerical}),
we select \( \rho_t \) using line-search on a \emph{merit function}
\( \phi_\gamma(\x) = \half \log(\norm{\nabla f(\x)}_2^2)
- \gamma \rbr{f(x) - f(\wk)}_+ \),
\begin{equation}\label{eq:line-search}
    \phi_\gamma(\xkk) \geq \phi_\gamma(\xk) + D_\phi(\xk, \xkk - \xk) / 2,
\end{equation}
where \( \gamma > 0 \) is the penalty strength, and \( D_\phi(\xk, u) \) is the
directional derivative of \( \phi_\gamma \) evaluated at \( \xk \) in direction
\( u \).
Setting \( \gamma \) sufficiently large guarantees that
Eq.~\eqref{eq:update-rule} gives an ascent direction of \( \phi_\gamma \).
\begin{restatable}{proposition}{penaltyStrength}\label{prop:penalty-strength}
    If \( f(\xk) > f(\wk) \) and
    \( \gamma_t > \abr{q_t, v_t} / \norm{\grad(\xk)}_2^4 (f(\xk) - f(\wk))_+ \),
    then \( \xkk - \xk \) is an ascent direction of \( \phi_{\gamma_t} \) and
    Eq.~\eqref{eq:line-search} simplifies to,
    \begin{equation}
        \phi_{\gamma_t}(\xkk) \geq
        \phi_\gamma(\xk) + \rho_t \norm{q_t}_2^2/2 g_t^2.
    \end{equation}
\end{restatable}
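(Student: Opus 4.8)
The plan is to compute the directional derivative \( D_\phi(\xk, \xkk - \xk) \) explicitly and read off both conclusions from it. First I would fix the regime implied by the statement: because the hypothesis on \( \gamma_t \) divides by \( f(\xk) - f(\wk) \), we sit at a strictly infeasible iterate, \( f(\xk) > f(\wk) \). On a neighborhood of such an \( \xk \) the positive part \( (f(\argdot) - f(\wk))_+ \) coincides with the smooth function \( f(\argdot) - f(\wk) \), so the merit function \( \phi_{\gamma_t} \) is differentiable at \( \xk \), with gradient \( -\nabla^2 f(\xk)\grad(\xk) + \gamma_t \grad(\xk) = -q_t + \gamma_t \grad(\xk) \), using that the gradient of \( w \mapsto -\half \norm{\grad(w)}_2^2 \) is \( -\nabla^2 f(w)\grad(w) \). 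Hence \( D_\phi(\xk, u) = -\abr{q_t, u} + \gamma_t \abr{\grad(\xk), u} \) for any direction \( u \).

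Next I would substitute the closed-form update \( u = \xkk - \xk = (\rho_t q_t + v_t)/\norm{\grad(\xk)}_2^2 \) from \cref{prop:update-rule} and expand the two inner products \( \abr{q_t, u} \) and \( \abr{\grad(\xk), u} \). Writing \( v_t = -(\rho_t \abr{\grad(\xk), q_t} + f(\xk) - f(\wk))\grad(\xk) \) (its positive part being active at a strictly infeasible iterate with \( \abr{\grad(\xk), q_t} = \abr{\grad(\xk), \nabla^2 f(\xk)\grad(\xk)} \geq 0 \)) makes \( \abr{\grad(\xk), v_t} \) and \( \abr{q_t, v_t} \) explicit; in particular, since the update is a projection onto the linearized set \( \tilde\calS_k(\xk) \), the point \( \xkk \) lands on its boundary, which pins down \( \abr{\grad(\xk), u} \) in terms of the violation \( f(\xk) - f(\wk) \). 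With these substitutions, \( D_\phi(\xk, u) \) splits into a \( \gamma_t \)-free part built from \( \rho_t \norm{q_t}_2^2 \) and \( \abr{q_t, v_t} \), and a penalty part proportional to \( \gamma_t (f(\xk) - f(\wk)) \). The descent claim then follows by checking that the stated lower bound on \( \gamma_t \) forces \( D_\phi(\xk, u) < 0 \). For the simplification I would plug \( \phi_{\gamma_t}(\xk) = -\half \norm{\grad(\xk)}_2^2 + \gamma_t (f(\xk) - f(\wk)) \) together with \( \half D_\phi(\xk, u) \) into the line-search condition \cref{eq:line-search}; the role of the threshold on \( \gamma_t \) is precisely to cancel the \( \gamma_t \)-dependent contributions, after which the remaining terms collapse to \( -\half \norm{\grad(\xk)}_2^2 + (\abr{q_t, v_t} - \rho_t \norm{q_t}_2^2)/\norm{\grad(\xk)}_2^2 \).

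The main obstacle is the bookkeeping around the nonsmooth penalty and the exact cancellation: I need the \emph{same} threshold on \( \gamma_t \) to simultaneously (i) make the directional derivative strictly negative and (ii) annihilate the \( \gamma_t \)-terms in the Armijo right-hand side so that the condition reduces to the advertised \( \gamma_t \)-free form. This hinges on the precise way the projected step meets the linearized constraint (so that \( \abr{\grad(\xk), u} \) equals the correct multiple of the violation \( f(\xk) - f(\wk) \)) and on tracking sign conventions carefully through \( v_t \); these are the steps I would verify most carefully, since a stray factor there changes both the descent threshold and the final closed form.
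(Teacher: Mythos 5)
Your overall strategy (directional derivative of the merit function, then substitution into \cref{eq:line-search}) is in the same spirit as the paper's, but the mechanism you propose for the ``simplification'' has a genuine gap. The paper never computes \( D_\phi \) exactly; its \cref{lemma:directional-derivative} proves the \emph{upper bound}
\[
D_{\phi}(\xk; d_t) \leq \frac{\abr{q_t, v_t} - \rho_t \norm{q_t}_2^2}{\norm{\grad(\xk)}_2^2} - \gamma \rbr{f(\xk) - f(\wk)}_+,
\]
obtained by Taylor-expanding \( \phi_\gamma(\xk + \alpha d_t) \) and controlling the nonsmooth penalty via
\( \rbr{f(\xk) + \alpha\abr{\grad(\xk), d_t} - f(\wk)}_+ \leq (1-\alpha)\rbr{f(\xk)-f(\wk)}_+ \),
which needs only the linearized-feasibility \emph{inequality} \( \abr{\grad(\xk), d_t} \leq f(\wk) - f(\xk) \) (valid in both cases of \cref{prop:update-rule}, not only when the projection is active) together with convexity of \( t \mapsto t_+ \); in particular it is valid at the kink \( f(\xk) = f(\wk) \). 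When this bound replaces the derivative term in \cref{eq:line-search}, its \( -\gamma\rbr{f(\xk)-f(\wk)}_+ \) cancels \emph{identically} against the \( +\gamma\rbr{f(\xk)-f(\wk)}_+ \) inside \( \phi_\gamma(\xk) \): the cancellation is structural and has nothing to do with the threshold on \( \gamma_t \). That threshold is used only for the descent-direction claim, i.e.\ to make the right-hand side of the bound negative.

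Your mechanism (ii) cannot work as stated. Substituting the \emph{exact} derivative \( D_\phi(\xk, d_t) = -\abr{q_t,d_t} + \gamma_t\abr{\grad(\xk),d_t} \) with its \( \half \) pre-factor from \cref{eq:line-search}, and even granting the boundary identity \( \abr{\grad(\xk), d_t} = f(\wk) - f(\xk) \), the penalty contributions combine to
\( \gamma_t\rbr{f(\xk)-f(\wk)} - \half\gamma_t\rbr{f(\xk)-f(\wk)} = \half\gamma_t\rbr{f(\xk)-f(\wk)} \),
so half of the \( \gamma_t \) term survives and no choice of threshold removes it; meanwhile the curvature terms come out as \( -\half\abr{q_t, \rho_t q_t + v_t}/\norm{\grad(\xk)}_2^2 \), which does not match the stated right-hand side (wrong factor and wrong sign on \( \abr{q_t,v_t} \)). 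So the exact-computation route yields a genuinely different, still \( \gamma_t \)-dependent inequality; recovering the stated \( \gamma_t \)-free form requires the paper's route of substituting the lemma's computable bound, in which the penalty enters at full strength and cancels structurally. Two secondary issues: the hypothesis only forces \( f(\xk) \neq f(\wk) \), so your reduction to the strictly infeasible regime silently drops the feasible case \( f(\xk) < f(\wk) \); and your claim that \( \xkk \) always lies on the boundary of \( \tilde\calS_k(\xk) \) holds only when \( v_t \neq 0 \)---when \( v_t = 0 \) you have only the inequality, which is exactly why the paper's argument is built on it.
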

\cref{prop:penalty-strength} provides a recipe for computing
step-sizes using backtracking line-search.
Note that when \( v_t = 0 \) the update reduces to gradient ascent and
\( \gamma_t = 0 \) is immediately sufficient for progress.

We also briefly discuss how to terminate our algorithm.
Let \( \bfP_t \) be the projection onto the orthogonal complement of
\( \grad(\xk) \).
The KKT conditions (\cref{eq:kkt-conditions}) imply \( \xk \) is
near-optimal when \( \norm{\bfP_t q_t} < \epsilon \) and
\( f(\xk) - f(\wk) \leq \delta \).
\cref{alg:teleport} combines these conditions with line-search to give a
complete solver.

\subsection{Approximate Teleportation}\label{sec:approximate-teleportation}

\cref{alg:teleport} is a practical and theoretically justified
procedure for teleporting, but --- like other SQP methods --- it requires
unbounded iterations to compute an exact solution to \cref{eq:sublevel-teleport}.
Instead, we typically obtain approximate KKT points satisfying,
\begin{equation}\label{eq:approximate-kkt}
    \begin{aligned}
        \nabla^2 f(\wk^+) \grad(\wk^+) - \lambda_k \grad(\wk^+) = e_k, \\
        f(\wk^+) - f(\wk) \leq r_k,
    \end{aligned}
\end{equation}
where \( \norm{e_k}_2 \leq \epsilon_k \) is an error vector.
Thus, as a corollary of \cref{lemma:stability-teleport-progress},
GD with approximate teleportation and step-size \( \eta \) makes at
least as much progress as,
\[
    \delta_{k+1}
    \leq \rbr{1 \!+\! \tmu \lambda_k \eta(\eta \lambda_k \tL \!-\! 2)} \delta_k
    + \frac{\tL \eta^2}{2} \norm{e_k^+}_{\nabla^2 f_k}^2 + r_k.
\]
While the \( e_k \) term can be managed using a decreasing step-size
schedule, \( r_k \) can only be controlled by increasingly strict tolerances
for the teleportation algorithm.
If \( r_k \rightarrow 0 \) at an appropriate rate, then it is straightforward
to show convergence of the overall GD scheme \citep{bertsekas2000gradient}.
Yet, this still does not lead to a combined gradient/Hessian complexity for GD
with approximate teleportation because the convergence guarantee for
\cref{alg:teleport} is asymptotic.
We leave complete iteration complexities to future work.

%We showed in the previous section that \cref{alg:teleport} converges
%asymptotically to a KKT point of the teleportation problem with a local linear
%rate of convergence under standard assumptions ---
%for a real-world example of this behavior.
%However, in general settings we cannot expect to obtain exact convergence
%However, asymptotic convergence does not complete oracle
%complexity for GD with teleportation.

\section{EXPERIMENTS}\label{sec:experiments}

%!TEX root=../main.tex

\begin{figure*}[t]
    \centering
    \includegraphics[width=0.98\textwidth]{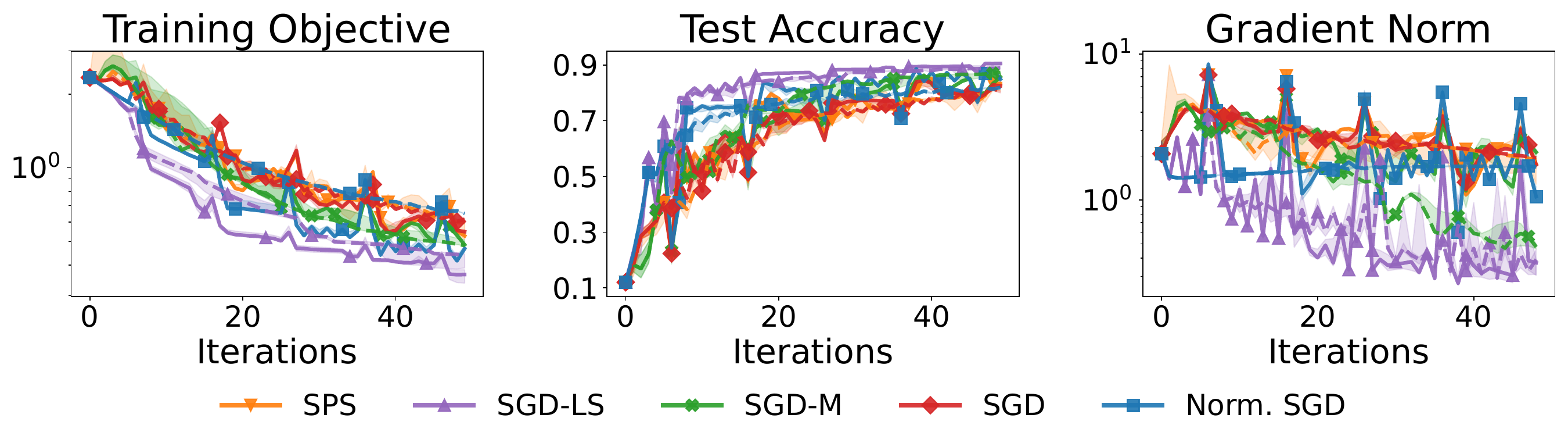}
    \caption{Performance of optimizers with (solid) and without (dashed)
        teleportation on MNIST.  We train a MLP with the soft-plus activation
        and one hidden layer of size \( 500 \).
        All methods are run in batch mode.
        Teleportation significantly improves the convergence speed of
        all methods and does not affect generalization performance.
    }%
    \label{fig:mnist-convergence}
\end{figure*}

%We start by evaluating the performance of our method on
%several sub-level set teleportation problems.
%Then we examine the effects of teleportation on logistic
%regression and binary classification with neural networks.
%We conclude with experiments on MNIST and an ablation study on the role of
%weight-decay regularization in the success rate for teleportation.

Now we present experiments on the performance of sub-level set teleportation in
practice.
We use \cref{alg:teleport} to implement a teleportation oracle and evaluate the
merits of teleportation for accelerating deterministic and stochastic
optimization.

\textbf{Solving the Teleportation Problem}:
\cref{fig:toy-teleport} shows the convergence path of our teleportation solver
on two test functions: the Booth function and the Goldstein-Price function
\citep{goldstein1971descent}.
The Booth function is a convex quadratic, while Goldstein-Price is highly
non-convex; our solver converges to the global optimum of the teleportation
problem for the Booth function and a local maximum for the Goldstein-Price
function.
To demonstrate the scalability of our approach, we also solve
sub-level set teleportation for a two-layer MLP with fifty hidden
units and soft-plus activations on MNIST \citep{lecun1998mnist}.
We use weight-decay regularization to ensure the objective is coercive.
\cref{fig:tele-metrics} shows
the norm of the network gradient (i.e.\ the
teleportation objective), the KKT residual~\eqref{eq:kkt-conditions},
and violation of the constraints during teleportation.
Our algorithm converges to a KKT point where the gradient norm is two orders of
magnitude larger than that at the initialization.

\textbf{Performance Profile}:
Now we consider using teleportation to accelerate training of three-layer ReLU
networks using gradient methods.
We focus on iteration complexity since no previous paper has clearly
demonstrated that teleportation improves convergence even using a cost-free
teleportation oracle.
We generate \( 120 \) problems by trying six regularization strengths on \( 20
\) binary classification datasets from the UCI repository
\citep{asuncion2007uci}.
\cref{fig:network-profile} presents a performance profile
\citep{dolan2002benchmarking} comparing GD (SGD), GD with the Polyak step-size
(SPS) \citep{polyak1987introduction, loizou2021sps}, GD with line-search
(GD-LS), and normalized GD (Norm. SGD) with (solid lines) and without (dashed
lines) teleportation.

We find that access to a teleportation oracle almost uniformly improves
iteration complexity of both stochastic and deterministic optimizers
(nearly every solid line is above the corresponding dashed line).
Teleportation seems particularly useful for computing high-accuracy
solutions, as the gaps between methods with and without teleportation widen
--- meaning teleportation methods solve more problems to a high
accuracy --- when many iterations are used.
This matches our theory, which suggests that teleportation is most useful when
the optimality gap is small (see \cref{thm:stability-teleport-ls}).
We refer to \cref{fig:logistic-profile} for similar trends on convex problems
satisfying Hessian stability and Figures~\ref{fig:uci-convergence-logreg}
and~\ref{fig:uci-convergence-network} for convergence curves on specific
problems.

\begin{figure*}
    \centering
    \includegraphics[width=0.98\textwidth]{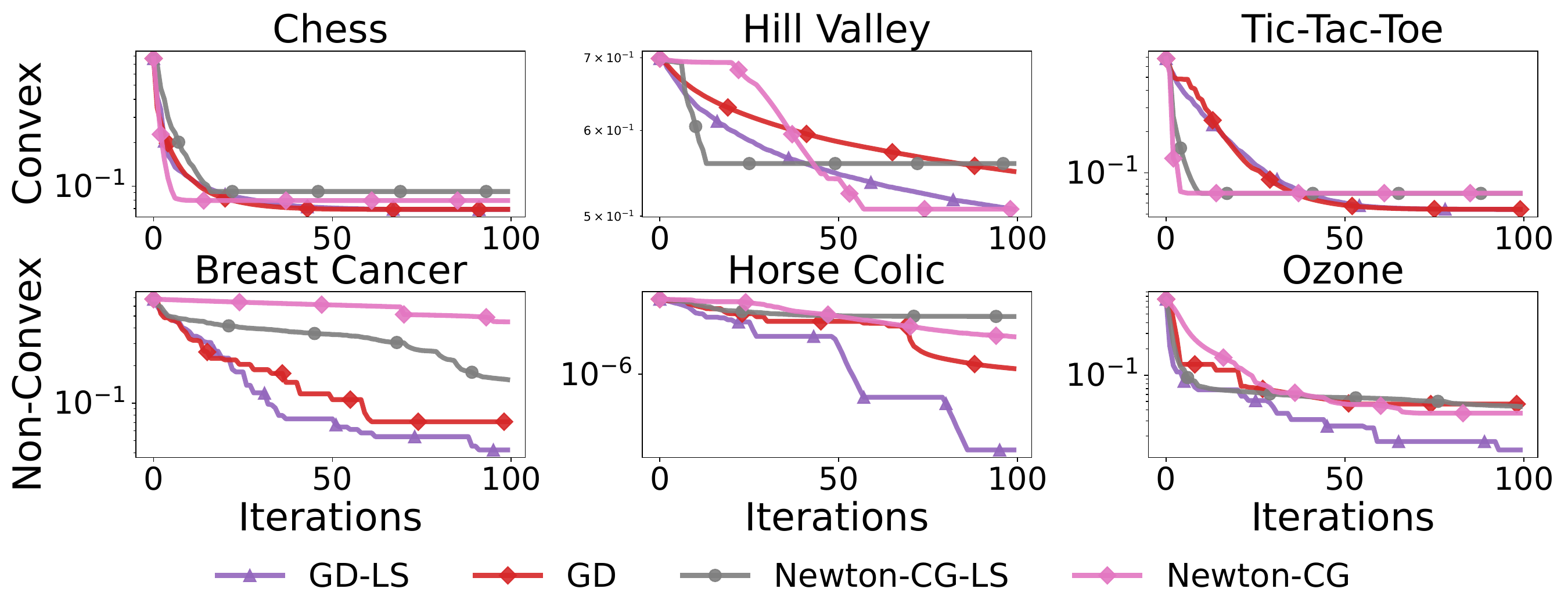}
    \caption{GD with teleportation compared to truncated Newton (Newton-CG),
        with and without line-search.
        Newton-CG is somewhat competitive on convex problems, but completely
        fails for non-convex optimization.
    }%
    \label{fig:newton-comparison}
    \vspace{-2ex}
\end{figure*}

\textbf{Time Complexity}:
Now we compare the wall-clock time of gradient methods with and without
teleportation.
Since the accuracy of the teleportation solver affects both the time complexity
of teleporting and the effectiveness of the teleportation steps, we leverage
this experiment to perform an ablation on the maximum number of iterations used
by \cref{alg:teleport}.
Figures~\ref{fig:logistic-profile-time} and~\ref{fig:network-profile-time}
show performance profiles on our 120 problems from the UCI repository
with total time on the \( x \)-axis.

Although solving the teleportation problem to high accuracy introduces
overhead, stochastic gradient methods using \( t = 50 \) iterations for the teleportation
sub-procedure solve significantly more problems than their standard
counterparts with the same computational budget.
This gap is particularly noticeable when training non-convex ReLU networks.
In comparison, teleportation methods with insufficiently many iterations (\( t
\in \cbr{1, 10} \)) are often slower than methods without teleportation without
noticeable improvements in accuracy.
We conclude that solving the teleportation problem accurately is critical and
can be computationally worthwhile when highly accurate solutions are desired.

\textbf{Image Classification}:
To confirm our observations, we train a two-layer network with \( 500 \) neurons
and soft-plus activations on MNIST.\
We use full-batch gradients and compare teleporting once every
five iterations, starting at \( k = 5 \), against the same methods without
teleportation.
\cref{fig:mnist-convergence} shows that, as on the UCI datasets, teleporting
leads to faster convergence for every optimization method considered.
This gap is particularly pronounced for normalized GD and GD with line-search.
%GD with teleportation, in particular, makes significantly more progress the
%first step after teleporting compared to the standard method;
%this corresponds with our theoretical derivations, which predict a
%linear rate for the first step after teleportation.
%Interestingly, line-search does not appear to be necessary to obtain this
%fast rate, indicating that \( \lambda_k \approx L \) may hold after most
%teleportation steps.
%this gap is particular noticeable
%for SGD-LS, which uses the Armijo line-search to select the step-size,
%as suggested by our theory.
Although teleportation steps significantly increase the gradient norm,
it decreases quickly and generalization performance is unaffected.
See Figures~\ref{fig:mnist-stochastic} and~\ref{fig:mnist-deterministic-non-smooth}
for results with stochastic gradients and non-smooth activations.

%We find that teleportation significantly increases the gradient norm along
%the optimization path, but methods initialized by teleporting require smaller
%step-sizes.
%As a result, the intuition from the descent lemma fails and teleportation
%stalls after initially experiencing faster convergence.
%We replicate these results on Fashion MNIST \citep{xiao2017fashion} in
%\cref{app:additional-experiments}, where we also show that increasing the
%step-size \( 10 \) epochs after teleporting does not lead to improved
%performance.
%We hypothesize that teleporting moves iterates to an unfavourable region of the
%loss landscape, such as a steep valley around local minima, where slow
%convergence is due to local geometry, rather than step-sizes.
%See Figures~\ref{fig:full-mnist-convergence} and~\ref{fig:fashion-mnist-convergence}
%for results with mini-batching, where we find the gap between standard
%initializations and teleportation is negligible.

\textbf{Comparison with Newton}:
\cref{eq:newton-direction} implies that GD with teleportation can be viewed as
a type of Hessian-free Newton method.
We study this connection and compare GD with teleportation at every step
against Newton-CG  \citep{dembo1983truncated}, an inexact Newton update where
the search direction is computed using the conjugate gradient (CG) method.
Because both methods use only Hessian-vector products, we
can ensure an equal computational burden by fixing the number of
CG iterations to be the same as the number of iterations used by \cref{alg:teleport}.

\cref{fig:newton-comparison} reports training performance on three convex
logistic regression problems and three non-convex ReLU network training
problems.
We plot the minimum training objective attained so far at each iteration since
GD with teleportation at every iteration produces noisy iterates.
While Newton-CG is initially faster for 2/3 convex problems, it stalls due to
poor quality of the approximate search direction.
Moreover, Newton-CG fails on all three non-convex problems, likely because
it is attracted to any critical point, including saddles and local maxima.
This replicates well-known failure modes of Newton-type methods for non-convex
optimization \citep{xu2020newton}.
In contrast, GD with teleportation still obtains a fast linear rate of
convergence.
See Figures~\ref{fig:newton-comparison-logreg}
and~\ref{fig:newton-comparison-network} for figures with additional metrics.

\textbf{Additional Experiments}:
In \cref{app:additional-experiments} we present additional experiments on
(i) using sub-level set teleportation to initialize optimization and (ii) the
effects of regularization on the success of teleportation for non-coercive
ReLU networks.
Figures~\ref{fig:logistic-profile-init} and~\ref{fig:network-profile-init} show
that initializing using teleportation has mixed performance compared to
the standard Kaiming He initialization \citep{he2015initialization} and is 
particularly poor for logistic regression problems.
We speculate that this is because \( R^2 \) may be very large at the start of
optimization.
Surprisingly, \cref{fig:regularization-ablation} shows that there is no clear
connection between regularization and improved success rates for methods with
teleportation.

%a finite solution without additional regularization .
%\cref{fig:regularization-ablation} confirms this result and shows that
%decreasing the strength of weight-decay regularization negatively affects
%the success rate of teleportation compared to standard initializations.
%Thus, teleportation is best suited to problems where large regularization is
%already desirable for modeling reasons.

\section{CONCLUSION}\label{sec:conclusion}

%!TEX root=../main.tex

Despite work advocating for level set teleportation, little has been
done to solve teleportation problems or evaluate their practical utility.
We rectify this by studying teleportation in detail;
we prove new theoretical guarantees for GD with teleportation, showing it can
obtain a combined linear/sub-linear rate, which improves significantly over
standard GD, when Hessian stability holds.
We also derive a novel algorithm for solving teleportation problems and
evaluate the performance of teleportation on a large suite of problems.
Our results reveal that access to a teleportation oracle implemented by our
algorithm speeds-up both convex and non-convex optimization can even outperform
approximate Newton methods.
Going beyond this oracle model to obtain full iteration complexities for GD
with an approximate teleportation method is an open problem that we leave to
future work.

\subsubsection*{Acknowledgements}

Aaron Mishkin was supported by NSF Grant DGE-1656518, by NSERC Grant
PGSD3-547242-2020, and by an internship at the Center for Computational
Mathematics, Flatiron Institute.
We thank Si Yi (Cathy) Meng and Michael Eickenberg for helpful conversations
while preparing this work and David Martínez-Rubio and Gauthier Gidel for
insightful comments on an earlier version.
We also thank the Scientific Computing Core, Flatiron Institute, for
supporting the numerical experiments in this work and Fabian Schaipp for use
of the \texttt{step-back} code.
We credit Ryan D'Orazio for drawing our attention to sufficient conditions for
coercivity and the anonymous reviewers for their invaluable comments on earlier
versions of this paper.

\printbibliography[]

\clearpage
\newpage

\section*{Checklist}

%!TEX root=../main.tex

\begin{enumerate}

    \item For all models and algorithms presented, check if you include:
          \begin{enumerate}
              \item A clear description of the mathematical setting, assumptions, algorithm, and/or model. \textbf{Yes}
              \item An analysis of the properties and complexity (time, space, sample size) of any algorithm. \textbf{Yes}
              \item (Optional) Anonymized source code, with specification of all dependencies, including external libraries. \textbf{No}
          \end{enumerate}

          \textbf{Justification}: Source code will be released after acceptance.
          All algorithms are carefully described and illustrated with pseudocode.
          We prove iteration complexities for GD with teleportation and state when
          no iteration complexity is known (i.e.\ convergence is only known
          asymptotically).

    \item For any theoretical claim, check if you include:
          \begin{enumerate}
              \item Statements of the full set of assumptions of all theoretical results. \textbf{Yes}
              \item Complete proofs of all theoretical results. \textbf{Yes}
              \item Clear explanations of any assumptions. \textbf{Yes}
          \end{enumerate}

          \textbf{Justification}: Every theoretical result is presented in a formal
          statement with the complete set of necessary assumptions. All proofs are
          provided in the appendix.

    \item For all figures and tables that present empirical results, check if you include:
          \begin{enumerate}
              \item The code, data, and instructions needed to reproduce the main experimental results (either in the supplemental material or as a URL). \textbf{No}
              \item All the training details (e.g., data splits, hyperparameters, how they were chosen). \textbf{Yes}
              \item A clear definition of the specific measure or statistics and error bars (e.g., with respect to the random seed after running experiments multiple times). \textbf{Yes}
              \item A description of the computing infrastructure used. (e.g., type of GPUs, internal cluster, or cloud provider). \textbf{Yes}
          \end{enumerate}

          \textbf{Justification}: \cref{app:experiment-details} contains all
          hyper-parameter settings and algorithmic details necessary to
          reproduce our experiments. Algorithms are exactly described using
          pseudo-code.  Moreover, we will release the code to exactly reproduce
          our experiments upon publication.

    \item If you are using existing assets (e.g., code, data, models) or curating/releasing new assets, check if you include:
          \begin{enumerate}
              \item Citations of the creator If your work uses existing assets. \textbf{Yes}
              \item The license information of the assets, if applicable. \textbf{Yes}
              \item New assets either in the supplemental material or as a URL, if applicable. \textbf{Not Applicable}
              \item Information about consent from data providers/curators. \textbf{Not Applicable}
              \item Discussion of sensible content if applicable, e.g., personally identifiable information or offensive content. \textbf{Not Applicable} 
          \end{enumerate}

          \textbf{Justification}: We do not create or release new assets. All
          existing datasets and models have been carefully referenced in the
          text.

    \item If you used crowdsourcing or conducted research with human subjects, check if you include:
          \begin{enumerate}
              \item The full text of instructions given to participants and screenshots. \textbf{Not Applicable}
              \item Descriptions of potential participant risks, with links to Institutional Review Board (IRB) approvals if applicable. \textbf{Not Applicable}
              \item The estimated hourly wage paid to participants and the total amount spent on participant compensation. \textbf{Not Applicable}
          \end{enumerate}

          \textbf{Justification}: We did not use crowdsourcing.

\end{enumerate}

\onecolumn
\newpage
\appendix

\section{LEVEL SET TELEPORTATION: PROOFS}\label{app:teleportation}

% !TEX root=../main.tex

\begin{definition}\label{def:non-expansive}
    An operator \( M : \R^d \into \R^d \) is called non-expansive
    if for every \( x, y \in \R^d \),
    \[
        \norm{M(x) - M(y)}_2 \leq \norm{x - y}_2.
    \]
\end{definition}

\begin{lemma}\label{lemma:relaxation-solutions}
    Suppose \( f \) is (strictly) convex.
    Then at least one (every) solution to the sub-level set teleportation
    problem (\cref{eq:sublevel-teleport}) is a solution to the level set
    teleportation problem (\cref{eq:teleport}).
\end{lemma}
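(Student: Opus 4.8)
The plan is to exploit that the level-set constraint set $\cbr{w : f(w) = f(\wk)}$ is exactly the boundary of the sub-level set $\calS_k$, so that \cref{eq:teleport} is just \cref{eq:sublevel-teleport} restricted to $\partial\calS_k$. Since the feasible set of \cref{eq:teleport} is contained in that of \cref{eq:sublevel-teleport}, the sub-level optimal value dominates the level-set optimal value, and it suffices to show that the sub-level maximum is attained on the boundary. Existence of a maximizer is immediate: coercivity makes $\calS_k$ bounded, continuity of $f$ makes it closed and hence compact, and $\half\norm{\grad(\cdot)}_2^2$ is continuous, so Weierstrass applies.

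The core step is a ray argument showing the gradient norm cannot decrease as one moves from an interior point to the boundary along the gradient direction. First I would take any feasible $w$ with $\grad(w) \neq 0$ and follow $w(t) = w + t\,\grad(w)$, $t \ge 0$. Monotonicity of the gradient (convexity) gives $\tfrac{d}{dt} f(w(t)) = \abr{\grad(w(t)), \grad(w)} \ge \norm{\grad(w)}_2^2 > 0$, so $f$ is strictly increasing along the ray; combined with coercivity and the intermediate value theorem there is a unique $t^* > 0$ with $f(w(t^*)) = f(\wk)$, a boundary point. Next, setting $h(t) = \abr{\grad(w(t)), \grad(w)}$, convexity ($\nabla^2 f \succeq 0$) yields $h'(t) = \abr{\nabla^2 f(w(t))\,\grad(w), \grad(w)} \ge 0$, so $h(t^*) \ge h(0) = \norm{\grad(w)}_2^2$, and Cauchy--Schwarz gives $\norm{\grad(w(t^*))}_2 \ge h(t^*)/\norm{\grad(w)}_2 \ge \norm{\grad(w)}_2$.

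With this in hand the convex claim follows: take a sub-level maximizer $\wk^+$; if it lies on the boundary we are done, and otherwise (when $\grad(\wk^+) \neq 0$) the boundary point $w(t^*)$ is feasible with gradient norm at least that of $\wk^+$, hence is itself a sub-level maximizer lying on the level set, i.e. a solution of \cref{eq:teleport}. The only remaining possibility, $\grad(\wk^+) = 0$ at an interior maximizer, forces $\grad \equiv 0$ on $\calS_k$ and hence $f(\wk) = \min f$, which is degenerate since $\calS_k$ collapses to the single minimizer (already on the boundary). This establishes the ``at least one'' statement.

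For the strictly convex case I would upgrade the ray inequality to a strict one. Because $\nabla^2 f \succeq 0$, the identity $h'(t) = 0$ is equivalent to $\nabla^2 f(w(t))\grad(w) = 0$; if this held for all $t \in [0, t^*]$, then $\tfrac{d}{dt}\grad(w(t)) = \nabla^2 f(w(t))\grad(w) \equiv 0$, so $\grad$ would be constant and $f$ affine along the segment, contradicting strict convexity. Hence $h'(t_0) > 0$ on a subinterval, giving $h(t^*) > \norm{\grad(w)}_2^2$ and thus $\norm{\grad(w(t^*))}_2 > \norm{\grad(w)}_2$. Consequently no interior point with nonzero gradient can be a maximizer, and the degenerate zero-gradient case again forces $\calS_k$ to be a single point; every sub-level solution therefore lies on the level set, proving the ``every'' statement. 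The main obstacle I anticipate is precisely this strict-convexity step, since strict convexity does \emph{not} supply a positive definite Hessian --- it is the affine-along-the-ray contradiction, rather than invertibility of $\nabla^2 f$, that rescues the argument.
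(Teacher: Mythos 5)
Your proof has the same skeleton as the paper's: from a feasible point, move along the ray \( w(t) = w + t\grad(w) \), show the gradient norm cannot decrease along it, and conclude that at least one maximizer (every maximizer, under strict convexity) lies on the level set. The genuine gap is that your implementation of the monotonicity step is second-order: you compute \( h'(t) = \abr{\nabla^2 f(w(t))\grad(w), \grad(w)} \geq 0 \), and your strict-convexity argument rests on the identity \( \tfrac{d}{dt}\grad(w(t)) = \nabla^2 f(w(t))\grad(w) \). The lemma assumes only (strict) convexity, and the section's standing assumption is \( L \)-smoothness; neither gives existence of \( \nabla^2 f \) along the segment. A strictly convex, \( L \)-smooth function can fail to be twice differentiable on a hyperplane that the ray crosses transversally (e.g. \( f(x,y) = \tfrac{1}{2}(x+y)^2 + \tfrac{1}{2}\max(0,x)^2 + \tfrac{\epsilon}{2}(x^2+y^2) \), whose Hessian jumps across \( x = 0 \)); at such a crossing point \( h' \) need not exist, so both your inequality ``\( h'(t) \geq 0 \) for all \( t \)'' and the affine-along-the-segment contradiction are unavailable. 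As written, your argument establishes the lemma only for \( C^2 \) functions, which is strictly less than what is claimed.

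The fix is to stay at first order, which is exactly what the paper does: adding the two convexity inequalities \( f(w(t)) - f(w) \geq t \norm{\grad(w)}_2^2 \) and \( f(w) - f(w(t)) \geq -t\abr{\grad(w(t)), \grad(w)} \) gives \( \abr{\grad(w(t)), \grad(w)} \geq \norm{\grad(w)}_2^2 \), and Cauchy--Schwarz then yields \( \norm{\grad(w(t))}_2 \geq \norm{\grad(w)}_2 \) with no Hessian in sight. (Equivalently, monotonicity of \( \grad \) applied to the pair \( w(t_1), w(t_2) \) shows \( h \) is non-decreasing without ever differentiating it.) This repair also dissolves what you identify as the main obstacle: under strict convexity both inequalities become strict for \( t > 0 \), so \( \norm{\grad(w(t))}_2 > \norm{\grad(w)}_2 \) is immediate and no affine-segment contradiction is needed. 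The ingredients you add beyond the paper --- Weierstrass existence via coercivity, the intermediate-value crossing time \( t^* \), and the explicit treatment of the degenerate case \( \grad(\wk^+) = 0 \) --- are sound and worth keeping, with one correction: in the merely convex case that degenerate scenario makes \( \calS_k \) equal to the whole set \( \argmin f \), not a single point; the conclusion still holds there because every point of \( \calS_k \) then satisfies the level-set constraint trivially.
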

\begin{proof}
    Let \( w(t) = w + t \nabla f(w) \).
    From convexity,
    \begin{align*}
        f(w(t)) - f(w)
                                         & \geq t \norm{\nabla f(w)}_2^2               \\
        f(w) - f(w(t))
                                         & \geq - t \abr{\nabla f(w(t)), \nabla f(w)}.
        \intertext{Adding these inequalities and using Cauchy-Schwarz,}
        \implies \norm{\nabla f(w(t))}_2 & \geq \norm{\nabla f(w)}_2.
    \end{align*}
    That is, the gradient norm is monotone non-decreasing
    along the gradient direction when \( f \) is convex.
    Thus, at least one solution to the maximization problem must occur on the boundary
    of the sub-level set, which completes the first part of the proof.
    For the second, simply note that the inequalities hold strictly if
    \( f \) is strictly convex, implying that every solution must be on the
    boundary.
\end{proof}

\diameterBound*
\begin{proof}
    Starting from the descent lemma (\cref{eq:descent-lemma}),
    we obtain
    \begin{align*}
        f(\wkk)
         & \leq f(\wk^+) + \etak \rbr{\frac{L \etak}{2} - 1}\norm{\grad(\wk^+)}_2^2 \\
         & \leq f(\wk) + \etak \rbr{\frac{L \etak}{2} - 1}\norm{\grad(\wk)}_2^2     \\
         & \leq f(\wk),
    \end{align*}
    since \( \etak < 2 / L \).
    Subtracting \( f(\wopt) \) from both sides implies
    \( \delta_{k+1} \leq \delta_k \).
    Thus, \( \wkk \in \calS_0 \) and \( \norm{\wkk - \wopt}_2 \leq R \) by
    definition.
    Arguing by induction now completes the proof.
\end{proof}

\scProgress*
\begin{proof}
    Since \( f \) is \( L \)-smooth and \( \mu \) strongly convex, \( \nabla f \)
    satisfies the following inequality:
    \begin{equation}\label{eq:coercivity-gradient}
        \abr{\grad(x) - \grad(y), x - y}
        \geq \frac{\mu L}{\mu + L}\norm{x - y}_2^2
        + \frac{1}{L + \mu}\norm{\grad(x) - \grad(y)}_2^2.
    \end{equation}
    This is sometimes called coercivity of the gradient; see, for example,
    \citet[Theorem 2.1.12]{nesterov2004lectures}.
    Suppose \( k \in \calT \).
    Then,
    \begin{align*}
        \norm{\wkk - \wopt}_2^2
         & = \norm{\wk^+ - \etak \grad(\wk^+) - \wopt}_2^2                                                             \\
         & = \norm{\wk^+ - \wopt}_2^2 - 2 \etak \abr{\grad(\wk^+), \wk^+ - \wopt} + \etak^2 \norm{\grad(\wk^+)}_2^2    \\
         & \leq \norm{\wk^+ - \wopt}_2^2 - 2 \etak \rbr{\frac{\mu L}{\mu + L}\norm{\wk^+ - \wopt}_2^2
        + \frac{1}{L + \mu}\norm{\grad(\wk^+)}_2^2}                                                                    \\
         & \hspace{3em} + \etak^2 \norm{\grad(\wk^+)}_2^2                                                              \\
         & = \rbr{1 - \frac{2 \etak \mu L}{\mu + L}}\norm{\wk^+ - \wopt}_2^2
        + \etak \rbr{\etak - \frac{2}{\mu + L}}\norm{\grad(\wk^+)}_2^2                                                 \\
         & \leq \rbr{1 - \frac{2 \etak \mu L}{\mu + L}}\norm{\wk^+ - \wopt}_2^2                                        \\
         & \hspace{3em} + \etak \max\cbr{L^2 \rbr{\etak - \frac{2 }{\mu + L}}, \mu^2 \rbr{\etak - \frac{2 }{\mu + L}}}
        \norm{\wk^+ - \wopt}_2^2                                                                                       \\
         & = \max\cbr{(1 - \etak L)^2, (1 - \etak \mu)^2} \norm{\wk^+ - \wopt}_2^2,
    \end{align*}
    where we have used
    \( \mu \) strong convexity and \( L \)-smoothness to bound
    the gradient norm depending on the step-size \( \etak \).
    This completes the proof.
\end{proof}

\scTeleport*
\begin{proof}
    First we show the upper bound when teleportation is not non-expansive.
    Starting from the descent lemma \cref{eq:descent-lemma}, we have
    \begin{align*}
        f(\wkk) - f(\wopt)
         & \leq f(\wk^+) - f(\wopt) - \etak \rbr{1 - \frac{\etak L}{2}} \norm{\grad(\wk^+)}_2^2     \\
         & \leq f(\wk) - f(\wopt) - \etak \rbr{1 - \frac{\etak L}{2}} \norm{\grad(\wk)}_2^2         \\
         & \leq f(\wk) - f(\wopt) - 2 \mu \etak \rbr{1 - \frac{\etak L}{2}} \rbr{f(\wk) - f(\wopt)} \\
         & = \rbr{1 - 2 \mu \etak \rbr{1 - \frac{\etak L}{2}}} \rbr{f(\wk) - f(\wopt)},
    \end{align*}
    where in the last inequality we used \( \etak < \frac{2}{L} \)
    and the PL-condition,
    \[
        \frac{1}{2 \mu} \norm{\grad(w)}_2^2 \geq f(w) - f(\wopt),
    \]
    which is implied by strong convexity.
    Recursing on the final inequality implies
    \[
        \delta_{K}
        \leq
        \prod_{i=0}^{K-1} \rbr{1 - 2 \mu \eta_i \rbr{1 - \frac{\eta_i L}{2}}} \delta_0,
    \]
    as claimed.

    Now we consider the setting where teleportation is assumed to be non-expansive.
    Starting from \cref{lemma:sc-progress}, we obtain,
    \begin{align*}
        \norm{\wkk - \wopt}_2^2
         & \leq \max\cbr{(1 - \etak L)^2, (1 - \etak \mu)^2} \norm{\wk^+ - \wopt}_2^2                    \\
         & \leq \max\cbr{(1 - \etak L)^2, (1 - \etak \mu)^2} \norm{\wk - \wopt}_2^2,
        \intertext{by non-expansivity of teleportation and the fact that
            \( \wopt \) is a fixed point of the teleportation operator.
            Recursing on this inequality implies,
        }
        \norm{w_K - \wopt}_2^2
         & \leq \prod_{i=0}^{K-1} \max\cbr{(1 - \eta_i L)^2, (1 - \eta_i \mu)^2} \norm{\w_0 - \wopt}_2^2 \\
        \implies
        f(\w_K) - f(\wopt)
         & \leq \frac{L}{\mu} \sbr{\prod_{i=0}^{K-1} \max\cbr{(1 - \eta_i L)^2, (1 - \eta_i \mu)^2}}
        \rbr{f(w_0) - f(\wopt)},
    \end{align*}
    where the last inequality uses both smoothness and strong convexity to
    upper and lower bound \( \norm{w - \wopt}_2^2 \) in terms of
    \( f(w) - f(\wopt) \).

    For the lower bound, suppose \( f(w) = \frac{1}{2} \norm{w}_2^2 \).
    Clearly \( f \) is \( 1 \)-smooth and strongly convex with \( \mu = 1 \).
    The unique minimizer is simply \( \wopt = 0 \).
    Starting from an arbitrary \( w_0 \), each iteration of GD with teleportation
    has the following recursion:
    \[
        \wkk = \wk^+ - \etak \wk^+ = (1 - \etak ) \wk^+,
    \]
    and the objective evolves trivially as
    \begin{align*}
        \frac{1}{2} \norm{\wkk}_2^2
         & = (1 - \etak)^2 \half \norm{\wk^+}_2^2 \\
         & = (1 - \etak)^2 \half \norm{\wk}_2^2,
    \end{align*}
    where the second equality uses \cref{lemma:relaxation-solutions} to guarantee
    that the solution to sub-level set teleportation lies on the level
    set \( \calL_k \) and the fact that every point \( x \in \calL_k \)
    satisfies \( \norm{x}_2 = \norm{\wk} \).
    Thus, each step of gradient descent makes progress exactly
    matching the convergence rate for general smooth, strongly convex functions
    regardless of teleportation.
    This completes the lower-bound.
\end{proof}

\subsection{Convergence for Convex Functions: Proofs}\label{app:convex-convergence}

\newtonDirection*
\begin{proof}
    Define the second-order Huber function,
    \begin{equation}
        h_2(x) =
        \begin{cases}
            -\frac{1}{8} x^4 + \frac{3}{4} x^2 + \frac{3}{8} & \mbox{if \( \abs{x} < 1 \),} \\
            \abs{x}                                          & \mbox{Otherwise.}
        \end{cases}
    \end{equation}
    It is straightforward to verify that \( h_2 \) is \( C^2 \) and convex with
    gradient function
    \[
        \nabla h_2(x) =
        \begin{cases}
            -\frac{1}{2} x^3 + \frac{3}{2} x & \mbox{if \(\abs{x} < 1\),} \\
            \text{sign}(x)                   & \mbox{otherwise.}
        \end{cases}
    \]
    Consider the family of target functions \( f_d : \R^d \into \R \) given by
    \[
        f_d(x) = \sum_{i = 1}^d h_2(x_i),
    \]
    where \( d \) is an integer.
    Given level \( b \geq 1 \), define \( q = \floor{b} \) and
    consider the teleportation problem,
    \[
        x^+
        \in \argmax_x \cbr{\norm{\grad_{q}(x)}_2^2 : f_{q}(x) = b}.
    \]
    Since \( f_q \) is symmetric, we restrict ourselves to maximization over
    the non-negative orthant.
    At any point \( x \), the squared-norm of the gradient is given by
    \[
        \norm{\nabla f_q(x)}_2^2 = \sum_{i=1}^q (\nabla h_2(x_i))^2,
    \]
    which is maximized when \( x_i \geq 1 \) for each \( i \).
    Since \( q < b \), the solution set of the teleportation
    problem is the following:
    \[
        \calX^* = \cbr{x : x_i \geq 1, \sum_{i = 1}^q x_i = b}.
    \]
    At any of these points, the gradient is simply \( \nabla f_n(x) = \mathbf{1} \).
    However, the Hessian is \( \nabla^2 f(x) = 0 \), meaning
    the gradient does not coincide with the Newton direction, which is not defined.

    Although this argument requires \( b \geq 1 \), in practice any \( b > 0 \)
    can be handled by adjusting the point at which \( h_2 \) transitions from a
    linear to a quartic function.
\end{proof}

\convexTeleport*
\begin{proof}
    The proof follows \citet[Fact B.1]{zhu2017linear} and
    \citet[Theorem 3.3]{bubeck2015convex}.
    Starting from \( L \)-smoothness of \( f \), we obtain
    \begin{align}
        f(\wkk)
         & \leq f(\wk^+) + \eta \rbr{\frac{L \eta}{2} - 1}\norm{\grad(\wk^+)}_2^2 \nonumber \\
         & \leq f(\wk) + \eta \rbr{\frac{L \eta}{2} - 1}\norm{\grad(\wk)}_2^2,              % 
        \label{eq:teleport-descent}
    \end{align}
    which implies that \( \cbr{f(\wk)} \) are monotone decreasing regardless
    of the teleportation schedule.
    Thus, \( \norm{\wkk - \wopt}_2 \leq R \) for all \( k \) and we
    use convexity of \( f \) and the Cauchy-Schwarz inequality to obtain,
    \begin{align*}
        f(\wk) - f(\wopt)
         & \leq \abr{\grad(\wk), \wk - \wopt} \\
         & \leq R \norm{\grad(\wk)}_2,
    \end{align*}
    which holds for all \( k \).
    Let \( \delta_k = f(\wk) - f(\wopt) \).
    Using \cref{eq:teleport-descent} from above and then
    dividing on both sides by \( \delta_k \) and \( \delta_{k+1} \) gives
    the following:
    \begin{align*}
        \delta_{k}^2
         & \leq \frac{2R^2}{\eta\rbr{2 - L \eta}} \rbr{\delta_k - \delta_{k+1}}             \\
        \implies
        1 \leq \frac{\delta_k}{\delta_{k+1}}
         & \leq \frac{2R^2}{\eta\rbr{2 - L \eta}} \rbr{\inv{\delta_{k+1}} - \inv{\delta_k}} \\
        \implies
        \frac{\eta\rbr{2 - L \eta}}{2 R^2}
         & \leq \rbr{\inv{\delta_{k+1}} - \inv{\delta_k}},
        \intertext{so that summing from \( k = 0 \) to \( K - 1 \) gives}
        \frac{K \eta\rbr{2 - L \eta}}{2 R^2}
         & \leq \rbr{\inv{\delta_{K}} - \inv{\delta_0}}                                     \\
         & \leq \inv{\delta_{K}},
    \end{align*}
    which completes the first part of the proof.

    For the second part of the proof, consider the second-order Huber function,
    \begin{equation}
        h_2(x) =
        \begin{cases}
            -\frac{1}{8} x^4 + \frac{3}{4} x^2 + \frac{3}{8} & \mbox{if \( \abs{x} < 1 \),} \\
            \abs{x}                                          & \mbox{Otherwise.}
        \end{cases}
    \end{equation}
    It is straightforward to verify that \( h_2 \) is \( C^2 \) and convex.
    Let \( f(w) = h_2(w_1) \), meaning \( f \) is constant in every
    dimension but the first.
    This implies that the teleportation operator
    \[
        \wk^+ = \argmax \cbr{\half \norm{\grad(\w)}_2^2 : f(w) \leq f(\wk)},
    \]
    is simply the identity operator, i.e. \( \wk^+ = \wk \).
    Thus, the optimization paths of vanilla GD and GD with teleportation
    are identical and the two methods have the same convergence rate.
\end{proof}

\teleportDistances*
\begin{proof}
    We assume for convenience that \( w = (x, y) \),
    \( x > 1 \) and \( 0 < y < \alpha \).
    These assumptions can be relaxed by modifying our construction.
    Let \( \epsilon > 0 \) such that \( \alpha, y > \epsilon \) and
    \( x + y \epsilon > \alpha \epsilon + 1 \) hold, where both conditions can
    be satisfied by taking \( \epsilon \) sufficiently small.

    Let \( g_\delta \) be the Huber function defined by
    \begin{equation}\label{eq:huber-function}
        g_\delta(x) =
        \begin{cases}
            \half x^2                         & \mbox{if \( x \leq \delta \)} \\
            \delta \rbr{\abs{x} - \delta / 2} & \mbox{otherwise.}
        \end{cases}
    \end{equation}
    and consider the objective function
    \begin{equation}
        f_{(\epsilon, \alpha)}(x, y) = g_1(x) + g_\epsilon(y)
        + \half \mathds{1}_{y \geq \alpha} \rbr{y - \alpha}^2.
    \end{equation}
    We first show that there exists \( x' = 1 \) and some \( y' > \alpha \) satisfying
    \[
        f_{(\epsilon, \alpha)}(x', y') = f_{(\epsilon, \alpha)}(x, y).
    \]
    For this to hold, we must have
    \begin{align*}
             & f_{(\epsilon, \alpha)}(x', y') = \half + \epsilon y' - \frac{\epsilon^2}{2} + \half \rbr{y' - \alpha}^2
        = x - \half + \epsilon y - \frac{\epsilon^2}{2} = f_{(\epsilon, \alpha)}(x, y)                                 \\
        \iff & (y')^2 + 2 \rbr{\epsilon - \alpha} y' + \rbr{\alpha^2 - 2x - 2 \epsilon y + 2} = 0                      \\
        \iff & y' = \alpha - \epsilon \pm \sqrt{\rbr{\epsilon - \alpha}^2 - \alpha^2 + 2x + 2 \epsilon y - 2}.
        \intertext{In particular, for \( y' > \alpha \), it must hold that}
             & \rbr{\epsilon - \alpha}^2 - \alpha^2 + 2x + 2 \epsilon y - 2 > \epsilon^2                               \\
             & \iff x + \epsilon y > \alpha \epsilon + 1,
    \end{align*}
    where this last condition is guaranteed by assumption on \( \epsilon \).
    We conclude that \( (x', y') \) is on the level set as desired.

    The gradient of \( f_{(\epsilon, \alpha)} \) is easy to calculate as
    \[
        \nabla f_{(\epsilon, \alpha)}(x, y) =
        \begin{cases}
            \sign(x) + \epsilon \cdot \sign(y)
            + \mathds{1}_{y \geq \alpha} \rbr{y - \alpha}
             & \mbox{if \( \abs{x} \geq 1, \abs{y} \geq \epsilon \)}  \\
            x + \epsilon \cdot \sign(y)
            + \mathds{1}_{y \geq \alpha} \rbr{y - \alpha}
             & \mbox{if \( \abs{x} \leq 1, \abs{y} \geq \epsilon \)}  \\
            \sign(x) + \epsilon y
            + \mathds{1}_{y \geq \alpha} \rbr{y - \alpha}
             & \mbox{if \( \abs{x} \geq 1, \abs{y} \leq \epsilon \)}  \\
            x + \epsilon y
            + \mathds{1}_{y \geq \alpha} \rbr{y - \alpha}
             & \mbox{if \( \abs{x} \leq 1, \abs{y} \leq \epsilon \)}.
        \end{cases}
    \]
    In particular, the gradient norm at \( (x', y') \) is given by
    \[
        \norm{\nabla f_{(\epsilon, \alpha)}(x', y')}_2^2
        = 1 + \epsilon^2 + (y' - \alpha)^2.
    \]
    A straightforward case analysis reveals that for every \( \bar x \),
    \( \bar y \) such that \( \bar y < \alpha \),
    \[
        \norm{\nabla f_{(\epsilon, \alpha)}(\bar x, \bar y)}_2^2
        < \norm{\nabla f_{(\epsilon, \alpha)}(x', y')}_2^2.
    \]
    That is, the maximizer of the gradient norm on the level set
    must satisfy \( y^+ \geq \alpha \).
    We conclude that
    \[
        \norm{w^+ - \wopt}_2^2 = \norm{(x^+, y^+)}_2^2 \geq \alpha^2.
    \]

    Now we upper-bound the diameter of the sub-level set as
    \begin{align*}
        R^2
         & = \max \cbr{\norm{(x', y')}_2
        : f_{(\epsilon, \alpha)} ((x', y')) = f_{(\epsilon, \alpha)} ((x, y))} \\
         & \leq \max \cbr{|x'| + |y'|
            : \exists \tilde x, \tilde y \text{ s.t. }
            f_{(\epsilon, \alpha)} ((x', \tilde y)) = f_{(\epsilon, \alpha)}((x, y)),
            f_{(\epsilon, \alpha)} ((\tilde x, y')) = f_{(\epsilon, \alpha)} ((x, y))}.
    \end{align*}
    Taking \( \tilde x = \tilde y = 0 \) allow us to maximize \( x', y' \).
    As a result, the constraint on \( x' \) implies
    \[
        x' - \half = x - \half + \epsilon (y - \epsilon / 2),
    \]
    from which we deduce
    \[
        x' = x + \epsilon y - \epsilon^2 / 2 \leq x + \epsilon y.
    \]
    Repeating a similar calculation for \( y' \), we obtain the constraint
    \begin{align*}
        \epsilon (y' - \epsilon / 2) + \half (y' - \alpha)^2
         & = x + \epsilon(y - \epsilon / 2)                                                 \\
        \implies y'
         & = \alpha - \epsilon \pm \sqrt{\epsilon^2 - 2\alpha \epsilon + 2x + 2 \epsilon y} \\
        \implies (y' - \alpha + \epsilon)^2
         & = \epsilon^2 - 2 \alpha \epsilon + 2 x + 2 \epsilon y                            \\
        \implies (y')^2
         & = 2 \alpha y' - 2(y' - y) \epsilon + 2x - \alpha^2                               \\
        \implies y'
         & \leq 2 \alpha + \frac{2x}{\alpha},
    \end{align*}
    where we have used \( y' \geq \alpha > y \).
    Combining these results, we have
    \begin{align*}
        R
         & \leq x + \epsilon y + 2 \alpha + \frac{2x}{\alpha}.
        \intertext{Taking \( \alpha \gg x \), \( x \) arbitrarily close to \( 1 \),
            and \( \epsilon \) sufficiently small implies there exists
            an absolute constant \( C < 2 \) such that}
        R
         & \leq 2 \alpha + C < 4\alpha.
    \end{align*}
    Recalling that \( \norm{(x^+, y^+)}_2 \geq \alpha \) shows that
    teleportation comes within a small constant factor of the diameter of the
    sub-level set.
\end{proof}

\subsection{Convergence under Hessian Stability: Proofs}\label{app:stability-convergence}

\begin{lemma}\label{lemma:stability-bounds}
    If \( f \) satisfies \( (\tL, \tmu) \)-Hessian stability, then
    minimizing on both sides of \cref{eq:stability-lower-bound} implies
    \begin{equation}\label{eq:stability-pl}
        f(\xopt) \geq f(x) - \frac{c}{2} \norm{\grad(x)}_{\nabla^2 f(x)^{-1}}.
    \end{equation}
\end{lemma}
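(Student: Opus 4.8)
The plan is to read \cref{eq:stability-lower-bound} as a global quadratic minorant of $f$ and then minimize both sides. Fix $x \in \calQ$ and abbreviate $H = \nabla^2 f(x)$ and $g = \grad(x)$, and define the quadratic model
$\phi(y) = f(x) + \abr{g, y - x} + \frac{\tmu}{2}\norm{y - x}^2_{H}$.
Hessian stability gives $f(y) \geq \phi(y)$, so passing to the infimum over $y$ on both sides yields $\inf_y f(y) \geq \inf_y \phi(y)$; since $\xopt$ is a global minimizer, the left side equals $f(\xopt)$. (Equivalently, and more explicitly: apply the lower bound at $y = \xopt$ to get $f(\xopt) \geq \phi(\xopt)$, then bound $\phi(\xopt) \geq \min_y \phi(y)$.) Either way the lemma reduces to evaluating the unconstrained minimum of the quadratic $\phi$.

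First I would complete the square. Writing $u = y - x$, the model is $\phi = f(x) + \abr{g, u} + \frac{\tmu}{2}\norm{u}^2_{H}$, a convex quadratic because $\tmu > 0$ and $H$ is positive definite. The stationarity condition $g + \tmu H u = 0$ gives the unique minimizer $u^\star = -\inv{\tmu} H^{-1} g$. Substituting back, the linear term contributes $-\inv{\tmu}\norm{g}^2_{H^{-1}}$ and the quadratic term contributes $+\frac{1}{2\tmu}\norm{g}^2_{H^{-1}}$, so $\min_y \phi(y) = f(x) - \frac{1}{2\tmu}\norm{g}^2_{H^{-1}}$. Combining with the reduction above gives $f(\xopt) \geq f(x) - \frac{1}{2\tmu}\norm{\grad(x)}^2_{(\nabla^2 f(x))^{-1}}$, i.e. the claimed inequality with $c = \inv{\tmu}$.

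There is no deep obstacle here, as the argument is a one-line completion of the square; the only points requiring care are bookkeeping. The first is the direction of the inequality: I must make sure the \emph{pointwise} minorant $\phi \le f$ passes through the minimization to produce a \emph{lower} bound on $f(\xopt)$ rather than an upper bound, which is why the minimizer is evaluated on the model side and $f(\xopt)=\min_y f(y)$ is used on the other. The second is well-definedness of $(\nabla^2 f(x))^{-1}$ and of the dual norm $\norm{\cdot}_{(\nabla^2 f(x))^{-1}}$, which I would justify from the stability assumptions (the condition $\nabla^2 f(x)(y-x)\neq 0$ in \cref{def:hessian-stability} together with $\tmu>0$ forces $H$ to be positive definite). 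Finally, matching the units of the quadratic lower bound shows that the right-hand side carries $c = \inv{\tmu}$ and that the $\nabla^2 f(x)^{-1}$-norm should appear squared.
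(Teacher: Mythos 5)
Your proof is correct and follows exactly the route the lemma statement itself prescribes---minimize both sides of \cref{eq:stability-lower-bound} and complete the square in the quadratic model---which is also the content of the result the paper defers to (the paper gives no proof of its own, citing only Lemma 2 of \citet{karimireddy2019newton}). Your bookkeeping additionally pins down the unspecified constant as \( c = \inv{\tmu} \) and correctly observes that the norm in the paper's statement should appear squared, both of which match how \cref{eq:stability-pl} is actually invoked in the proofs of \cref{lemma:stability-teleport-progress} and \cref{lemma:stability-teleport-progress-ls}.
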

\begin{proof}
    See \citet[Lemma 2]{karimireddy2019newton} for proof.
\end{proof}

\stabilityTeleportProgress*
\begin{proof}
    \cref{lemma:diameter-bound} implies that Hessian stability holds
    since the iterates remain inside the initial sub-level set.
    Starting from \cref{eq:stability-upper-bound}, we obtain
    \begin{align*}
        f(\wkk)
         & \leq f(\wk^+) + \abr{\grad(\wk^+), \wkk - \wk^+}
        + \frac{\tL}{2} \norm{\wkk - \wk^+}^2_{\nabla^2 f(\wk^+)} \\
         & \leq f(\wk)
        - \lambda_k \eta \norm{\grad(\wk^+)}^2_{\nabla^2 f(\wk^+)^{-1}}
        + \frac{\tL \lambda_k^2 \eta^2}{2}
        \norm{\grad(\wk^+)}^2_{\nabla^2 f(\wk^+)^{-1}}            \\
         & = f(\wk)
        + \lambda_k \eta \rbr{\frac{\tL \lambda_k \eta}{2} - 1}
        \norm{\grad(\wk^+)}^2_{\nabla^2 f(\wk^+)^{-1}}            \\
         & \leq f(\wk)
        + 2\lambda_k \eta \tmu \rbr{\frac{\tL \lambda_k \eta}{2} - 1}
        \sbr{f(\wk^+) - f(\wopt)}                                 \\
         & = f(\wk)
        + 2\lambda_k \eta \tmu \rbr{\frac{\tL \lambda_k \eta}{2} - 1}
        \sbr{f(\wk) - f(\wopt)},
    \end{align*}
    where we have used \( \lambda_k \leq L \) to guarantee
    \( \frac{\tL \lambda_k \eta}{2} < 1 \) and then applied \cref{eq:stability-pl}.
    The final equality follows since Hessian stability implies \( f \) is
    strictly convex, at which point \cref{lemma:relaxation-solutions} guarantees
    \( f(\wk^+) = f(\wk) \).
    Adding and subtracting \( f(\wopt) \) on both sides yields the final result,
    \begin{equation*}
        f(\wkk) - f(\wopt) \leq
        \rbr{1 + 2\lambda_k \eta \tmu \rbr{\frac{\tL \lambda_k \eta}{2} - 1}}
        \rbr{f(\wk) - f(\wopt)}.
    \end{equation*}
\end{proof}

\stabilityTeleportProgressLS*
\begin{proof}
    Satisfaction of the line-search criterion in \cref{eq:armijo-ls}
    implies \( f(\wkk) \leq f(\wk^+) = f(\wk) \),
    meaning the first GD step after teleporting can only decrease the
    function value and thus \( \wkk \in \calS_0 \).
    So, \( \wk, \wk^+, \) and \( \wkk \) satisfy stability of the Hessian.

    Now we use stability to control the step-size \( \eta \).
    Using \( \grad(\wk^+) = \lambda_k \nabla^2 f(\wk^+)^{-1} \grad(\wk^+) \)
    from teleportation optimality conditions, the line-search criterion
    is equivalent to,
    \[
        f(\wkk)
        \leq f(\wk^+)
        - \frac{\lambda_k \eta}{2}\norm{\grad(\wk^+)}_{\nabla^2 f(\wk^+)^{-1}}^2.
    \]
    At the same time, Hessian stability implies
    \begin{align*}
        f(\wkk)
         & \leq f(\wk^+) + \abr{\grad(\wk^+), \wkk - \wk^+}
        + \frac{\tL}{2} \norm{\wkk - \wk^+}^2_{\nabla^2 f(\wk^+)} \\
         & \leq f(\wk)
        - \lambda_k \eta \norm{\grad(\wk^+)}^2_{\nabla^2 f(\wk^+)^{-1}}
        + \frac{\tL \lambda_k^2 \eta^2}{2}
        \norm{\grad(\wk^+)}^2_{\nabla^2 f(\wk^+)^{-1}}            \\
         & = f(\wk)
        + \lambda_k \eta \rbr{\frac{\tL \lambda_k \eta}{2} - 1}
        \norm{\grad(\wk^+)}^2_{\nabla^2 f(\wk^+)^{-1}}.
    \end{align*}
    Since \( \eta \) is the largest step-size for which the line-search
    criterion holds, we can deduce the following bound on the step-size:
    \begin{align*}
        \frac{\lambda_k\eta}{2}
         & \geq \lambda_k \eta \rbr{1 - \frac{\tL \lambda_k \eta}{2}} \\
        \implies \eta
         & \geq \frac{1}{\tL \lambda_k}.
    \end{align*}
    Substituting this equation back into the line-search criterion and using
    the lower bound from Hessian stability (\cref{eq:stability-pl}) yields
    \begin{align*}
        f(\wkk)
         & \leq f(\wk^+) - \frac{1}{2 \tL}\norm{\grad(\wk^+)}_{\nabla^2 f(\wk^+)^{-1}}^2 \\
         & \leq f(\wk^+) - \frac{\tmu}{\tL} \rbr{f(\wk^+) - f(\wopt)}                    \\
         & = f(\wk) - \frac{\tmu}{\tL} \rbr{f(\wk) - f(\wopt)},
    \end{align*}
    where we used the fact that Hessian stability implies \( f \) is
    strictly convex, at which point \cref{lemma:relaxation-solutions} guarantees
    \( f(\wk^+) = f(\wk) \).
    Adding and subtracting \( f(\wopt) \) on both sides finishes the proof,
    \begin{equation*}
        f(\wkk) - f(\wopt) \leq
        \rbr{1 - \frac{\tmu}{\tL}}
        \rbr{f(\wk) - f(\wopt)}.
    \end{equation*}
\end{proof}

\stabilityTeleportLS*
\begin{proof}
    Satisfaction of the line-search criterion in \cref{eq:armijo-ls} implies \(
    f(\wkk) \leq f(\wk) \) regardless of teleportation.
    As a result, \( \cbr{f(\wk)} \) are non-increasing and Hessian stability
    holds for all iterates.

    Assume \( k \in \calB \).
    Using \cref{lemma:stability-teleport-progress-ls} and unrolling
    \cref{eq:stability-teleport-progress-ls} for the \( b_k \) teleportation
    steps, we obtain
    \begin{align*}
        \delta_{k + b_k}
         & \leq \rbr{1 - \frac{\tmu}{\tL}}^{b_k}
        \delta_{k}                               \\
         & \leq \rbr{1 - \frac{\tmu}{\tL}}^{b_k}
        \sbr{\delta_{k-1} - \frac{\eta_{k-1}}{2} \norm{\grad(w_{k-1})}_2^2},
    \end{align*}
    where the second inequality follows from the line-search.
    Since \( \eta \) is the largest step-size for which \cref{eq:armijo-ls}
    holds and \( f \) is \( L \)-smooth, the descent lemma (\cref{eq:descent-lemma})
    yields
    \[
        \frac{\eta_{k-1}}{2} \geq \eta_{k-1} \rbr{1 - \frac{\eta_{k-1} L}{2}}
        \implies \eta_{k-1} \geq \frac{1}{L}.
    \]
    Combining control of \( \eta_{k-1} \) with our bound on \( \delta_{k+b_k} \)
    yields,
    \[
        \delta_{k+b_k} \leq \rbr{1 - \frac{\tmu}{\tL}}^{b_k}
        \sbr{\delta_{k-1} - \frac{L}{2} \norm{\grad(w_k-1)}_2^2}.
    \]
    Re-arranging this inequality allows us to control the gradient norm
    as follows:
    \begin{equation}\label{eq:multi-step-descent-ls}
        \begin{aligned}
            \norm{\grad(w_{k-1})}_2^2
             & \leq 2 L \sbr{\delta_{k-1} - \delta_{k + b_k}} +
            2L \sbr{1 - \rbr{\frac{\tL}{\tL - \tmu}}^{b_k}}
            \delta_{k+b_k}
        \end{aligned}
    \end{equation}
    Starting now from the optimality gap and using convexity as well
    the Cauchy-Schwarz inequality, we obtain
    \begin{align*}
        \delta_{k-1}
         & \leq \abr{\grad(w_{k-1}), \w_{k-1} - \wopt} \\
         & \leq R \norm{\grad(\w_{k-1})}_2.
    \end{align*}
    Substituting in \cref{eq:multi-step-descent-ls} from above and then
    dividing on both sides by \( \delta_{k-1} \) and \( \delta_{k+b_k} \) gives
    the following:
    \begin{align}
        \delta_{k-1}^2
         & \leq 2R^2 L\rbr{\delta_{k-1} - \delta_{k+b_k}}
        + 2 R^2 L \sbr{1 - \rbr{\frac{\tL}{\tL - \tmu}}^{b_k}} \delta_{k+b_k}       \nonumber \\
        \implies
        1 \leq \frac{\delta_{k-1}}{\delta_{k+b_k}}
         & \leq 2 R^2 L \rbr{\inv{\delta_{k+b_k}} - \inv{\delta_{k-1}}}
        + 2 R^2 L \sbr{1 - \rbr{\frac{\tL}{\tL - \tmu}}^{b_k}} \inv{\delta_{k-1}} \nonumber   \\
        \implies
        \frac{1}{2 R^2 L}
         & \leq \inv{\delta_{k+b}} - \inv{\delta_{k-1}}
        + \sbr{1 - \rbr{\frac{\tL}{\tL - \tmu}}^{b_k}} \inv{\delta_{k-1}} \label{eq:stability-ls-recursion}.
    \end{align}
    Since \( k \in \calB \), it must be that \( k + b_k \not \in \calB \).
    Then, either we have \( k + b_k + 1 \in \calB \) and
    \cref{eq:stability-ls-recursion} holds with the choice of \( k' = k + b_k + 1 \),
    that is,
    \[
        \frac{1}{2 R^2 L}
        \leq \inv{\delta_{k + b_{k'} + b_k + 1}} - \inv{\delta_{k + b_k}}
        + \sbr{1 - \rbr{\frac{\tL}{\tL - \tmu}}^{b}} \inv{\delta_{k + b_k}},
    \]
    or \( k + b_k + 1 \) is also not a teleportation step
    and we obtain the following the simpler
    inequality (see \cref{prop:convex-teleport}):
    \[
        \frac{1}{2 R^2 L}
        \leq \inv{\delta_{k+b_k+1}} - \inv{\delta_{k+b_k}}.
    \]
    In either case, these two inequalities form a telescoping series
    with \cref{eq:stability-ls-recursion}.

    Note that this analysis only holds for \( k \in \calT \) such that
    \( k - 1 \) was a gradient step.
    In this special case where \( k = 0 \in \calT \),
    we must treat the update at \( k \) as a regular gradient step for
    \cref{eq:stability-ls-recursion} to hold.
    However, this causes no problems since every GD step after teleporting
    can also be analyzed as a vanilla GD step using the descent lemma.

    Summing over this telescoping series allows us to obtain
    the following:
    \begin{align*}
        \frac{K - |\calT|}{2 R^2 L}
         & \leq \rbr{\inv{\delta_{K}} - \inv{\delta_0}}
        + \sum_{k \in \calB} \sbr{1 - \rbr{\frac{\tL}{\tL - \tmu}}^{b_k}} \inv{\delta_{k-1}}
        \\
         & \leq \inv{\delta_{K}}
        + \sum_{k \in \calB} \sbr{1 - \rbr{\frac{\tL}{\tL - \tmu}}^{b_k}} \inv{\delta_{k-1}}.
    \end{align*}
    Finally, re-arranging this expression yields
    \[
        \delta_K \leq
        \frac{2 R^2 L}
        {K - |\calT| + 2 R^2 L \sum_{k \in \calB}
            \sbr{\rbr{\frac{\tL}{\tL - \tmu}}^{b_k} - 1} \inv{\delta_{k-1}}}.
    \]
\end{proof}

\stabilityCombinedLS*
\begin{proof}
    Define the constant,
    \[
        \zeta_{k-1} =
        \begin{cases}
            \rbr{\frac{\tL}{\tL - \tmu}}^{b_k}
              & \mbox{if \( k \in \calB \)} \\
            1 & \mbox{otherwise.}
        \end{cases}
    \]
    Using this with \cref{eq:stability-ls-recursion} implies
    \begin{align*}
        \frac{1}{2 R^2 L}
         & \leq \inv{\delta_{k+b_k}} - \inv{\delta_{k-1}}
        + \sbr{1 - \rbr{\frac{\tL}{\tL - \tmu}}^{b_k}} \inv{\delta_{k-1}} \\
         & = \inv{\delta_{k+b_k}} - \frac{\zeta_{k-1}}{\delta_{k-1}}.
    \end{align*}
    Since \( k \in \calB \), it must be that \( k + b_k \not \in \calB \).
    If \( k + b_k + 1 \in \calB \), then
    \cref{eq:stability-ls-recursion} also holds with the choice of \( k' = k + b_k + 1 \)
    and
    \[
        \frac{1}{2 R^2 L}
        \leq \inv{\delta_{k + b_{k'} + b_k + 1}} - \frac{\zeta_{k+b_k}}{\delta_{k + b_k}}.
    \]
    On the other hand, if \( k + b_k + 1 \) is also not a teleportation step, then
    we have (see \cref{prop:convex-teleport}):
    \[
        \frac{1}{2 R^2 L}
        \leq \inv{\delta_{k+b_k+1}} - \frac{\zeta_{k+b_k}}{\delta_{k+b_k}}.
    \]
    Let \( \theta_0 = 1 \) and \( \theta_{k+b_k} = \theta_{k-1} / \zeta_{k-1} \).
    We must choose the next \( \theta \) to do a weighted telescoping of these equations.
    If \( k + b_k + 1 \in \calT \), then set \( \theta_{k+b_{k'}+b_k+1} = \theta_{k + b_k} / \zeta_{k+b_k}  \).
    On the other hand, if \( k + b_k + 1 \not \in \calB \),
    then we must set \( \theta_{k+b_k+1} = \theta_{k+b_k} / \zeta_{k+b_k} \).
    In either case, weighting the telescoping series with the \( \theta_i \)'s
    and summing gives the following:
    \begin{align*}
        \frac{\theta_{k+b_k}}{2 R^2 L}
         & \leq \frac{\theta_{k+b_k}}{\delta_{k+b_k}} - \frac{\theta_{k-1}}{\delta_{k-1}} \\
        \implies
        \frac{\sum_{k \not \in \calT} \theta_{k}}{2 R^2 L}
         & \leq \frac{\theta_{K}}{\delta_{K}} - \frac{\theta_{0}}{\delta_{0}}             \\
         & \leq \frac{\theta_{K}}{\delta_{K}}.
    \end{align*}
    Re-arranging this equation gives
    \[
        \delta_K
        \leq \frac{2 R^2 L}{\sum_{k \not \in \calT, k > 0} \theta_{k} / \theta_{K}}.
    \]
    To determine the rate of convergence, we must compute
    \( \theta_{k} / \theta_{K} \) for each \( k \not \in \calT \).

    By construction, the weight \( \theta_{k} \) satisfies,
    \begin{align*}
        \theta_{k}
        = \prod_{i \not \in \calT, i \leq k} \zeta_{i-1}
        = \prod_{i \in \calB, i \leq k}
        \rbr{\frac{\tL - \tmu}{\tL}}^{b_k}
        = \rbr{\frac{\tL - \tmu}{\tL}}^{|\calT_i|},
    \end{align*}
    where \( \calT_i = \cbr{ i \in \calT : i \leq k} \).
    Let \( n_k = (|\calT| - |\calT_k|) \), which is the number of teleport
    steps  after iteration \( k \).
    In this notation, we have
    \( \theta_{k} / \theta_{K} = \rbr{\frac{\tL}{\tL - \tmu}}^{n_k}, \)
    meaning our final convergence rate is given by,
    \[
        \delta_K
        \leq \frac{2 R^2 L}{\sum_{k \not \in \calT, k > 0}
            \rbr{\frac{\tL}{\tL - \tmu}}^{n_k}}.
    \]
\end{proof}

\begin{corollary}\label{cor:alternating-steps-ls}
    Consider the setting of \cref{thm:stability-combined-ls}.
    Suppose \( K \) is even and that we teleport every-other iteration
    starting with \( k = 1 \).
    Then, GD with teleportation converges as
    \begin{equation}
        \delta_K
        \leq \frac{2 R^2 L (\tL - \tmu)}{\tmu \sbr{\rbr{\frac{\tL}{\tL - \tmu}}^{K/2} - 1}}.
    \end{equation}
    Moreover, this is strictly faster than GD without teleportation if
    \begin{equation}
        K >  \log\rbr{\frac{(\tL - \tmu)K + \tmu}{\tmu}} / \log\rbr{\frac{\tL}{\tL - \tmu}}.
    \end{equation}
\end{corollary}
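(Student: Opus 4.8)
The plan is to derive both claims by specializing \cref{thm:stability-combined-ls} to the fixed schedule $\calT = \cbr{1, 3, 5, \ldots, K-1}$. First I would record the elementary facts about this schedule: since $K$ is even, teleportation occurs at every odd index up to $K-1$, so $0 \not\in \calT$ (meeting the theorem's hypothesis) and $\abs{\calT} = K/2$; the non-teleportation indices are then exactly the even numbers $\cbr{0, 2, 4, \ldots, K}$, with the telescoping argument of the theorem effectively using only those with $k > 0$.

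The crux of the first part is evaluating the denominator $\sum_{k \not\in \calT}\rbr{\tL/(\tL - \tmu)}^{n_k}$. I would compute $n_k = \abs{\cbr{i \in \calT : i > k}}$ directly: as the non-teleportation index decreases through $K, K-2, \ldots, 2$, exactly one additional teleportation step falls to its right at each step, so $n_k$ runs through the consecutive values $0, 1, \ldots, K/2 - 1$. Writing $r = \tL/(\tL - \tmu)$, the sum therefore collapses to the finite geometric series $\sum_{m=0}^{K/2 - 1} r^m = (r^{K/2} - 1)/(r - 1)$. The one simplification to carry out is $r - 1 = \tL/(\tL - \tmu) - 1 = \tmu/(\tL - \tmu)$; substituting this together with the closed-form geometric sum back into the theorem's bound and simplifying yields the claimed rate, which decays geometrically in $K$ and is therefore linear.

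For the second claim I would compare this bound against the standard convex GD guarantee $\delta_K \leq 2 R^2 L / K$ obtained from \cref{prop:convex-teleport} at the step-size $\eta = 1/L$. Requiring the teleportation bound to be strictly smaller gives, after clearing the common factor $2 R^2 L$ and cross-multiplying, a condition of the form $\tmu^{-1}\rbr{(\tL - \tmu)K + \tmu} < r^{K/2}$; isolating the geometric term and taking logarithms (legitimate since $r > 1$) produces the stated threshold on $K$ after rearrangement. The point worth stressing is that $K$ appears on both sides, so this is an implicit condition rather than a closed form: because the right-hand side grows only like $\log K$ while the left-hand side is linear in $K$, the inequality holds for every sufficiently large $K$, which is the qualitative message that teleportation wins once $K$ is large.

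The main obstacle is bookkeeping rather than conceptual. Getting $n_k$ exactly right---in particular whether the $k = 0$ term enters the sum, since the theorem's telescoping excludes the initial iterate---is what pins down both the exponent $K/2$ and the leading constant, and the identity relating $r - 1$ to $\tmu/(\tL - \tmu)$ must be tracked consistently so that the ratio $\tmu/(\tL-\tmu)$ lands on the correct side of the fraction. The transcendental comparison in the second part then relies only on the strict monotonicity of $r^{K/2}$, but one must be careful to present the outcome as a sufficient condition on $K$ rather than an explicit bound.
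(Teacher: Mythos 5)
Your proposal is correct and follows essentially the same route as the paper's proof: specialize \cref{thm:stability-combined-ls} to $\calT = \cbr{1, 3, \ldots, K-1}$, observe that $n_k = (K-k)/2$ runs through $0, 1, \ldots, K/2 - 1$ over the even non-teleportation indices, collapse the denominator to a geometric series with ratio $r = \tL/(\tL - \tmu)$ using $r - 1 = \tmu/(\tL-\tmu)$, and then compare against the $2R^2L/K$ rate of standard GD and take logarithms. One shared caveat: carrying out the logarithm step exactly gives $K > 2\log\rbr{((\tL-\tmu)K+\tmu)/\tmu}/\log\rbr{\tL/(\tL-\tmu)}$, with a factor of $2$ that the corollary's statement omits --- the paper's own proof derives this same factor, so your claim of recovering ``the stated threshold'' inherits the paper's discrepancy rather than introducing a new error.
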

\begin{proof}
    Using the fact that \( \calT = \cbr{1, 3, \ldots, K-1} \) implies
    \begin{align*}
        \delta_K
         & \leq \frac{2 R^2 L}{\sum_{i = 2; i \text{ even}}^{K}
        \rbr{\frac{\tL}{\tL - \tmu}}^{(K - i) / 2}}                                          \\
         & = \frac{2 R^2 L}{\sum_{j = 0}^{K/2 - 1}
        \rbr{\frac{\tL}{\tL - \tmu}}^{j}}                                                    \\
         & = \frac{2 R^2 L (\tL - \tmu)}{\tmu \sbr{\rbr{\frac{\tL}{\tL - \tmu}}^{K/2} - 1}}.
    \end{align*}
    This rate is faster than standard GD when
    \begin{align*}
        \frac{\tL - \tmu}{\tmu \sbr{\rbr{\frac{\tL}{\tL - \tmu}}^{K/2} - 1}}
         & < \frac{1}{K}                        \\
        \iff
        \frac{(\tL - \tmu)K + \tmu}{\tmu}
         & < \rbr{\frac{\tL}{\tL - \tmu}}^{K/2} \\
        \iff
        2 \log\rbr{\frac{(\tL - \tmu)K + \tmu}{\tmu}} / \log\rbr{\frac{\tL}{\tL - \tmu}}
         & < K.
    \end{align*}
\end{proof}

\begin{restatable}{theorem}{stabilityTeleport}\label{thm:stability-teleport}
    Suppose \( f \) is \( L \)-smooth, convex, and satisfies Hessian
    stability on \( \calS_0 \).
    Consider any teleportation schedule \( \calT \) and
    let \( M = K - \abs{\calT} \) be the
    number of steps without teleportation.
    Then GD with step-size \( \eta < \frac{2}{L \tL} \)
    converges as,
    \begin{equation}\label{eq:stability-teleport}
        \delta_K \leq
        \frac{2 R^2}
        {\xi M + 2 R^2 \sum_{k \in \calB} \frac{\psi_{k-1}}{\delta_{k-1}}},
    \end{equation}
    where \( \xi = \eta \rbr{2 - L \eta} \),
    \( \psi_{k-1} = \sbr{1 - \sbr{\prod_{i=k}^{k + b_k - 1} \beta_i}}\),
    and \( \beta_i = \rbr{1 + \lambda_i \eta \tmu \rbr{\tL \lambda_i \eta - 2}}^{-1} \).
\end{restatable}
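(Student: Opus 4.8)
The plan is to follow the proof of \cref{thm:stability-teleport-ls} almost verbatim, replacing the line-search progress bound \cref{lemma:stability-teleport-progress-ls} with its fixed-step-size analogue \cref{lemma:stability-teleport-progress}. The one genuinely new feature is that the per-teleport-step contraction factor now depends on the unknown eigenvalue \( \lambda_i \), so the clean factor \( 1 - \tmu/\tL \) is replaced by \( \beta_i^{-1} = 1 + \lambda_i \eta \tmu(\tL \lambda_i \eta - 2) \), which must be carried symbolically through the entire argument rather than collapsed to a closed form.

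First I would dispatch the preliminaries. Since \( \eta < 2/(L\tL) \leq 2/L \) (as \( \tL \geq 1 \), which is forced infinitesimally by the stability upper bound \cref{eq:stability-upper-bound}), \cref{lemma:diameter-bound} guarantees \( \delta_{k+1} \leq \delta_k \), keeps every iterate in \( \calS_0 \), and gives \( \norm{w_k - \wopt}_2 \leq R \), so Hessian stability holds between all relevant pairs of iterates. The same step-size restriction together with \( \lambda_i \leq L \) (recall \( \lambda_i \) is a Hessian eigenvalue) ensures \( \tL \lambda_i \eta < 2 \), so each factor \( \beta_i^{-1} \) is a genuine contraction and \( \beta_i > 1 \).

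Next, for a block beginning at \( k \in \calB \), I would unroll \cref{eq:stability-teleport-progress} across its \( b_k \) teleportation steps to obtain \( \delta_{k+b_k} \leq \big(\prod_{i=k}^{k+b_k-1} \beta_i\big)^{-1} \delta_k \). Because \( 0 \not\in \calT \) and blocks do not overlap, the preceding step \( k-1 \) is an ordinary GD step, so the descent lemma \cref{eq:descent-lemma} with fixed step-size gives \( \delta_k \leq \delta_{k-1} - \frac{\xi}{2} \norm{\grad(w_{k-1})}_2^2 \), where \( \xi = \eta(2 - L\eta) \). Combining the block contraction with this descent inequality to upper bound \( \norm{\grad(w_{k-1})}_2^2 \), then using convexity and Cauchy--Schwarz in the form \( \delta_{k-1} \leq R \norm{\grad(w_{k-1})}_2 \) and dividing by \( \delta_{k-1}\delta_{k+b_k} \), produces the block recursion
\[
    \frac{\xi}{2R^2} \leq \frac{1}{\delta_{k+b_k}} - \frac{1}{\delta_{k-1}} + \frac{\psi_{k-1}}{\delta_{k-1}},
\]
which is the fixed-step counterpart of \cref{eq:stability-ls-recursion} and is precisely where \( \psi_{k-1} = 1 - \prod_{i} \beta_i \) enters. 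For every ordinary GD step \( j \) whose successor does not begin a block, the standard convex recursion \( \frac{\xi}{2R^2} \leq \frac{1}{\delta_{j+1}} - \frac{1}{\delta_j} \) from \cref{prop:convex-teleport} holds instead.

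Finally, I would sum these \( M = K - \abs{\calT} \) recursions. The reciprocal differences telescope --- each block recursion jumps directly from \( \delta_{k-1} \) to \( \delta_{k+b_k} \), skipping the intermediate teleportation iterates, while the plain-GD recursions chain the remaining consecutive gaps --- so the sum collapses to \( \frac{1}{\delta_K} - \frac{1}{\delta_0} \) plus the residual \( \sum_{k \in \calB} \psi_{k-1}/\delta_{k-1} \). Dropping \( -1/\delta_0 \leq 0 \) and rearranging for \( \delta_K \) yields \cref{eq:stability-teleport}. I expect the main obstacle to be the telescoping bookkeeping across the alternating pattern of blocks and single GD steps: one must check that the chain of reciprocal differences closes up cleanly from \( \delta_0 \) to \( \delta_K \), which is exactly why the hypothesis \( 0 \not\in \calT \) is needed so that every block is preceded by a genuine GD step. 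The \( \lambda_i \)-dependence of \( \beta_i \) is then only a matter of transcribing the product rather than a substantive difficulty.
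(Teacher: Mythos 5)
Your proposal is correct and follows essentially the same route as the paper's own proof: the paper, too, repeats the argument of \cref{thm:stability-teleport-ls} with \cref{lemma:stability-teleport-progress} in place of the line-search lemma, unrolls the block contraction \( \prod_i \beta_i^{-1} \), controls \( \norm{\grad(w_{k-1})}_2^2 \) via the descent lemma at the preceding plain GD step together with \( \delta_{k-1} \leq R \norm{\grad(w_{k-1})}_2 \), and telescopes the block recursion against the standard convex recursion from \cref{prop:convex-teleport}. The only blemish is one you share with the paper: since \( \beta_i > 1 \) (you state this correctly, whereas the paper's proof misstates it as \( \beta_k \in (0,1) \)), the quantity \( \psi_{k-1} = 1 - \prod_i \beta_i \) is negative, so the final rearrangement actually produces the denominator \( \xi M - 2 R^2 \sum_{k \in \calB} \psi_{k-1} / \delta_{k-1} \), equivalently \( \xi M + 2 R^2 \sum_{k \in \calB} \bigl( \prod_i \beta_i - 1 \bigr) / \delta_{k-1} \), rather than the plus sign written in \cref{eq:stability-teleport}; this sign slip is inherited from the theorem statement itself and affects your write-up and the paper's proof equally.
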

\begin{proof}
    Since \( \eta < \frac{2}{L} \), \cref{lemma:diameter-bound} implies
    that the iterates --- with or without teleportation steps ---
    remain within the initial sub-level set.
    As a result, Hessian stability holds for all iterates.

    Assume \( k \in \calB \) and let
    \[
        \beta_k^{-1} = \rbr{1 + \lambda_k \eta \tmu \rbr{\tL \lambda_k \eta - 2}}.
    \]
    Since \( \lambda_k \leq L \) and \( \eta < \frac{2}{L \tL} \),
    we know that \( \beta_k \in \rbr{0, 1} \).
    Using \cref{lemma:stability-teleport-progress} and unrolling
    \cref{eq:stability-teleport-progress} for the \( b_k \) teleportation
    steps, we obtain
    \begin{align*}
        \delta_{k + b_k}
         & \leq
        \sbr{\prod_{i=k}^{k + b_k - 1} \beta_i^{-1}}
        \delta_{k}                                                                    \\
         & \leq \sbr{\prod_{i=k}^{k + b_k - 1} \beta_i^{-1}}
        \sbr{\delta_{k-1} - \eta\rbr{1 - \frac{L \eta}{2}} \norm{\grad(w_{k-1})}_2^2} \\
         & =  \sbr{\prod_{i=k}^{k + b_k - 1} \beta_i^{-1}}
        \sbr{\delta_{k-1} - \frac{\xi}{2} \norm{\grad(w_{k-1})}_2^2},
    \end{align*}
    where the second inequality follows from the descent lemma (\cref{eq:descent-lemma})
    and \( \xi = \eta \rbr{2 - L \eta} \).
    Re-arranging this inequality allows us to control the gradient norm
    as follows:
    \begin{equation}\label{eq:multi-step-descent}
        \begin{aligned}
            \norm{\grad(w_{k-1})}_2^2
             & \leq \frac{2}{\xi} \sbr{\delta_{k-1} - \delta_{k + b_k}} +
            \frac{2}{\xi} \sbr{1 - \sbr{\prod_{i=k}^{k + b_k - 1} \beta_i}}
            \delta_{k+b_k}
        \end{aligned}
    \end{equation}
    Starting now from the optimality gap and using convexity as well
    the Cauchy-Schwarz inequality, we obtain
    \begin{align*}
        \delta_{k-1}
         & \leq \abr{\grad(w_{k-1}), \w_{k-1} - \wopt} \\
         & \leq R \norm{\grad(\w_{k-1})}_2.
    \end{align*}
    To simplify the notation, it will be very helpful to define the scalar,
    \[
        \psi_{k-1} = \sbr{1 - \sbr{\prod_{i=k}^{k + b_k - 1} \beta_i}}.
    \]
    Substituting in \cref{eq:multi-step-descent} from above and then
    dividing on both sides by \( \delta_{k-1} \) and \( \delta_{k+b_k} \) gives
    the following:
    \begin{align}
        \delta_{k-1}^2
         & \leq \frac{2R^2}{\xi} \rbr{\delta_{k-1} - \delta_{k+b_k}}
        + \frac{2 R^2}{\xi} \sbr{1 - \sbr{\prod_{i=k}^{k + b_k - 1} \beta_i}}
        \delta_{k+b_k} \nonumber                                                  \\
        \implies
        1 \leq \frac{\delta_{k-1}}{\delta_{k+b_k}}
         & \leq \frac{2 R^2}{\xi} \rbr{\inv{\delta_{k+b_k}} - \inv{\delta_{k-1}}}
        + \frac{2 R^2}{\xi} \psi_{k-1} \inv{\delta_{k-1}} \nonumber               \\
        \implies
        \frac{\xi}{2 R^2}
         & \leq \inv{\delta_{k+b}} - \inv{\delta_{k-1}}
        + \psi_{k-1} \inv{\delta_{k-1}} \label{eq:stability-recursion}.
    \end{align}
    Since \( k \in \calB \), it must be that \( k + b_k \not \in \calB \).
    Then, either we have \( k + b_k + 1 \in \calB \) and
    \cref{eq:stability-recursion} holds with the choice of \( k' = k + b_k + 1 \),
    that is,
    \[
        \frac{\xi}{2 R^2}
        \leq \inv{\delta_{k + b_{k'} + b_k + 1}} - \inv{\delta_{k + b_k}}
        + \psi_{k + b_k} \inv{\delta_{k + b_k}},
    \]
    or \( k + b_k + 1 \) is also not a teleportation step
    and we obtain the following the simpler
    inequality (see \cref{prop:convex-teleport}):
    \[
        \frac{\xi}{2 R^2 L}
        \leq \inv{\delta_{k+b_k+1}} - \inv{\delta_{k+b_k}}.
    \]
    In either case, these two inequalities form a telescoping series
    with \cref{eq:stability-ls-recursion}.
    Note that this analysis only holds for \( k \in \calT \) such that
    \( k - 1 \) was a gradient step.
    In this special case where \( k = 0 \in \calT \),
    we must treat the update at \( k \) as a regular gradient step for
    \cref{eq:stability-ls-recursion} to hold.
    However, this causes no problems since every GD step after teleporting
    can also be analyzed as a vanilla GD step using the descent lemma.

    Summing over this telescoping series allows us to obtain
    the following:
    \begin{align*}
        \frac{\xi (K - |\calT|)}{2 R^2}
         & \leq \rbr{\inv{\delta_{K}} - \inv{\delta_0}}
        + \sum_{k \in \calB} \psi_{k-1} \inv{\delta_{k-1}}
        \\
         & \leq \inv{\delta_{K}}
        + \sum_{k \in \calB} \psi_{k-1} \inv{\delta_{k-1}}.
    \end{align*}
    Finally, re-arranging this expression yields
    \[
        \delta_K \leq
        \frac{2 R^2}
        {\xi (K - |\calT|) + 2 R^2 \sum_{k \in \calB}
            \psi_{k-1} \inv{\delta_{k-1}}}.
    \]
\end{proof}

\begin{corollary}
    In the setting of \cref{thm:stability-teleport}, if \( \eta = \frac{1}{L \tL} \),
    then
    \[
        \frac{1}{\xi} \leq L \tL, \quad \quad
        \beta_i \leq \frac{L \tL}{L \tL - \lambda_i \tmu}, \quad \quad
        \psi_{k-1} \leq \sbr{1 - \sbr{\prod_{i=k}^{k + b_k - 1} \frac{L \tL}{L \tL - \lambda_i \tmu}}},
    \]
    and GD with teleportation converges at the following rate:
    \begin{equation}\label{eq:stability-teleport-explicit}
        \delta_K \leq
        \frac{2 R^2 L \tL}
        {M + 2 R^2 L \tL \sum_{k \in \calB} \sbr{1 - \sbr{\prod_{i=k}^{k + b_k - 1} \frac{L \tL}{L \tL - \lambda_i \tmu}}} \frac{1}{\delta_{k-1}}}.
    \end{equation}
    Moreover, the dependence on \( L \tL \) can be improved to \( L \) if
    the step-size \( \eta = \frac{1}{L} \) is used whenever \( k \not \in \calT \).
    Finally, if teleportation steps are used every-other iteration, then
    GD with teleportation satisfies
    \begin{equation}\label{eq:stability-teleport-explicit-alternating}
        \delta_K \leq
        \frac{2 R^2 L \tL}
        {K/2 + 2 R^2 L \tL \sum_{k \in \calB} \sbr{\frac{\lambda_i \tmu}{L \tL - \lambda_i \tmu} \frac{1}{\delta_{k-1}}}}.
    \end{equation}
\end{corollary}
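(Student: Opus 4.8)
The plan is to prove the corollary in two stages: first establish the three scalar estimates by substituting \( \eta = \tfrac{1}{L\tL} \) into the definitions of \( \xi \), \( \beta_i \) and \( \psi_{k-1} \) from \cref{thm:stability-teleport}, and then feed these estimates into the master bound \cref{eq:stability-teleport}. Two structural facts do the heavy lifting, and I would record both at the start. First, the teleportation dual \( \lambda_i \) is an eigenvalue of \( \nabla^2 f(\wk^+) \) by the KKT system \cref{eq:kkt-conditions}, so \( L \)-smoothness gives \( \lambda_i \leq L \) (this was already used to prove \cref{lemma:stability-teleport-progress}). Second, the stability constants straddle one, \( \tmu \leq 1 \leq \tL \), which follows from a second-order Taylor expansion of \cref{eq:stability-upper-bound,eq:stability-lower-bound} as \( y \to x \).

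For the step-size estimate I expand \( \xi = \eta\rbr{2 - L\eta} = \tfrac{1}{L\tL}\rbr{2 - \tfrac{1}{\tL}} \); since \( \tL \geq 1 \) implies \( 2 - \tfrac{1}{\tL} \geq 1 \), this gives \( \xi \geq \tfrac{1}{L\tL} \), i.e. \( \tfrac{1}{\xi} \leq L\tL \). For the contraction factor I substitute into \( \beta_i^{-1} = 1 + \lambda_i\eta\tmu\rbr{\tL\lambda_i\eta - 2} \), which collapses to \( \beta_i^{-1} = 1 + \tfrac{\lambda_i\tmu}{L\tL}\rbr{\tfrac{\lambda_i}{L} - 2} \). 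Because \( \lambda_i \leq L \) forces \( \tfrac{\lambda_i}{L} - 2 \leq -1 \), the correction term is at most \( -\tfrac{\lambda_i\tmu}{L\tL} \), so \( \beta_i^{-1} \leq \tfrac{L\tL - \lambda_i\tmu}{L\tL} \); this is the genuine per-step contraction, and it is positive since \( \lambda_i\tmu \leq L \leq L\tL \). The \( \psi_{k-1} \) estimate is then immediate from monotonicity of the product: the bound on each \( \beta_i^{-1} \) controls \( \prod_i \beta_i \) from below and hence \( \psi_{k-1} = 1 - \prod_i \beta_i \) from above, yielding \( \psi_{k-1} \leq 1 - \prod_{i=k}^{k+b_k-1}\tfrac{L\tL}{L\tL-\lambda_i\tmu} \).

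With the three estimates in hand I substitute into \cref{eq:stability-teleport} and lower-bound its denominator term by term — \( \xi M \geq M/(L\tL) \) from the first estimate, with the teleportation sum controlled by the \( \psi_{k-1} \) estimate — then clear the common factor \( L\tL \) to reach \cref{eq:stability-teleport-explicit}. The step I expect to be the main obstacle is the sign bookkeeping around \( \psi_{k-1} \). The genuine contraction factor is \( \beta_i^{-1} \in (0,1) \), so \( \beta_i \geq 1 \) and \( \prod_i\beta_i \geq 1 \); one must therefore be careful that the teleportation terms enter the denominator through the nonnegative quantity \( \prod_i\beta_i - 1 \) accumulated against the weights \( 1/\delta_{k-1} \), so that they tighten rather than loosen the bound. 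Getting this direction right is exactly what turns the estimate into a genuine acceleration over plain GD rather than a vacuous inequality, and it must be reconciled with the signed form in which \( \psi_{k-1} \) appears in \cref{eq:stability-teleport}.

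Finally I would treat the two refinements. To replace the prefactor \( L\tL \) by \( L \), I note that \( \xi \) enters \cref{eq:stability-teleport} only through the descent-lemma progress of the non-teleportation steps (the step at \( k-1 \) before each block and the pure steps counted by \( M \)), whereas the factors \( \beta_i \) depend only on the step-size used inside a block; since the proof of \cref{thm:stability-teleport} is step-wise, I may take \( \eta = \tfrac{1}{L} \) on non-teleportation steps — giving \( \xi = \tfrac{1}{L} \) and \( \tfrac{1}{\xi} = L \) — while keeping \( \eta = \tfrac{1}{L\tL} \) inside blocks to preserve the \( \beta_i \) estimate, which repeats the substitution with \( L \) in place of \( L\tL \). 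For the alternating schedule I set \( b_k = 1 \) for every \( k \in \calB = \calT = \cbr{1,3,\dots,K-1} \), so that \( M = K/2 \) and each single-factor product simplifies via \( \tfrac{L\tL}{L\tL-\lambda_k\tmu} - 1 = \tfrac{\lambda_k\tmu}{L\tL-\lambda_k\tmu} \), which is precisely \cref{eq:stability-teleport-explicit-alternating}.
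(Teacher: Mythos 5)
Your proposal is correct, and it takes the only route available: the paper states this corollary without a separate proof, treating it as an immediate substitution of \( \eta = \frac{1}{L\tL} \) into \cref{thm:stability-teleport}, which is exactly what you carry out. Your two preliminary facts also fill gaps the paper leaves implicit: \( \lambda_i \leq L \) (used but not re-derived in \cref{lemma:stability-teleport-progress}), and \( \tmu \leq 1 \leq \tL \), which is genuinely needed since \( \xi = \frac{1}{L\tL}\rbr{2 - \frac{1}{\tL}} \), so \( \frac{1}{\xi} \leq L\tL \) holds only because \( \tL \geq 1 \).

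One point deserves emphasis: your ``sign bookkeeping'' worry is not merely a potential obstacle --- it is a real inconsistency in the paper's statements, and you resolve it in the correct direction. With the theorem's definition \( \beta_i = \rbr{1 + \lambda_i \eta \tmu \rbr{\tL \lambda_i \eta - 2}}^{-1} \), the contraction factor is \( \beta_i^{-1} \in (0,1) \), so the valid per-step estimate is \( \beta_i \geq \frac{L\tL}{L\tL - \lambda_i \tmu} \geq 1 \) (your inequality), not \( \leq \) as the corollary literally writes; consequently \( \psi_{k-1} = 1 - \prod_i \beta_i \leq 0 \), and the teleportation terms can only enter the denominator through the nonnegative quantities \( \prod_i \beta_i - 1 = -\psi_{k-1} \), exactly as you insist. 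The factors \( \sbr{1 - \prod_i \cdots} \) in \cref{eq:stability-teleport-explicit}, like the \( +\psi_{k-1} \) in the final rearrangement of the proof of \cref{thm:stability-teleport}, are sign typos for \( \sbr{\prod_i \cdots - 1} \); this is confirmed by the line-search analogue \cref{thm:stability-teleport-ls}, whose denominator carries \( \rbr{\frac{\tL}{\tL - \tmu}}^{b_k} - 1 \geq 0 \), and by \cref{eq:stability-teleport-explicit-alternating} itself, whose terms \( \frac{\lambda_i \tmu}{L\tL - \lambda_i \tmu} \) are positive and agree with your single-factor simplification. Your two refinements --- taking \( \eta = \frac{1}{L} \) off the teleportation schedule, which only changes \( \xi \) because the theorem's argument is step-wise, and the alternating-schedule algebra with \( M = K/2 \) --- are likewise correct.
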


\begin{restatable}{theorem}{stabilityTeleportCombined}\label{thm:stability-teleport-combined}
    Suppose \( f \) is \( L \)-smooth, convex, and satisfies Hessian
    stability on \( \calS_0 \).
    Consider any teleportation schedule \( \calT \)
    and let \( M = K - \abs{\calT} \) be the
    number of steps without teleportation.
    Then GD with step-size \( \eta < \frac{2}{L \tL} \)
    converges as,
    \begin{equation}\label{eq:stability-teleport-combined}
        \delta_K
        \leq \frac{2 R^2}{\xi \sum_{k \not \in \calT, k > 0}
            \sbr{\prod_{i \not \in \calT, i > k} \zeta_{i-1}}},
    \end{equation}
    where \( \xi = \eta \rbr{2 - L \eta} \),
    \( \beta_i = \rbr{1 + \lambda_i \eta \tmu \rbr{\tL \lambda_i \eta - 2}}^{-1} \)
    and
    \[
        \zeta_{k-1} =
        \begin{cases}
            \sbr{\prod_{i=k}^{k + b_k - 1} \beta_i}
              & \mbox{if \( k \in \calB \)} \\
            1 & \mbox{otherwise.}
        \end{cases}
    \]
\end{restatable}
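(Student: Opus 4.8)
The plan is to follow the proof of \cref{thm:stability-combined-ls} verbatim in structure, simply swapping its line-search progress estimate for the fixed-step-size estimate of \cref{lemma:stability-teleport-progress}. In fact the analytic core is already available: the proof of \cref{thm:stability-teleport} derives, for each block start \( k \in \calB \), the recursion \cref{eq:stability-recursion},
\[
    \frac{\xi}{2R^2} \leq \inv{\delta_{k+b_k}} - \inv{\delta_{k-1}} + \psi_{k-1}\inv{\delta_{k-1}},
    \qquad \psi_{k-1} = 1 - \prod_{i=k}^{k+b_k-1}\beta_i,
\]
together with the plain-convexity recursion \( \frac{\xi}{2R^2} \leq \inv{\delta_{k+1}} - \inv{\delta_k} \) for an isolated gradient step (\cref{prop:convex-teleport}). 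Writing \( \zeta_{k-1} = \prod_{i=k}^{k+b_k-1}\beta_i \) at a block start and \( \zeta_{k-1}=1 \) otherwise, both recursions collapse to the single clean form
\[
    \frac{\xi}{2R^2} \leq \inv{\delta_{k+b_k}} - \frac{\zeta_{k-1}}{\delta_{k-1}},
\]
the isolated-step case being recovered by setting \( \zeta_{k-1} = 1 \). Because \( \lambda_i \leq L \) and \( \eta < 2/(L\tL) \) force \( \tL\lambda_i\eta < 2 \), the per-step factor \( \beta_i^{-1} = 1 + \lambda_i\eta\tmu(\tL\lambda_i\eta - 2) \) lies in \( (0,1) \); hence every \( \zeta_{k-1} \geq 1 \), which is what makes the weights below positive.

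The one new ingredient relative to \cref{thm:stability-teleport} is that the telescoping is now \emph{weighted}. I would set \( \theta_0 = 1 \) and, marching through the non-teleportation indices in increasing order, define \( \theta_{k+b_k} = \theta_{k-1}/\zeta_{k-1} \) so that each non-teleportation index is tied to its predecessor. Multiplying the recursion at block start \( k \) by \( \theta_{k+b_k} > 0 \) and using the identity \( \theta_{k+b_k}\zeta_{k-1} = \theta_{k-1} \) converts its right-hand side into the exact difference \( \theta_{k+b_k}/\delta_{k+b_k} - \theta_{k-1}/\delta_{k-1} \). Summing the weighted inequalities over all non-teleportation indices yields a telescoping series whose interior terms cancel, leaving
\[
    \frac{\xi}{2R^2}\sum_{k\not\in\calT,\,k>0}\theta_k \;\leq\; \frac{\theta_K}{\delta_K} - \frac{\theta_0}{\delta_0} \;\leq\; \frac{\theta_K}{\delta_K}.
\]
Rearranging gives \( \delta_K \leq 2R^2 / \rbr{\xi \sum_{k\not\in\calT,\,k>0}\theta_k/\theta_K} \).

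It then remains to identify \( \theta_k/\theta_K \) with the quantity in the statement. Unrolling the weight recursion, crossing from non-teleportation index \( k \) up to \( K \) multiplies the ratio by one factor of \( \zeta \) per intervening non-teleportation index, so that \( \theta_k/\theta_K = \prod_{i\not\in\calT,\,i>k}\zeta_{i-1} \), exactly as claimed. I expect the main obstacle to be the bookkeeping of this weighted telescoping at each block boundary, where the index \( k+b_k+1 \) may either begin a new block or be an isolated gradient step — this is precisely the two-case split already carried out in \cref{thm:stability-combined-ls}, and the two resulting inequalities are chained by choosing the next weight as \( \theta/\zeta \) in both cases. As in that theorem, the hypothesis \( 0 \not\in\calT \) is essential: it guarantees a genuine gradient step precedes every block so that \( \delta_{k-1} \) appears in the recursion, and it lets index \( 0 \) anchor the series with base weight \( \theta_0 = 1 \), whose contribution \( -\theta_0/\delta_0 \leq 0 \) is harmlessly dropped.
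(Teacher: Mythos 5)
Your proposal is correct and follows essentially the same route as the paper's own proof: it reuses the fixed-step recursion \cref{eq:stability-recursion} from \cref{thm:stability-teleport}, collapses the block and isolated-step cases into one inequality via the same \( \zeta_{k-1} \) convention, performs the identical weighted telescoping with \( \theta_0 = 1 \) and \( \theta_{k+b_k} = \theta_{k-1}/\zeta_{k-1} \), and identifies \( \theta_k/\theta_K \) with the stated product. (Incidentally, your reading that \( \beta_i^{-1} \in (0,1) \), hence \( \zeta_{k-1} \geq 1 \), is the correct one; the paper's proof of \cref{thm:stability-teleport} asserts \( \beta_k \in (0,1) \), which is a typo.)
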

\begin{proof}
    Recall from the proof of \cref{thm:stability-teleport} that
    \( \xi = \eta \rbr{2 - L \eta} \) and
    \[
        \beta_k^{-1} = \rbr{1 + \lambda_k \eta \tmu \rbr{\tL \lambda_k \eta - 2}}.
    \]
    Using this notation, define the constant
    \[
        \zeta_{k-1} =
        \begin{cases}
            \sbr{\prod_{i=k}^{k + b_k - 1} \beta_i}
              & \mbox{if \( k \in \calB \)} \\
            1 & \mbox{otherwise.}
        \end{cases}
    \]
    Combing this with \cref{eq:stability-recursion} implies
    \begin{align*}
        \frac{\xi}{2 R^2}
         & \leq \inv{\delta_{k+b}} - \inv{\delta_{k-1}}
        + \psi_{k-1} \inv{\delta_{k-1}}                               \\
         & = \inv{\delta_{k+b_k}} - \frac{\zeta_{k-1}}{\delta_{k-1}}.
    \end{align*}
    Since \( k \in \calB \), it must be that \( k + b_k \not \in \calB \).
    If \( k + b_k + 1 \in \calB \), then
    \cref{eq:stability-ls-recursion} also holds with the choice of \( k' = k + b_k + 1 \)
    and
    \[
        \frac{\xi}{2 R^2 L}
        \leq \inv{\delta_{k + b_{k'} + b_k + 1}} - \frac{\zeta_{k+b_k}}{\delta_{k + b_k}}.
    \]
    On the other hand, if \( k + b_k + 1 \) is also not a teleportation step, then
    we have (see \cref{prop:convex-teleport}):
    \[
        \frac{\xi}{2 R^2}
        \leq \inv{\delta_{k+b_k+1}} - \frac{\zeta_{k+b_k}}{\delta_{k+b_k}}.
    \]
    Let \( \theta_0 = 1 \) and \( \theta_{k+b_k} = \theta_{k-1} / \zeta_{k-1} \).
    We must choose the next \( \theta \) to do a weighted telescoping of these equations.
    If \( k + b_k + 1 \in \calT \), then set \( \theta_{k+b_{k'}+b_k+1} = \theta_{k + b_k} / \zeta_{k+b_k}  \).
    On the other hand, if \( k + b_k + 1 \not \in \calB \),
    the we must set \( \theta_{k+b_k+1} = \theta_{k+b_k} / \zeta_{k+b_k} \).
    In either case, weighting the telescoping series with the \( \theta_i \)'s
    and summing gives the following:
    \begin{align*}
        \frac{\xi \theta_{k+b_k}}{2 R^2}
         & \leq \frac{\theta_{k+b_k}}{\delta_{k+b_k}} - \frac{\theta_{k-1}}{\delta_{k-1}} \\
        \implies
        \frac{\sum_{k \not \in \calT} \theta_{k}}{2 R^2 L}
         & \leq \frac{\theta_{K}}{\delta_{K}} - \frac{\theta_{0}}{\delta_{0}}             \\
         & \leq \frac{\theta_{K}}{\delta_{K}}.
    \end{align*}
    Re-arranging this equation gives
    \begin{equation}\label{eq:series-dependence}
        \delta_K
        \leq \frac{2 R^2}{\xi \sum_{k \not \in \calT, k > 0} \theta_{k} / \theta_{K}}.
    \end{equation}
    To determine the rate of convergence, we must compute
    \( \theta_{k} / \theta_{K} \) for each \( k \not \in \calT \).

    By construction, the ratio satisfies
    \begin{align*}
        \theta_{k}
        = \prod_{i \not \in \calT, i \leq k} \zeta_{i-1}
        \sbr{\prod_{i \not \in \calT, i \leq k} \zeta_{i-1}}^{-1}
        = \prod_{i \not \in \calT, i > k} \zeta_{i-1},
    \end{align*}
    meaning our final convergence rate is given by,
    \[
        \delta_K
        \leq \frac{2 R^2}{\xi \sum_{k \not \in \calT, k > 0}
            \sbr{\prod_{i \not \in \calT, i > k} \zeta_{i-1}}}.
    \]
\end{proof}

\begin{proposition}
    In the setting of \cref{thm:stability-teleport}, if \( \eta = \frac{1}{L \tL} \),
    teleportation steps are used every-other iteration, and \( K \geq 4 \tL / \tmu \),
    then GD with teleportation satisfies
    \begin{equation}\label{eq:stability-teleport-explicit-alternating-combined}
        \delta_K
        \leq \frac{4 R^2 L \tL^2}{\tmu} \sbr{\prod_{i=2; i \text{ even}}^{K} \rbr{1 - \frac{\lambda_k \tmu}{L \tL}}}.
    \end{equation}
    Moreover, this rate can be tightened by a factor of \( \tL \) if the step-size
    \( \eta = \frac{1}{L} \) is used at iterations \( k \not \in \calT \).
\end{proposition}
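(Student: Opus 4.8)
The plan is to specialize the weighted-telescoping bound of \cref{thm:stability-teleport-combined} to the fixed step-size \( \eta = 1/(L\tL) \) and the every-other schedule \( \calT = \cbr{1, 3, \ldots, K-1} \), in exactly the way \cref{cor:alternating-steps-ls} specializes \cref{thm:stability-combined-ls} in the line-search case. The one genuinely new difficulty, relative to that corollary, is that the per-teleportation contraction factors \( \beta_i \) now depend on the unknown multipliers \( \lambda_i \), so the denominator is no longer an exact geometric series and must be bounded.

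First I would record the two consequences of the step-size choice already noted in the corollary following \cref{thm:stability-teleport}: since \( \lambda_i \leq L \) by \( L \)-smoothness, \( \eta = 1/(L\tL) \) gives \( 1/\xi \leq L\tL \), and \( \beta_i^{-1} = 1 + \lambda_i \eta \tmu(\tL\lambda_i\eta - 2) \leq 1 - \tmu\lambda_i/(L\tL) =: \gamma_i \), i.e. \( \beta_i \geq 1/\gamma_i \). For the every-other schedule each block has \( b_k = 1 \), the non-teleportation indices are \( 0, 2, \ldots, K \), and \cref{thm:stability-teleport-combined} reduces to \( \delta_K \leq 2R^2/(\xi D) \) with \( D = \sum_{m=1}^{K/2}\prod_{l=m+1}^{K/2}\beta_{2l-1} \).

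Next I would convert \( D \) into a form featuring the target product \( G_{K/2} := \prod_{l=1}^{K/2}\gamma_l \). Writing \( G_m = \prod_{l=1}^m \gamma_l \) (with \( G_0 = 1 \)) and using \( \beta_{2l-1}\geq 1/\gamma_l \) gives \( \prod_{l=m+1}^{K/2}\beta_{2l-1}\geq G_m/G_{K/2} \), hence \( D \geq G_{K/2}^{-1}\sum_{m=1}^{K/2} G_m \). Combined with \( 1/\xi \leq L\tL \) this yields \( \delta_K \leq 2R^2 L\tL\, G_{K/2}\big/\sum_{m=1}^{K/2}G_m \), so it remains only to show \( \sum_{m=1}^{K/2}G_m \geq \tmu/(2\tL) \); this immediately delivers the prefactor \( 4R^2 L\tL^2/\tmu \) and the product \( G_{K/2} \) claimed in \cref{eq:stability-teleport-explicit-alternating-combined}.

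The main obstacle is this last scalar inequality, which is where the hypothesis \( K\geq 4\tL/\tmu \) enters. Because \( \lambda_i \leq L \) forces \( \gamma_l \geq 1 - \tmu/\tL \), I would bound \( G_m \geq (1-\tmu/\tL)^m \) and sum the resulting geometric series, \( \sum_{m=1}^{K/2}(1-\tmu/\tL)^m = \tfrac{\tL-\tmu}{\tmu}\rbr{1 - (1-\tmu/\tL)^{K/2}} \); the hypothesis \( K/2 \geq 2\tL/\tmu \) makes \( (1-\tmu/\tL)^{K/2}\leq e^{-2} \), so the tail is controlled and the sum exceeds \( \tmu/(2\tL) \). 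Chasing these constants cleanly across the whole range \( \tL \geq \tmu \) is the delicate step, and the degenerate near-quadratic regime \( \tL \approx \tmu \) (where \( G_{K/2}\to 0 \) and the teleported step is essentially exact) should be checked separately. Finally, the promised improvement by a factor of \( \tL \) follows by running the non-teleportation steps with \( \eta = 1/L \): this only affects the descent-lemma steps, replacing \( 1/\xi \leq L\tL \) by \( 1/\xi \leq L \) while leaving every \( \beta_i \) (which still uses \( \eta = 1/(L\tL) \)) unchanged, so the prefactor drops to \( 4R^2 L\tL/\tmu \).
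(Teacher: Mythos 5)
Your proposal follows essentially the same route as the paper's own proof: specialize \cref{thm:stability-teleport-combined} to the alternating schedule, use \( \lambda_i \leq L \) and \( \eta = 1/(L\tL) \) to get \( 1/\xi \leq L\tL \) and \( \beta_i^{-1} \leq 1 - \lambda_i\tmu/(L\tL) \), lower-bound the telescoping denominator by a geometric series whose tail is controlled by \( K \geq 4\tL/\tmu \), and obtain the factor-\( \tL \) tightening by running \( \eta = 1/L \) on the non-teleportation steps, exactly as the paper does. The only substantive difference is which quantity receives the geometric bound --- your ratio \( 1-\tmu/\tL \) via the \( G_m \) is in fact slightly tighter than the paper's \( 1-2\tmu/\tL \) via the lower bound \( \beta_i^{-1} \geq 1 - 2\lambda_i\tmu/(L\tL) \) --- and the near-degenerate regime \( \tL \approx \tmu \) that you flag as delicate is equally unhandled in the paper, whose series argument implicitly requires \( \tL > 2\tmu \) (otherwise its ratio is negative and its logarithm manipulation breaks down), so your proposal matches the paper's proof in both substance and in its residual gap.
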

\begin{proof}
    Our choice of step-size implies
    \[
        \frac{1}{\xi} \leq L \tL, \quad \quad
        \beta_i^{-1} = 1 + \frac{\lambda_k \tmu}{L \tL}\rbr{\frac{\lambda_k}{L} - 2}, \quad \quad
        \zeta_{k-1} = \beta_i^{-1},
    \]
    %Now using that
    %\( \beta = \frac{1}{\eta} \rbr{\frac{1}{2 - L \eta}} \)
    %and \( \gamma_k = \rbr{1 + \frac{2\lambda_k \alpha}{c} \rbr{\frac{c \lambda_k \alpha}{2} - 1}}^{-1}, \) and choosing
    %\( \eta = 1 / L, \) gives \( \beta = L \). 
    Similarly, bounding
    \( 0 \leq \lambda_k \leq L \) and \( \eta = \frac{1}{L \tL}  \) gives
    \begin{align} \label{eq:tempzoe8hd9zd}
        1 - \frac{2\tmu}{\tL} \leq 1 - \frac{2\lambda_k \tmu}{L \tL}\leq \;\;
        \zeta_{k-1}^{-1}
         & = 1 + \frac{\lambda_k \tmu}{\tL L} \rbr{\frac{\lambda_k}{L} - 2}
        \;\;\leq 1 - \frac{\lambda_k \tmu}{L \tL}.
    \end{align}
    Using the geometric series we have that
    \begin{align*}
        \sum_{k=2; k \text{ even}}^{K} \theta_k
         & =
        \sum_{k=2; k \text{ even}}^{K} \sbr{\prod_{i=k+2; i \text{ even}}^{K}}
        \gamma_i^{-1}                                                        \\
         & \geq
        \sum_{j=0}^{K/2 - 1}  \rbr{1 - \frac{2\tmu}{\tL}}^j                  \\
        %\geq      \sum_{k=0}^{K-1}\left(1 - \frac{2}{c^2 }\right)^k 
         & = \frac{1 - \left(1 - \frac{2\tmu}{\tL}\right)^{K/2}}{2\tmu/\tL}.
    \end{align*}
    Let \( \kappa = \frac{\tL}{\tmu} \)
    and choose $K$ so that
    \begin{equation}\label{eq:zzerzerezr}
        \left( 1 - \frac{2}{\kappa^{-1}}\right)^{K/2}  \leq \frac{1}{2}
        \quad \iff \quad  \frac{\log(2)}{\log\rbr{1 + \frac{2}{\kappa^{-1} - 2}}} \leq  K/2.
    \end{equation}
    To simplify the above expression we can use,
    \[
        \frac{x}{1+x}\leq \log(1+x) \leq x, \quad \mbox{ for }x > -1,
    \]
    since \( \kappa^{-1} \leq 1 \), with strictness or else one step of GD
    after teleporting solves the optimization problem.
    Using this inequality implies
    \[
        \frac{\log(2)}{\log\rbr{1 + \frac{2}{\kappa^{-1} - 2}}}
        \leq
        2\kappa \leq \frac{K}{2} \quad \iff \quad  \frac{4\tL}{\tmu} \geq K.
    \]
    Using the above to control the geometric series gives,
    \begin{align} \label{eq:tempzoe8hd9zd3}
        \sum_{k=2; k \text{ even}}^{K} \sbr{\prod_{i=k+2; i \text{ even}}^{K}}
        \gamma_i^{-1}
        \geq
        \frac{\tL}{4\tmu},
        %\sum_{k=0}^{K-1}\prod_{i=0}^
        %{k-1} \gamma_i^{-1}  \geq      \sum_{k=0}^{K-1}\left(1 - \frac{2}{c^2 }\right)^k = \frac{1 - \left(1 - \frac{2}{c^2 }\right)^K}{2/c^2} \geq \frac{c^2}{4} & \quad \mbox{ if   } T \geq c^2.
    \end{align}
    if \( K \geq 4\tL/\tmu \).
    Using this together with \cref{eq:series-dependence}
    gives that the convergence rate is at least
    \begin{align*}
        \delta_K
         & \leq \frac{2 R^2}{\xi \sum_{k \not \in \calT, k > 0} \theta_{k} / \theta_{K}}                                 \\
         & \leq \frac{4 R^2 L \tL^2}{\tmu} \sbr{\prod_{i=2; i \text{ even}}^{K} \rbr{1 - \frac{\lambda_k \tmu}{L \tL}}},
    \end{align*}
    if \( K \geq 4 \tL / \tmu \).

    To see that the rate can be tightened, observe that using \( \eta_{k-1} = 1/L \)
    when \( k-1 \not \in \calT \) tightens the value of \( \xi \) to
    \( \frac{1}{\xi} = L \) without affecting the proof.
\end{proof}

\section{EVALUATING THE TELEPORTATION OPERATOR: PROOFS}\label{app:algorithm}

%!TEX root=../main.tex

\updateRule*
\begin{proof}
    We proceed by case analysis.
    Let \[ \bar \x = \argmax_{x \in \R^d} \cbr{\half \log(\norm{\grad(\xk)}_2^2) +
            \abr{\frac{\nabla^2 f(\xk) \nabla f(\xk)}{\norm{\grad(\xk)}_2^2}, x - \xk} -
            \frac{1}{2\rho_t} \norm{\x - \xk}_2^2}.
    \]
    Taking derivatives shows that \( \bar x \) is a gradient ascent step with
    step-size \( \rho_t \),
    \[
        \bar \x
        = \xk + \rho_t \frac{\nabla^2 f(\xk) \nabla f(\xk)}{\norm{\grad(\xk)}_2^2}.
    \]

    \textbf{Case 1}: \( \bar \x \in \tilde \calS_k(\xk) \).
    Then \( \bar x \) satisfies the linearized constraint and \( \xkk = \bar x \)
    must hold.
    We conclude that,
    \[
        \xkk
        = \xk + \rho_t \frac{\nabla^2 f(\xk) \grad(\xk)}{\norm{\grad(\xk)}_2^2}.
    \]
    Substituting this value into the linearization of the half-space
    and using \( \xkk \in \tilde \calS_k(\xk) \), we
    find
    \[
        \rho_t \frac{\grad(\xk)^\top \nabla^2 f(\xk) \grad(\xk)}{\norm{\grad(\xk)}_2^2}
        + f(\xk) - f(\wk) \leq 0.
    \]

    \textbf{Case 2}: \( \bar \x \not \in \tilde \calS_k(\xk) \).
    Then the solution lies on the boundary of the half-space constraint and is
    given by projecting \( \bar \x \) onto \[ \tilde \calL_k(\xk) = \cbr{x :
            f(\xk) + \abr{\grad(\xk), x - \xk} = f(\wk)}.
    \]
    The Lagrangian of this problem is
    \[
        L(x, \lambda) = \half \norm{\x - \bar \x}_2^2
        + \lambda \rbr{f(\xk) + \abr{\grad(\xk), x - \xk} - f(\wk)}.
    \]
    Minimizing over \( x \) yields
    \[
        \xkk = \bar \x - \lambda \grad(\xk),
    \]
    which shows that the dual function is given by,
    \[
        \begin{aligned}
            d(\lambda)
             & = - \half \lambda^2 \norm{\grad(\xk)}_2^2
            + \lambda \rbr{f(\xk) + \abr{\grad(\xk), \bar x - \xk} - f(\wk)}                                                         \\
             & = - \half \lambda^2 \norm{\grad(\xk)}_2^2
            + \lambda \rbr{f(\xk) + \frac{\rho_t}{\norm{\grad(\xk)}^2_2} \abr{\grad(\xk), \nabla^2 f(\xk) \nabla f(\xk)^2} - f(\wk)} \\
        \end{aligned}
    \]
    This is a concave quadratic; maximizing over \( \lambda \) gives the
    following dual solution:
    \[
        \lambda^* = \frac{f(\xk) - f(\wk)
            + \rho_t \abr{\grad(\xk), \nabla^2 f(\xk) \nabla f(\xk)} / \norm{\grad(\xk)}_2^2}{\norm{\grad(\xk)}_2^2}.
    \]
    Note that
    \[
        f(\xk) - f(\wk)
        + \rho_t \abr{\grad(\xk), \nabla^2 f(\xk) \nabla f(\xk)} / \norm{\grad(\xk)}_2^2
        > 0,
    \]
    since the half-space constraint is violated at \( \bar x \).
    Plugging this value back into the expression for \( \xkk \),
    \begin{align*}
        \xkk & =  \xk + \rho_t \frac{\nabla^2 f(\xk) \grad(\xk)}{\norm{\grad(\xk)}_2^2} \\
             & \hspace{4em}
        - \frac{\rho_t \abr{\grad(\xk), \nabla^2 f(\xk) \grad(\xk)} / \norm{\grad(\xk)}_2^2
            + f(\xk) - f(\wk)}{\norm{\grad(\xk)}_2^2} \grad(\xk).
    \end{align*}
    This completes the second case.
    Putting the analysis together, we obtain the desired result:
    \begin{align*}
        \xkk & =  \xk + \rho_t \frac{\nabla^2 f(\xk) \grad(\xk)}{\norm{\grad(\xk)}_2^2} \\
             & \hspace{4em}
        - \rbr{\frac{\rho_t \abr{\grad(\xk), \nabla^2 f(\xk) \grad(\xk)} / \norm{\grad(\xk)}_2^2
                + f(\xk) - f(\wk)}{\norm{\grad(\xk)}_2^2}}_+ \grad(\xk).
    \end{align*}
\end{proof}

\sqpConvergence*
\begin{proof}
    The convergence guarantees follow from the connection between
    \cref{eq:teleportation-sub-problem} and SQP as developed by
    \citet{torrisi2018projected}.
    \cref{eq:update-rule} gives an exact solution to the SQP
    update in \citet[Equation 2.3]{torrisi2018projected} with
    the setting \( H^{(i)} = \frac{1}{\rho_t} \) and the
    parameterization \( d_z = \xkk - \xk \).
    To obtain their guarantee, we must also explicitly form
    the dual variables \( \lambda_t \) associated with the linearized
    constraint, but these are only
    needed to select the step-size \( \alpha_t \) used in the relaxation
    step.
    Note that the values for the dual variables can be computed in closed form
    from the KKT conditions for the linearized constraint.

    Now, we do require \( (\alpha_t, \rho_t) \) to be ``appropriately
    selected'' for convergence guarantees to hold.
    \citet{torrisi2018projected} choose \( \alpha_t \leq 1 \) using the strong
    Wolfe conditions \citep{wolfe1969convergence, wolfe1971convergence} on
    the augmented Lagrangian function (this explains the need to maintain
    explicit values for the dual parameter \( \lambda_t \)).
    Note that the augmented Lagrangian function also requires a penalty
    parameter \( \delta_t \).
    \citet{torrisi2018projected} set this parameter to ensure descent
    in a similar fashion to our choice for the merit function
    in \cref{prop:penalty-strength}.

    No conditions on \( \rho_t > 0 \) are required for global convergence of the
    algorithm except boundedness.
    For local linear convergence, the step-size \( \rho_t \) must satisfy
    \( \rho_t \leq 1 / \lambda_{\text{max}}\rbr{\nabla^2 \calL(x^*, \lambda^*)} \).
    This is a typical assumption for fast local convergence of SQP methods.
    The requirement that \( \grad(\wk) \neq 0 \) is sufficient to guarantee
    that implies LICQ holds, as discussed in \cref{sec:teleport}.
\end{proof}

\begin{lemma}
    \label{lemma:directional-derivative}
    The directional derivative of the merit function satisfies
    \begin{equation}
        \label{eq:directional-derivative}
        D_{\phi}(\xk; d_t)
        \geq \rbr{\rho_t \norm{\nabla^2 f(\xk) \nabla f(x)}_2^2 - \abr{\nabla^2 f(\xk) \nabla
                f(\xk), v_t} } /
        \norm{\grad(\xk)}_2^4 + \gamma \rbr{f(\xk) - f(\wk)}_+.
    \end{equation}
    Moreover, if
    \( \gamma_t > \abr{q_t, v_t} / \norm{\grad(\xk)}_2^4 (f(x_t) - f(w_k))_+ \),
    then \( d_t \) is a ascent direction for
    \( \phi_{\gamma} \).
\end{lemma}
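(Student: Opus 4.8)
The plan is to split the merit function $\phi_\gamma(x) = -\half\norm{\grad(x)}_2^2 + \gamma\rbr{f(x) - f(\wk)}_+$ into its smooth part $-\half\norm{\grad(x)}_2^2$ and the non-smooth penalty $\gamma\rbr{f(x) - f(\wk)}_+$, and to bound the one-sided directional derivative of each separately along the SQP step $d_t = \xkk - \xk$. I would first record, from \cref{prop:update-rule}, the closed form $d_t = \rbr{\rho_t q_t + v_t}/\norm{\grad(\xk)}_2^2$ with $q_t = \nabla^2 f(\xk)\grad(\xk)$; this is the only place the precise structure of the update enters, and it keeps the subsequent algebra explicit.

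For the smooth part I would apply the chain rule. Since $\nabla\sbr{-\half\norm{\grad(\argdot)}_2^2}(\xk) = -\nabla^2 f(\xk)\grad(\xk) = -q_t$, its directional derivative along $d_t$ is $\abr{-q_t, d_t}$, and substituting the formula for $d_t$ produces (after carefully tracking signs) exactly the $-\rho_t\norm{q_t}_2^2$ and $\abr{q_t, v_t}$ contributions over $\norm{\grad(\xk)}_2^2$ that form the numerator of the claimed bound.

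The delicate part is the non-smooth penalty, and here I expect the main obstacle. I would treat $t \mapsto t_+$ through its one-sided derivative and case-split on the sign of the current constraint value $f(\xk) - f(\wk)$: the directional derivative equals $\gamma\abr{\grad(\xk), d_t}$ when $f(\xk) > f(\wk)$, equals $0$ when $f(\xk) < f(\wk)$, and equals $\gamma\rbr{\abr{\grad(\xk), d_t}}_+$ at equality, so in every regime it is at most $\gamma$ times the relevant one-sided derivative. The key step is to control $\abr{\grad(\xk), d_t}$ using feasibility of the SQP step for the linearized constraint: because $\xkk \in \tilde \calS_k(\xk)$, we have $f(\xk) + \abr{\grad(\xk), \xkk - \xk} \leq f(\wk)$, i.e.\ $\abr{\grad(\xk), d_t} \leq -\rbr{f(\xk) - f(\wk)}$. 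This converts the penalty's contribution into $-\gamma\rbr{f(\xk) - f(\wk)}_+$, the final term of the bound. Verifying this linearized feasibility directly from the projection structure of $v_t$ (which is what ties the algebra of the update to the constraint) is the crux; I would read it off the half-space projection established in \cref{prop:update-rule}. Combining the two contributions then yields \cref{eq:directional-derivative}.

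Finally, for the descent-direction claim I would specialize to a violated constraint, $f(\xk) - f(\wk) > 0$, where $\rbr{\argdot}_+$ is differentiable and $\rbr{f(\xk) - f(\wk)}_+ = f(\xk) - f(\wk)$. Imposing $D_\phi(\xk; d_t) < 0$ on the established upper bound and discarding the non-positive term $-\rho_t\norm{q_t}_2^2/\norm{\grad(\xk)}_2^2$ reduces the requirement to $\gamma\rbr{f(\xk) - f(\wk)} > \abr{q_t, v_t}/\norm{\grad(\xk)}_2^2$, which rearranges to the stated threshold $\gamma > \abr{\nabla^2 f(\xk)\grad(\xk), v_t}/\rbr{\norm{\grad(\xk)}_2^2 \rbr{f(\xk) - f(\wk)}}$. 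Hence $\xkk - \xk$ is a descent direction for $\phi_\gamma$ at $\xk$ whenever $\gamma$ exceeds it, completing the argument.
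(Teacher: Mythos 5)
Your proposal is correct in exactly the same sense as the paper's proof and follows essentially the same route: the paper also reduces the smooth contribution to $-\abr{q_t, d_t}$ (via a first-order Taylor expansion of the difference quotient rather than your chain rule plus case-split on $\rbr{\argdot}_+$, a purely cosmetic difference), kills the penalty contribution with the same linearized feasibility $\abr{\grad(\xk), d_t} \leq f(\wk) - f(\xk)$, which holds by construction because $\xkk \in \tilde{\calS}_k(\xk)$, and obtains the $\gamma$-threshold by the same rearrangement after discarding $-\rho_t \norm{q_t}_2^2 / \norm{\grad(\xk)}_2^2 \leq 0$. One caveat you inherit from (rather than resolve beyond) the paper's own final substitution: plugging $d_t = \rbr{\rho_t q_t + v_t}/\norm{\grad(\xk)}_2^2$ into $-\abr{q_t, d_t}$ gives $-\rbr{\rho_t \norm{q_t}_2^2 + \abr{q_t, v_t}}/\norm{\grad(\xk)}_2^2$, so the $\abr{q_t, v_t}$ term actually enters with a minus sign rather than the plus sign appearing in \cref{eq:directional-derivative}, and your assertion that careful sign tracking yields \emph{exactly} the stated numerator reproduces this same discrepancy.
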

\begin{proof}
    Let \( d_t = \xkk - \xk \).
    Define \( \Delta_t(\alpha) = \phi_\gamma(w^{t} + \alpha d_t) -
    \phi_\gamma(\xk) \).
    Using first-order Taylor expansions, we have
    \begin{align*}
        \Delta_t(\alpha)
         & = \half \log(\norm{\nabla f(\xk + \alpha d_t)}_2^2)
        - \gamma \rbr{f(\xk + \alpha d_t) - f(\wk)}_+
        -  \half \log(\norm{\nabla f(\xk)}_2^2) + \gamma \rbr{f(\xk) - f(\wk)}_+          \\
         & = \frac{\alpha}{\norm{\grad(\xk)}_2^2}\abr{\nabla^2 f(\xk) \nabla f(\xk), d_t}
        - \gamma \rbr{f(\xk) + \alpha \abr{\nabla f(\xk), d_t} - f(\wk)}_+
        + \gamma \rbr{f(\xk) - f(\wk)}_+                                                  \\
         & \hspace{5em} + O(\alpha^2)                                                     \\
         & \geq \frac{\alpha}{\norm{\grad(\xk)}_2^2}
        \abr{\nabla^2 f(\xk) \nabla f(\xk), d_t}
        - \gamma (1-\alpha) \rbr{f(\xk) - f(\wk)}_+
        + \gamma \rbr{f(\xk) - f(\wk)}_+
        + O(\alpha^2),                                                                    \\
        \intertext{
            where we have used \( \abr{\nabla f(\xk), d_t} \leq f(\wk) - f(\xk) \)
            since \( \xkk \) satisfies the linearized sub-level set constraint.
            Simplifying, we obtain, }
        \Delta_t(\alpha)
         & \geq \frac{\alpha}{\norm{\grad(\xk)}_2^2} \abr{\nabla^2
            f(\xk) \nabla f(\xk), d_t} + \gamma \alpha \rbr{f(\xk) - f(\wk)}_+ +
        O(\alpha^2).
    \end{align*}
    Dividing both sides by \( \alpha \) and taking the limit as \( \alpha \rightarrow 0 \)
    shows that
    \begin{align*}
        D_{\phi}(\xk; d_t)
         & \geq \frac{1}{\norm{\grad(\xk)}_2^2}\abr{\nabla^2 f(\xk) \nabla f(\xk), d_t} +
        \gamma \rbr{f(\xk) - f(\wk)}_+                                                        \\
         & = \rbr{\rho_t \norm{\nabla^2 f(\xk) \nabla f(x)}_2^2 - \abr{\nabla^2 f(\xk) \nabla
                f(\xk), v_t} } /
        \norm{\grad(\xk)}_2^4 + \gamma \rbr{f(\xk) - f(\wk)}_+.
    \end{align*}
    Substituting the choice of \( \gamma_t \) into the directional derivative yields,
    \begin{align*}
        D_{\phi}(\xk; d_t)
         & \geq \rbr{\rho_t \norm{\nabla^2 f(\xk) \nabla f(x)}_2^2 -
            \abr{\nabla^2 f(\xk) \nabla f(\xk), v_t} } / \norm{\grad(\xk)}_2^4
        + \gamma \rbr{f(\xk) - f(\wk)}_+                                           \\
         & > \rho_t \norm{\nabla^2 f(\xk) \nabla f(x)}_2^2 / \norm{\grad(\xk)}_2^4
        \geq 0.
    \end{align*}
    Since this quantity is strictly positive, \( d_t \) is an ascent direction
    for \( \phi_\gamma \).
\end{proof}

\penaltyStrength*
\begin{proof}
    The first part of the proof follows immediately from
    \cref{lemma:directional-derivative}.
    Substituting the choice of \( \gamma_t \) into the line search condition yields,
    \begin{align*}
        \phi_\gamma(\xkk)
         & \geq \phi_\gamma(\xk) + D_\phi(\xk, \xkk - \xk) / 2                   \\
         & > \phi_\gamma(\xk) + \rho_t \norm{\nabla^2 f(\xk) \nabla f(x)}_2^2
        / (2 \norm{\grad(\xk)}_2^4)
    \end{align*}
    which is straightforward to check in practice.
\end{proof}

\begin{proposition}
    \label{prop:relu-teleport}
    Suppose \( g \) is a loss function and \( f(\w) = g(h_w(X), y), \) where \[
        h_w(X) = \phi(W_l \phi(\ldots W_2 (\phi(W_1 X))) \] is the prediction function
    of a neural network with weights \( w = \rbr{W_l, \ldots, W_1} \), \( l \geq 2
    \).
    If the activation function \( \phi \) is positively homogeneous such that \(
    \lim_{\beta \rightarrow \infty} \phi(\beta) = \infty \) and \( \lim_{\beta
        \rightarrow \infty} \phi'(\beta) < \infty \), then optimal value of the
    sub-level set teleportation problem is unbounded and
    \cref{eq:sublevel-teleport} does not admit a finite solution.
\end{proposition}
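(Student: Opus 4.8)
The plan is to exploit the rescaling symmetry of positively homogeneous networks: I will construct a one-parameter family of weights \( w(\beta) \), \( \beta > 0 \), that all lie on the level set \( \calL_k = \cbr{w : f(w) = f(\wk)} \), and show that the teleportation objective \( \half \norm{\grad(w(\beta))}_2^2 \) diverges as \( \beta \to \infty \). Since each such \( w(\beta) \) is feasible for \cref{eq:sublevel-teleport} (indeed \( f(w(\beta)) = f(\wk) \)), this immediately gives that the optimal value is unbounded; and because \( f \) is \( C^2 \), so that \( \grad \) is finite at every finite \( w \), the supremum \( +\infty \) is never attained, which rules out a finite solution.

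For the construction, fix a non-stationary \( \wk \) and pick an index \( i \in \cbr{1, \ldots, l-1} \) for which the gradient with respect to the next layer is nonzero, \( \nabla_{W_{i+1}} f(\wk) \neq 0 \). I would define \( w(\beta) \) by replacing \( W_i \) with \( \beta W_i \) and \( W_{i+1} \) with \( \beta^{-1} W_{i+1} \), leaving all other weight matrices fixed. Writing \( z_i = \phi(W_i z_{i-1}) \) for the hidden representation and \( a_{i+1} = W_{i+1} z_i \) for the next pre-activation, positive homogeneity \( \phi(\beta z) = \beta \phi(z) \) gives \( z_i \mapsto \beta z_i \), while \( a_{i+1} \) is unchanged because the factor \( \beta \) is cancelled by \( \beta^{-1} W_{i+1} \). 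Consequently the entire forward pass from layer \( i+1 \) onward—hence \( h_{w(\beta)}(X) \) and \( f(w(\beta)) \)—is identical to that of \( \wk \), confirming \( w(\beta) \in \calL_k \) for all \( \beta > 0 \).

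A backpropagation computation then shows that the backpropagated error \( \delta_{i+1} \) at layer \( i+1 \) depends only on the (unchanged) downstream computation and is therefore independent of \( \beta \), whereas \( \nabla_{W_{i+1}} f(w(\beta)) = \delta_{i+1} z_i^\top = \beta\, \nabla_{W_{i+1}} f(\wk) \) scales linearly. Hence
\[
\norm{\grad(w(\beta))}_2 \geq \norm{\nabla_{W_{i+1}} f(w(\beta))}_F = \beta \norm{\nabla_{W_{i+1}} f(\wk)}_F \xrightarrow{\beta \to \infty} \infty,
\]
which establishes the claim. Here the growth condition \( \lim_{\beta\to\infty}\phi(\beta) = \infty \) guarantees the scaled activations genuinely grow (so the construction is non-degenerate), while \( \lim_{\beta\to\infty}\phi'(\beta) < \infty \) keeps the backpropagated slopes bounded so that \( \delta_{i+1} \) remains finite along the path.

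The main work is the bookkeeping for a general depth-\( l \) network: I must verify that the compensating rescaling of the adjacent pair \( (W_i, W_{i+1}) \) leaves the full composition invariant and that the factorization \( \nabla_{W_{i+1}} f = \delta_{i+1} z_i^\top \) isolates exactly the block that scales, with every other term either invariant or irrelevant to the lower bound. The only genuine assumption hidden in the argument is that some layer \( i+1 \in \cbr{2, \ldots, l} \) has \( \nabla_{W_{i+1}} f(\wk) \neq 0 \); this holds whenever \( \wk \) is not stationary with respect to the deeper layers, which is the only case of interest (at a stationary point the objective is already zero). For exactly homogeneous \( \phi \) of degree \( p \in (0,1] \) rather than \( 1 \), the same argument goes through with compensating factor \( \beta^{-p} \) and blow-up rate \( \beta^p \), so I would present the degree-one case and remark that the general one is identical.
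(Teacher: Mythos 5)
Your proposal is correct and takes essentially the same approach as the paper: both exploit the adjacent-layer rescaling symmetry of a positively homogeneous activation to build a path along the level set on which the hidden representation, and hence the gradient with respect to the next layer's weights, diverges. The paper carries this out only in the scalar two-layer case with a parameter \( \alpha \to 0 \) (identical to your construction under \( \beta = 1/\alpha \)) and remarks that it generalizes, whereas you do the general-depth backpropagation bookkeeping explicitly; the non-degeneracy condition you flag (\( \nabla_{W_{i+1}} f(\wk) \neq 0 \)) is likewise implicit in the paper's proof.
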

\begin{proof}
    For simplicity, we prove the result in the scalar case for \( l = 2 \),
    although it immediately generalizes.
    Since \( \phi \) is positively homogeneous, we have \[ w_2 \phi(w_1 x) =
        \alpha w_2 \phi((w_1 / \alpha) x).
    \]
    Let \( \tilde w_2 = \alpha w_2 \) and \( \tilde w_1 = w_1 / \alpha \).
    Define \( v = w_2 \phi(w_1 x) = \tilde w_2 \phi(\tilde w_1 x)\).
    Then gradients with respect to the first and second layers are given by
    \begin{align*}
        \frac{\partial}{\partial \tilde w_1} f(\w)
         & = \frac{\partial}{\partial v} g(v) w_2 \phi'(\tilde w_1 x) x          \\
         & = \alpha \frac{\partial}{\partial v} g(v) w_2 \phi'(w_1 x / \alpha) x \\
        \frac{\partial}{\partial \tilde{w_2}} f(\w)
         & = \frac{\partial}{\partial v} g(v) \phi(\tilde w_1 x)                 \\
         & = \frac{\partial}{\partial v} g(v) \phi(w_1 x / \alpha).
    \end{align*}
    Taking the limit as \( \alpha \rightarrow 0 \), we see that
    \begin{align*}
        \frac{\partial}{\partial \tilde w_1} f(\w) & \rightarrow 0       \\
        \frac{\partial}{\partial \tilde w_2} f(\w) & \rightarrow \infty,
    \end{align*}
    by assumption on \( \phi \).
    Thus, there exists a diverging sequence of points on the level set whose
    gradient norm is also diverging.
    Since the level sets are unbounded, the objective is not coercive and the
    problem is ill-posed.
    This completes the proof.
\end{proof}

\section{EXPERIMENTS}\label{app:experiments}

%!TEX root=../main.tex

\subsection{Additional Experiments}\label{app:additional-experiments}

Now we provide further experimental results which could not be included in the
main paper due to space constraints.

\begin{figure*}[t]
    \centering
    \includegraphics[width=0.98\textwidth]{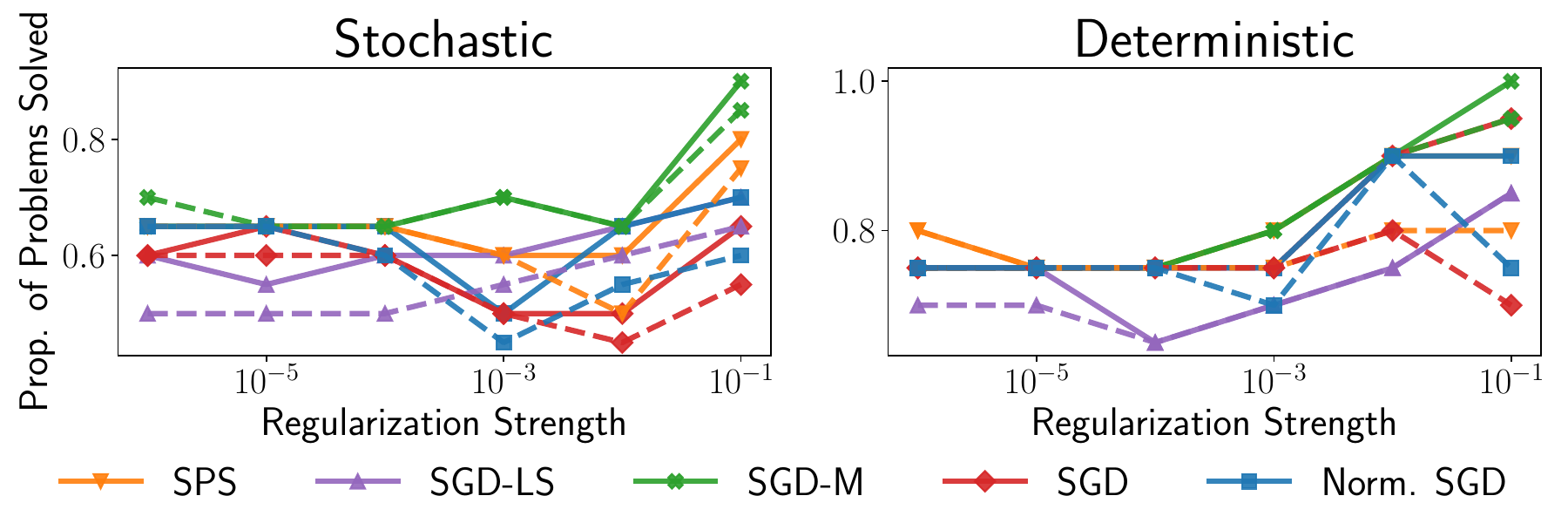}
    \caption{Effect of regularization strength on optimization when training
        three-layer ReLU networks on 20 datasets from the UCI repository.
        Success ratios are computed using the same methodology as in
        \cref{fig:network-profile}.
        Methods using teleportation steps (solid lines) outperform methods
        without (dashed) when regularization is moderate.
    }%
    \label{fig:regularization-ablation}
\end{figure*}

\textbf{Effects of Regularization}:
The teleportation problem may not admit a solution when
training neural networks with homogeneous activations (e.g. ReLU) since
the objective is not coercive and the level sets are not compact
(see \cref{prop:relu-teleport}).
In practice, we ``compactify'' the level sets using regularization.
\cref{fig:regularization-ablation} compares the strength of weight decay
regularization against the success rates of methods with and without
teleportation.
As expected, all methods solve more problems when the regularization is large.
However, while our intuition suggests that methods with teleportation should solve
more problems than standard gradient methods when the regularization is large,
this is not obviously true.
For example, standard SGD-M outperforms SGD-M with teleportation when regularization
is both very large and very small.
This indicates that teleportation may work best with moderate regularization.

\begin{figure}[t]
    \centering
    \includegraphics[width=0.98\textwidth]{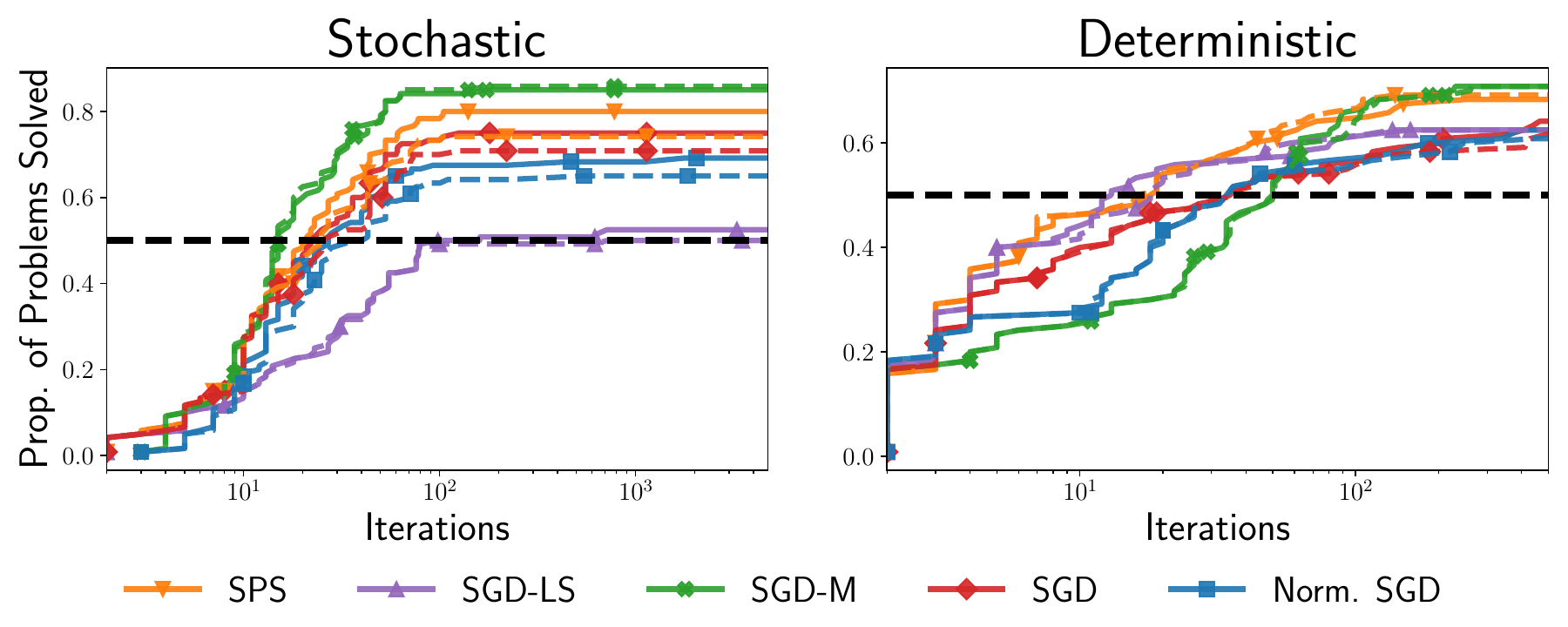}
    \caption{
        Performance profile comparing stochastic and deterministic
        optimization methods with (solid lines) and without (dashed lines)
        sub-level set teleportation for logistic regression models.
        Stochastic methods teleport once every \( 10 \) epochs starting from the fifth
        epoch, while deterministic methods teleport once every \( 50 \) iterations
        starting from iteration five.
        A problem is solved when \( \rbr{f(\wk) - f(\wopt)} / (f(w_0) - f(\wopt)) \leq
        \tau \), where \( f(\wopt) \) is estimated separately and \( \tau \) is a threshold.
    }%
    \label{fig:logistic-profile}
\end{figure}

\textbf{Logistic Regression Performance Profile}:
We generate an alternative version of the performance profile from the
main paper (\cref{fig:network-profile}) using the logistic regression model
class instead of two-layer neural networks.
Logistic regression is non-strongly convex, but satisfies the Hessian
stability condition we use in \cref{sec:stability} to establish fast
convergence rates for GD with teleportation.
Thus, we expect gradient methods with teleportation to perform better than
their counterparts without.

\cref{fig:logistic-profile} shows that including intermittent teleportation
steps improves the performance of every method.
This is particularly true for SGD with momentum (SGD-M), which solves nearly
every problem when augmented with teleportation steps in both the stochastic
and deterministic settings.
The poor performance of SGD with line-search (SGD-LS) in the stochastic
setting is due to the unreliability of step-sizes chosen by searching along
stochastic gradients, which may not be descent directions with respect to the
true gradient.
Typically interpolation is required for SGD-LS to converge when used with
stochastic gradients \citep{vaswani2019painless}.
Interpolation does not hold in our setting since we consider regularized
problems.

\begin{figure}[t]
    \centering
    \includegraphics[width=0.98\textwidth]{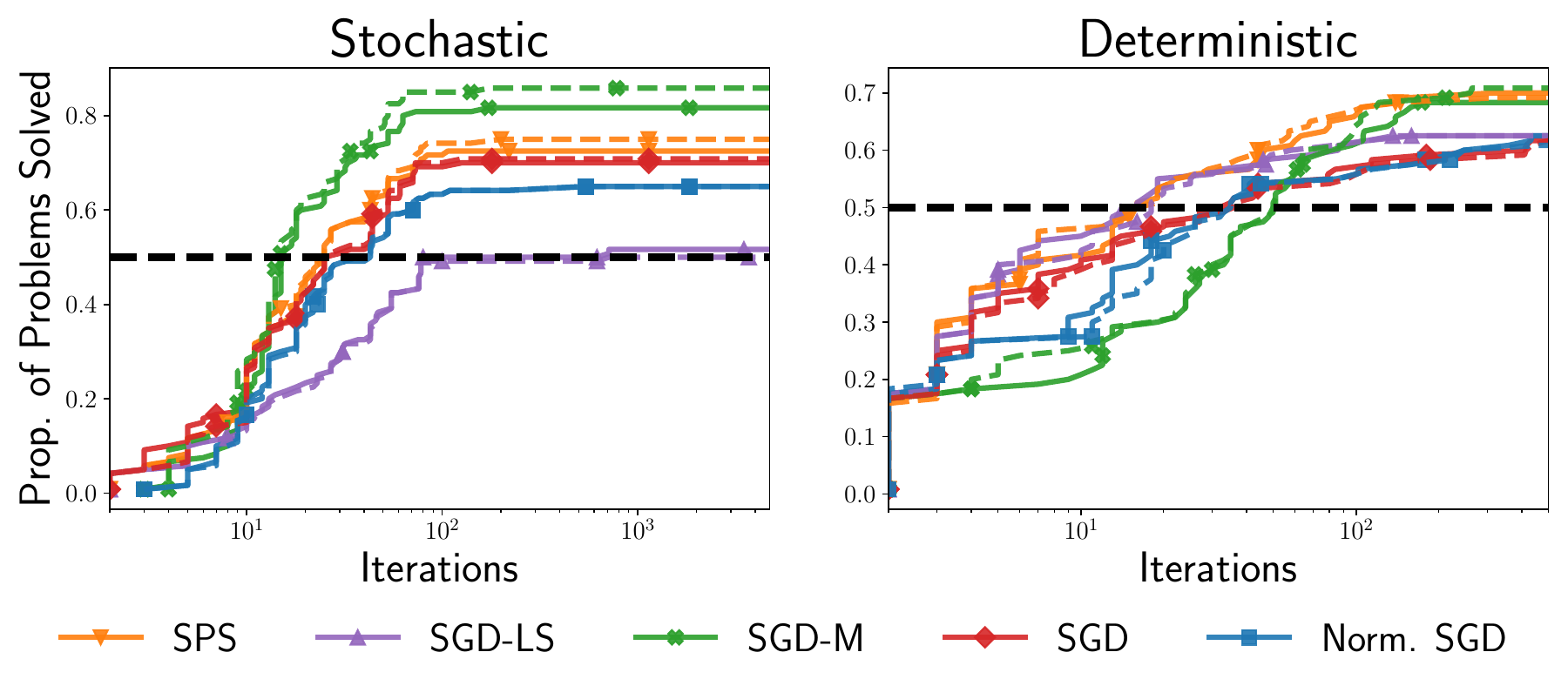}
    \caption{
        Performance profile comparing stochastic and deterministic
        optimization methods with (solid lines) and without (dashed lines)
        initialization by teleportation for training logistic regression models.
        The experimental configuration is the same as in \cref{fig:logistic-profile}.
        Initialization by teleportation harms the performance
        of all methods.
    }%
    \label{fig:logistic-profile-init}
\end{figure}

\begin{figure}[t]
    \centering
    \includegraphics[width=0.98\textwidth]{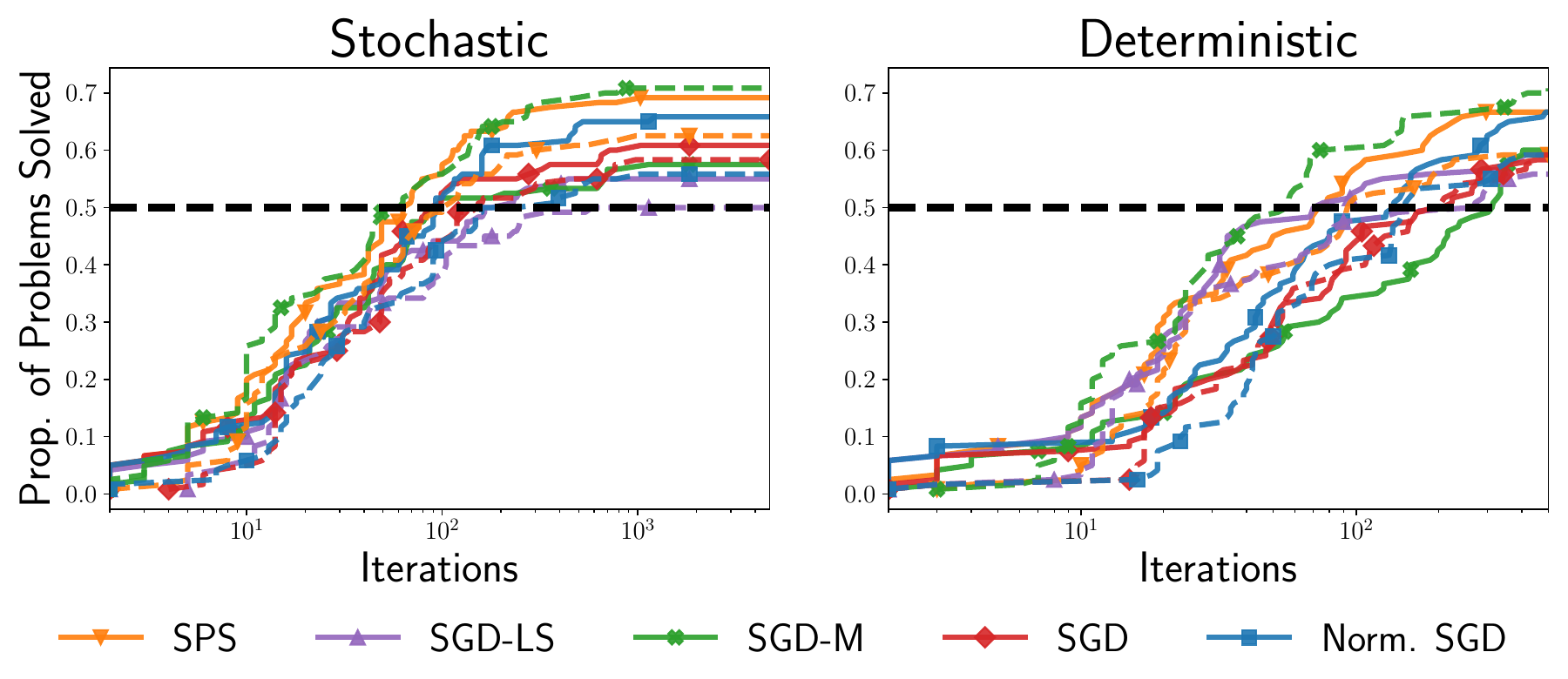}
    \caption{
        Performance profile comparing stochastic and deterministic
        optimization methods with (solid lines) and without (dashed lines)
        initialization by teleportation for training three-layer ReLU networks.
        The experimental configuration is the same as in \cref{fig:logistic-profile}.
        Initialization by teleportation slightly improves the performance of
        methods with adaptive step-sizes (SPS, Norm. SGD), but greatly worsens
        that of SGD-M.
    }%
    \label{fig:network-profile-init}
\end{figure}

\textbf{Initialization by Teleportation}:
One notable aspect of our theory is that it suggests not teleporting at the first
iteration.
This is because our proof technique requires that each block of iterations
with teleportation be unrolled back to a single GD step without teleportation;
this GD step then allows us to control the norm of the gradient using the
descent lemma (\cref{eq:descent-lemma}).
We overcome this limitation by simply treating the GD step at \( w_0 \) as
a standard gradient update regardless of teleportation, but the result is that
we do not benefit theoretically from teleporting at \( w_0 \).
Thus, it is interesting to examine whether very early teleportation steps are
useful for optimization.

Figures~\ref{fig:logistic-profile-init} and~\ref{fig:network-profile-init}
present performance profiles for gradient methods with teleportation at \( w_0
\) only (i.e. initialization by teleportation) and methods without
teleportation.
\cref{fig:logistic-profile-init} shows results for logistic regression,
while \cref{fig:network-profile-init} concerns training three-layer ReLU networks.
Aside from teleportation schedule, the experimental setup is identical to that
for Figures~\ref{fig:logistic-profile} and~\ref{fig:network-profile}.

We find that initialization by teleportation is broadly detrimental for
logistic regression problems, but useful for optimization methods which use
adaptive step-sizes (SPS, Norm. SGD, SGD-LS) when training ReLU networks.
These conclusions are particularly true for training neural networks
in the stochastic setting.
We conjecture this behavior is because of the strong curvature of
optimization problems near the initialization, which is exacerbated by
teleportation.
Optimizers with adaptive step-sizes can leverage this curvature, while SGD-M
and SGD cannot.

\begin{figure}
    \centering
    \includegraphics[width=0.81\textwidth]{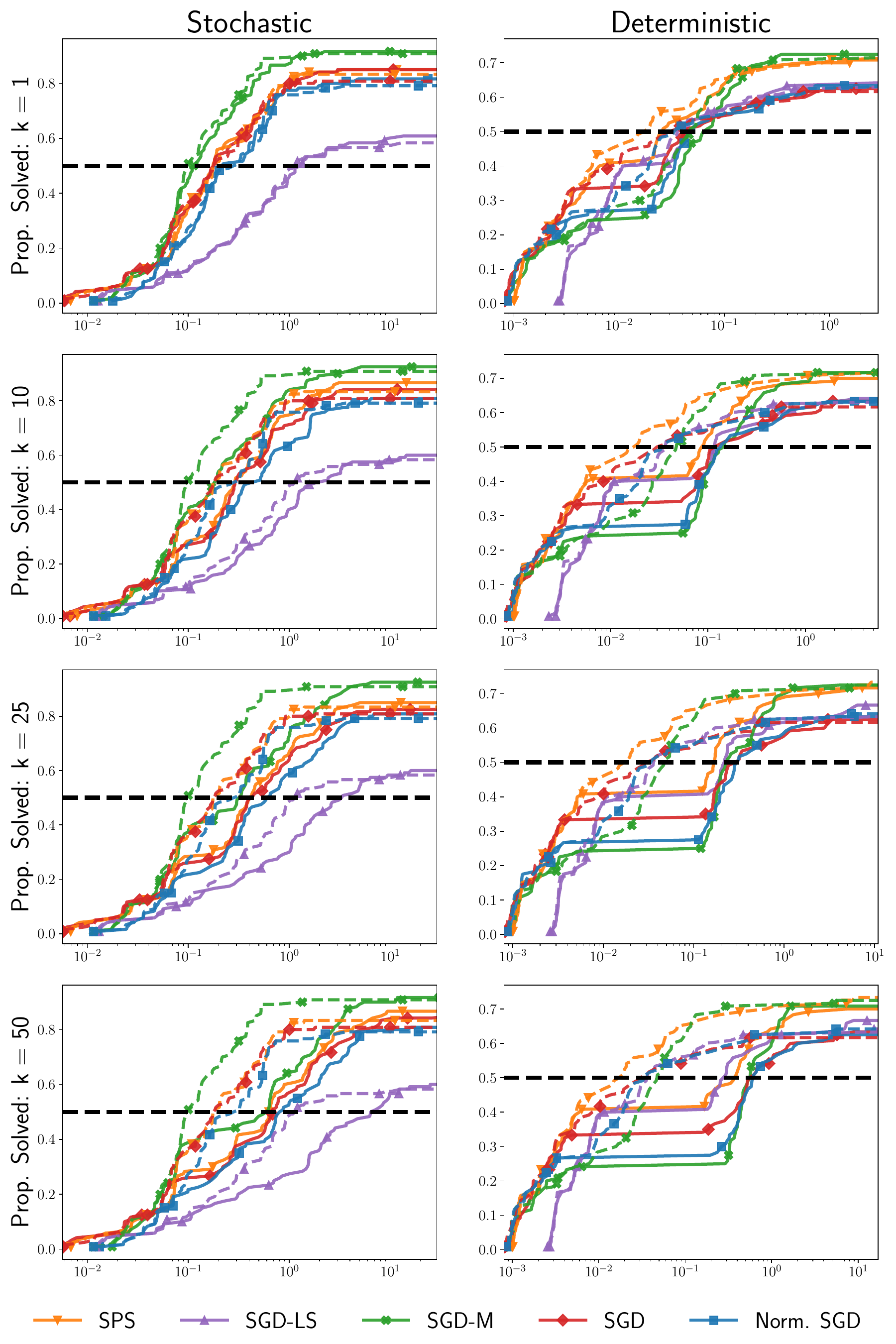}
    \caption{
        Performance profile comparing stochastic and deterministic
        optimization methods with (solid lines) and without (dashed lines)
        teleportation on \( 120 \) logistic regression problems.
        We restrict teleportation methods to \( k \) steps of
        \cref{alg:teleport} to control the time cost and accuracy of
        the teleportation sub-solver.
        Unlike \cref{fig:logistic-profile}, the x-axis shows cumulative time,
        including the time required to teleport, against the proportion of problems
        solved.
        Although teleportation introduces a significant computational overhead,
        methods with teleportation eventually catch-up to those without.
        %In the stochastic setting, methods with teleportation solve significantly more
        %problems than the same methods without teleportation for the same total
        %compute time.
    }%
    \label{fig:logistic-profile-time}
\end{figure}

\begin{figure}
    \centering
    \includegraphics[width=0.81\textwidth]{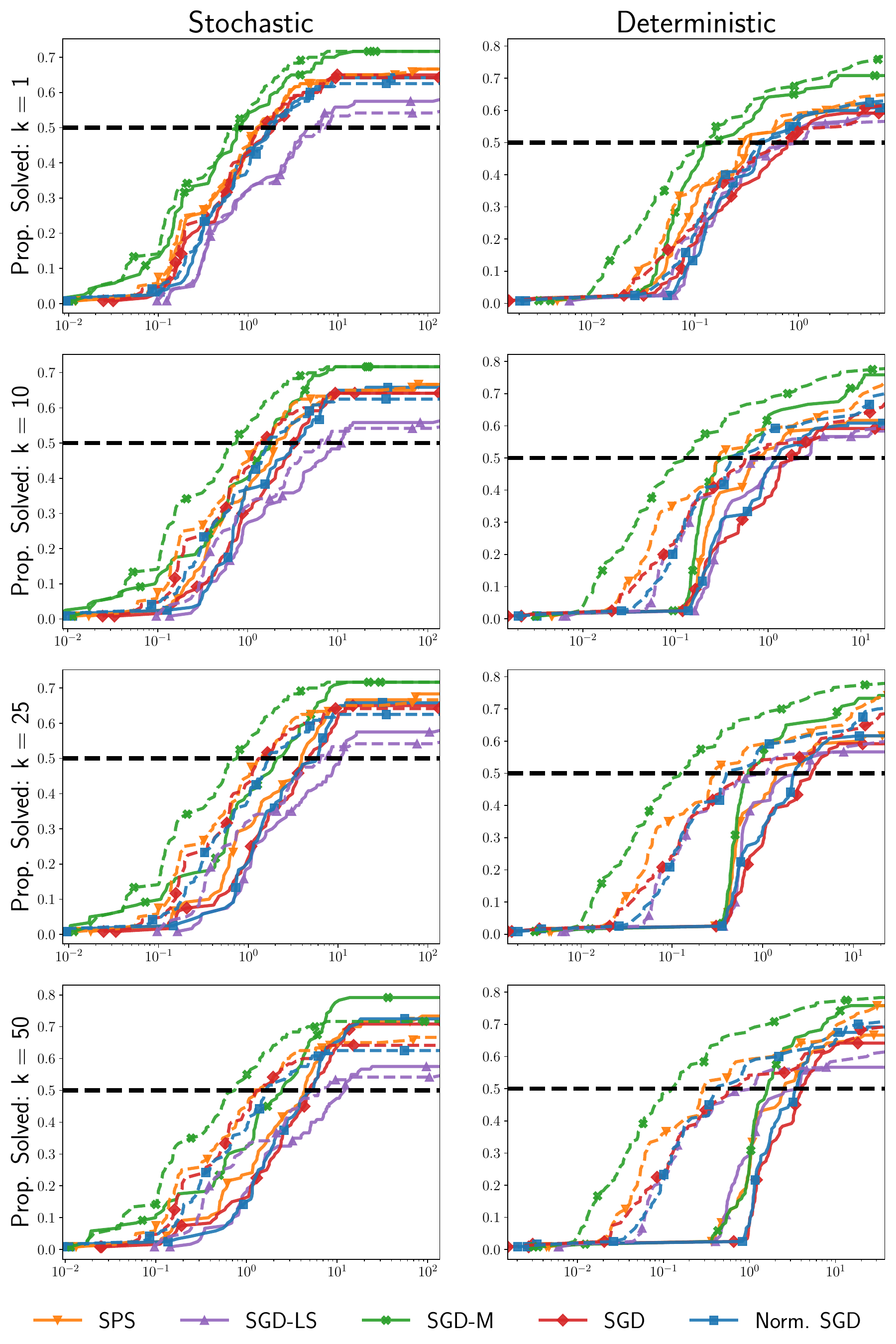}
    \caption{
        Performance profile comparing stochastic and deterministic
        optimization methods with (solid lines) and without (dashed lines)
        teleportation for training three-layer ReLU networks.
        We restrict teleportation methods to \( k \) steps of
        \cref{alg:teleport} to control the time cost and accuracy of
        the teleportation sub-solver.
        The x-axis shows cumulative time as in \cref{fig:logistic-profile-time}.
        We find solving the teleportation sub-problem to high accuracy is critical to
        good performance of gradient methods with teleportation.
    }%
    \label{fig:network-profile-time}
\end{figure}

\textbf{Timed Performance Profiles}:
In practice, evaluating the teleportation oracle introduces a computational
overhead.
The amount of overhead and the degree of accuracy to which the
teleportation sub-problem is solved is determined the number of iterations \( k \)
used by \cref{alg:teleport}.
To measure this overhead, we now compare the performance of optimization
methods with and without teleportation with respect to total compute time
as \( k \) is increased from \( 1 \) to \( 50 \).
\cref{fig:logistic-profile-time} shows a version of \cref{fig:logistic-profile}
(logistic regression) with time along the x-axis, while
\cref{fig:network-profile-time} gives a version of
\cref{fig:network-profile} (three-layer ReLU network).

We see similar trends for both the convex logistic regression problems and
non-convex neural network training problems.
When \( k = 1 \), teleportation has minimal overhead but performance is essentially
unchanged from the default gradient methods.
Increasing \( k \) to \( 10 \) or \( 25 \) increases the time cost, but with
only marginal gains.
However, for \( k = 50 \), teleportation methods exceed or match the performance
of methods without teleportation while using the same time budget.
This trend is particularly noticeable for three-layer ReLU networks
in the stochastic setting.
We conclude that (i) solving the teleportation sub-problem to sufficient accuracy
is critical for good performance and (ii) methods with teleportation are more
effective than standard methods given sufficient time budget.

%For methods with stochastic gradients, teleportation involves a significant
%overhead, but methods with teleportation solve more problems than methods
%without by the end of the time budget.
%Teleportation is less effective in the deterministic setting and methods with
%teleportation are comparable or slightly under-perform those without after
%about \( 10 \) seconds of optimization.
%Overall, we find that teleportation does not introduce an excessive overhead
%for either stochastic or deterministic optimization.

\textbf{Stochastic and Non-Smooth Image Classification}:
We replicate our experiments on MNIST with stochastic gradients
using a batch-size of 128 in \cref{fig:mnist-stochastic}.
The experimental setup is otherwise identical to that for
\cref{fig:mnist-convergence}.
In this stochastic setting, we find almost no difference between methods with
or without teleportation.
Indeed, all methods appear to converge (noisily) to similar performing models.

\begin{figure}[t]
    \centering
    \includegraphics[width=0.98\textwidth]{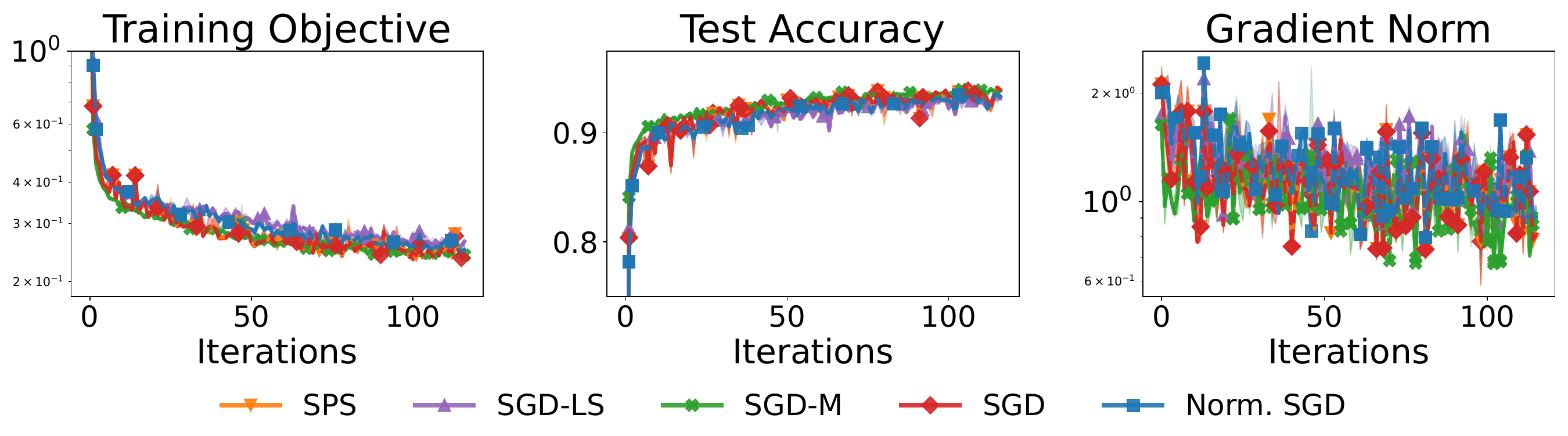}
    \caption{
        Performance of optimizers with (solid) and without (dashed)
        teleportation on MNIST.
        We train a MLP with the soft-plus activation and one hidden layer of size \(
        500 \).
        All methods are run with batch-size of \( 128 \).
        Teleportation does not appear to help or harm optimization.
    }%
    \label{fig:mnist-stochastic}
\end{figure}

We also consider training non-smooth ReLU neural networks on MNIST.
Although we did not find a meaningful difference between non-smooth and smooth
activations in our experiments on the UCI datasets%
\footnote{Methods with
    teleportation performed very slightly worse compared to those without
    teleportation when using the soft-plus activation on the UCI datasets, but the
    overall trend in performance profiles was identical.
},
smooth activation functions appear more important for teleportation on MNIST.
This is consistent with our theory, which requires \( f \) to be
differentiable and \( L \)-smooth.

\cref{fig:mnist-deterministic-non-smooth} shows the results of training
a two-layer ReLU neural network with 500 hidden units.
While SGD and the Polyak step-size (SPS) still perform better with
teleportation, SGD-LS stalls and SGD-M is quite unstable when used with
teleportation.

\begin{figure}[t]
    \centering
    \includegraphics[width=0.98\textwidth]{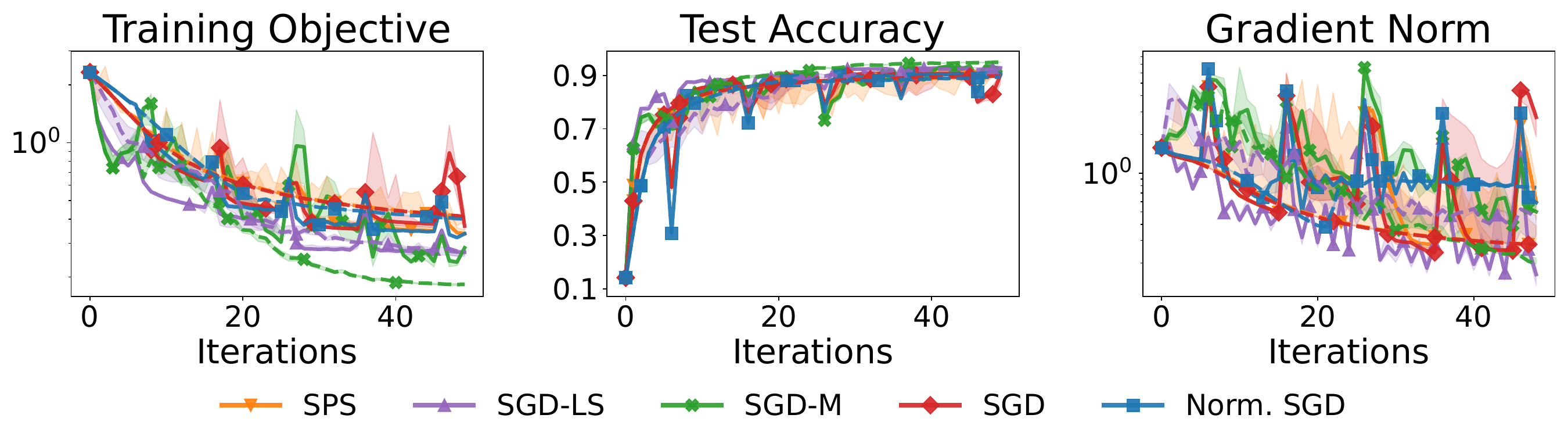}
    \caption{Convergence of deterministic gradient methods for training a
        two-layer ReLU network with 500 hidden units on the MNIST dataset.
        Solid lines show methods with teleportation, while dashed lines show the same
        algorithms without teleportation.
        Unlike when using smooth activations, teleportation has mixed effectiveness.
        While SGD and the Polyak step-size (SPS) still perform better with
        teleportation, SGD-LS stalls and SGD-M is quite unstable when used with
        teleportation.
    }%
    \label{fig:mnist-deterministic-non-smooth}
\end{figure}

\textbf{Full Comparison with Truncated Newton Method}:
Here we include additional details for the experimental comparison between
GD with teleportation at every step and Newton-CG.
In particular, we training curves for the test accuracy and gradient
norm which were omitted from the main paper.
These results further confirm our theoretical findings.

\cref{fig:newton-comparison-logreg} presents full results for three convex
logistic regression problems. While Newton-CG enjoys fast
convergence initially, it stalls on the Hill Valley and Chess datasets.
This is consistent with existing results on truncated Newton methods.
For example, the linear rate proved by \citet{karimireddy2019newton}
only holds under a hard-to-verify assumption on the quality of the Hessian
approximation.
Since Hill Valley has $d = 100 \gg 25$ features, this
assumption likely fails and Newton-CG performs poorly.
In contrast, GD with teleportation shows robust, linear convergence.
This linear convergence rate confirms our theoretical results in
Theorems~\ref{thm:stability-teleport-ls} and~\ref{thm:stability-combined-ls}.

\cref{fig:newton-comparison-network} presents full results for the non-convex
neural network training problem.
We find that Newton-CG fails on all three problems, likely because it is
attracted to any critical point, including saddles and local maxima.
This replicates well-known failure modes of Newton-type methods for non-convex
optimization \citep{xu2020newton}.
In comparison, GD with teleportation still obtains a linear rate.
The robustness of teleportation for non-convex optimization is one of our
primary motivations for this paper.

\begin{figure}
    \centering
    \includegraphics[width=0.8\textwidth]{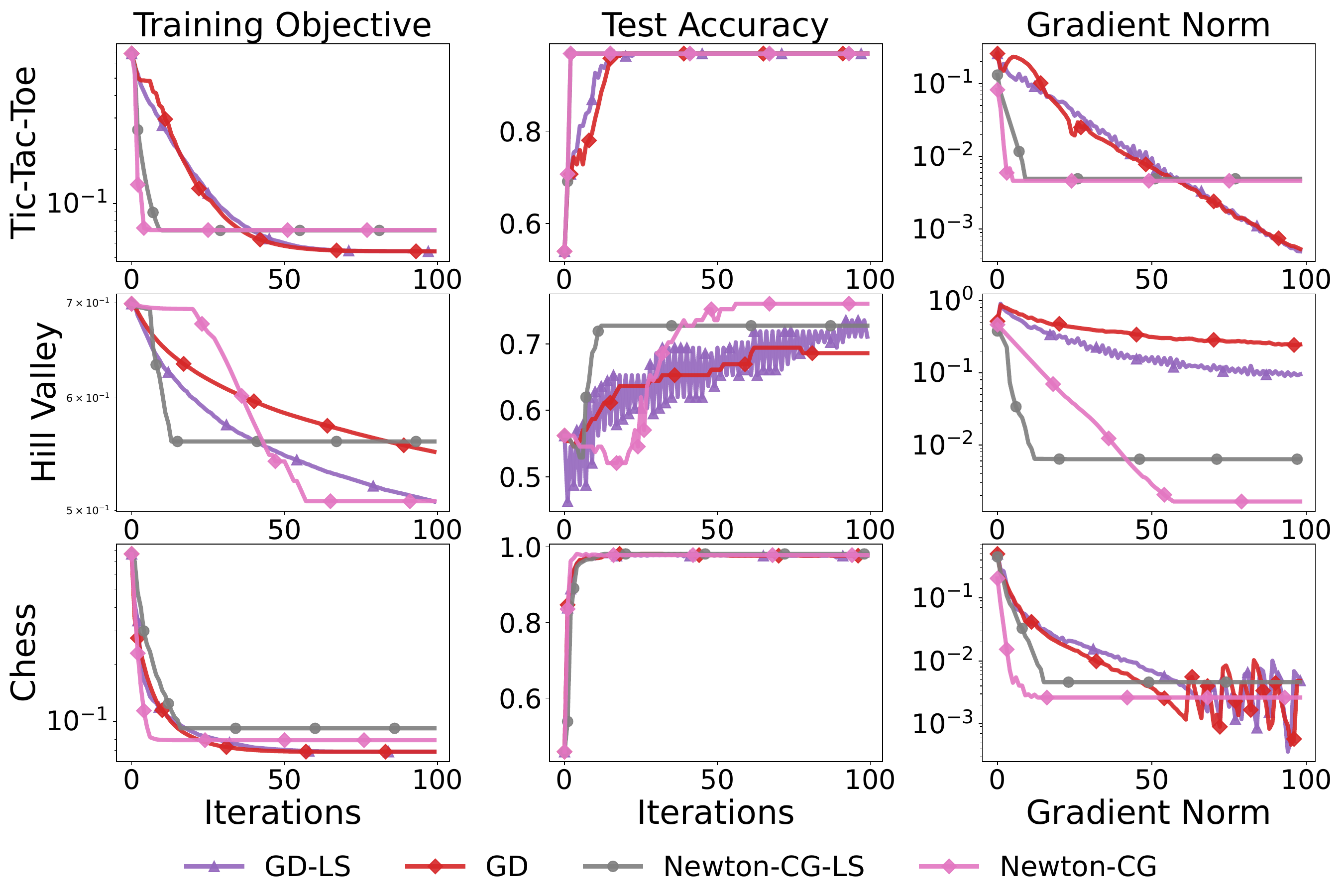}
    \caption{Comparison of inexact Newton and GD with teleportation using tuned
        step-sizes and using line-search (LS) for three logistic regression
        problems satisfying Hessian stability. The inexact Newton step is computed
        using conjugate gradients. For fairness, Newton-CG and GD with teleportation
        are limited to 25 Hessian-vector products at each step. While inexact Newton
        initially converges quickly, it is sensitive to the eigenvalue spectrum
        of the Hessian and convergence stalls on two out of three datasets.}%
    \label{fig:newton-comparison-logreg}%
\end{figure}

\begin{figure}
    \centering
    \includegraphics[width=0.8\textwidth]{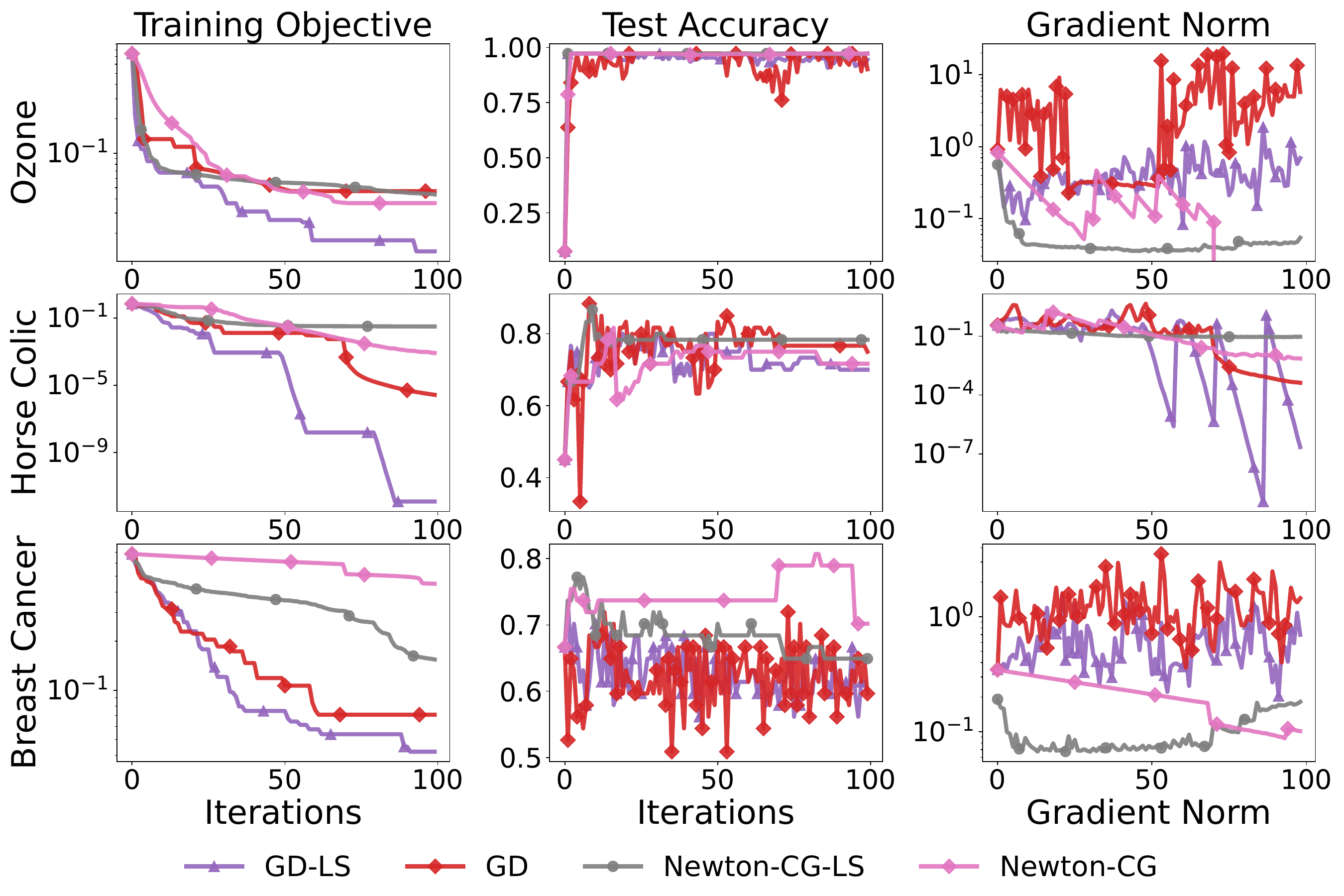}
    \caption{Comparison of inexact Newton and GD with teleportation for
        training two-layer ReLU networks. Both methods are limited to 25
        Hessian-vector products. We find that GD with teleportation converges at a
        fast rate while Newton-CG fails on every problem due to non-convexity.}%
    \label{fig:newton-comparison-network}
\end{figure}

\textbf{Convergence Curves for UCI Datasets}:
Finally, we provide convergence plots for a subset of problems from our
performance profiles on the UCI datasets.
We select these problems to illustrate various points about the effectiveness
of teleportation as an optimization sub-routine.
\cref{fig:uci-convergence-logreg} shows convergence for three logistic
regression problems on the \texttt{Chess}, \texttt{Ionosphere}, and
\texttt{Tic\-Tac\-Toe} datasets.
Note that we use weight decay regularization with strength \( \lambda =
10^{-6} \) and consider deterministic optimization (the plots for stochastic
optimization were very noisy).

We also provide convergence plots for training ReLU networks on the
\texttt{Credit Approval}, \texttt{Pima}, and \texttt{Ringnorm} datasets
(\cref{fig:uci-convergence-network}).
We use weight decay regularization with strength \( \lambda = 0.01 \) and
again consider only deterministic optimization.
All three training problems show a very interesting phenomena where several
methods with teleportation --- notably SGD and Normalized SGD --- actually
take bad steps after teleporting for which the objective increases.
Despite this, they converge to a similar or better final objective value than
methods without teleportation.
On \texttt{Pima}, they appear to convergence to a different local optimum,
implying that teleportation can affect the implicit regularization of
optimization methods.

\begin{figure}[t]
    \centering
    \includegraphics[width=0.98\textwidth]{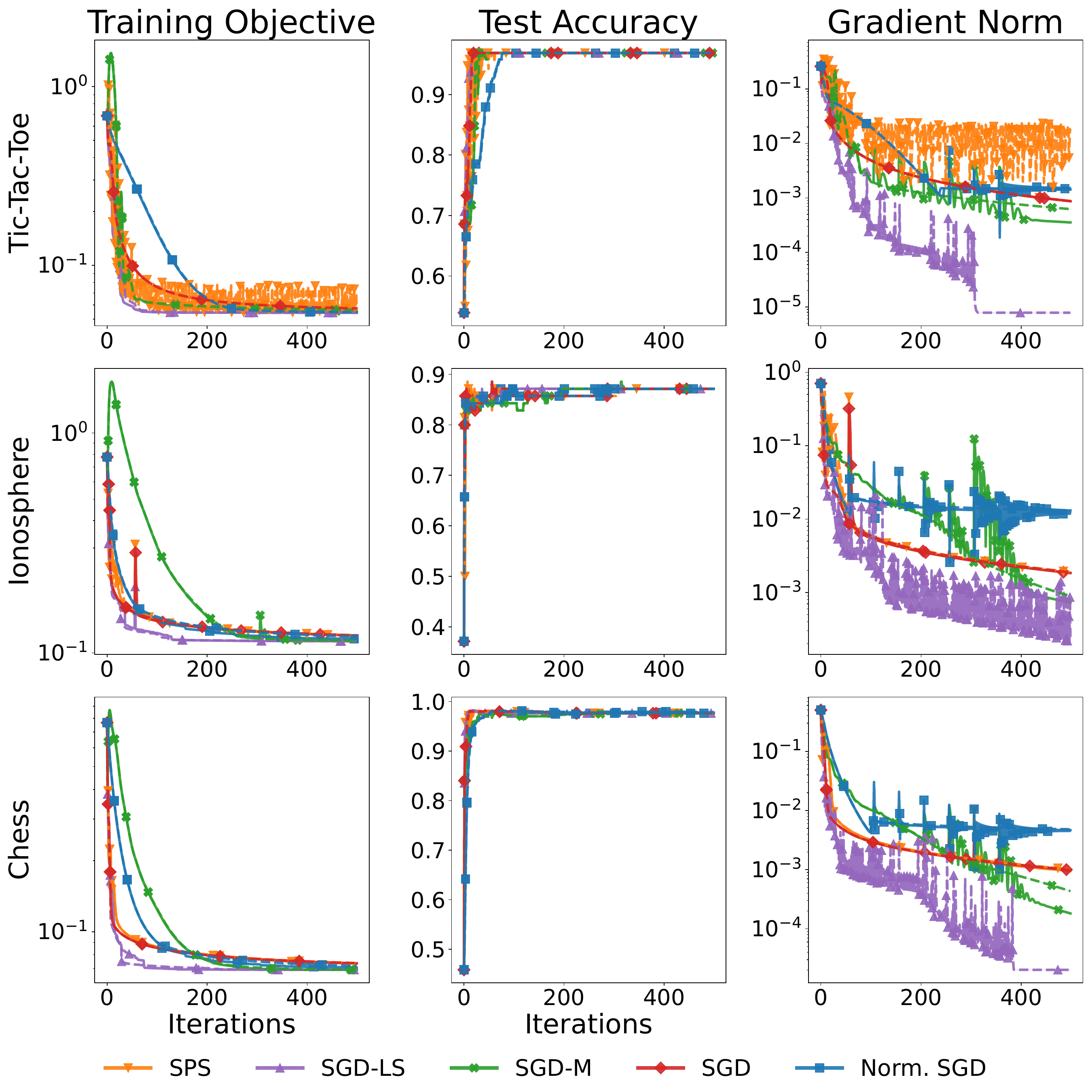}
    \caption{Convergence of deterministic gradient methods for training
        logistic regression models on three datasets from the UCI repository.
        Solid lines show methods with teleportation, while dashed lines show the same
        algorithms without teleportation.
        This figure reproduces exact training curves from \cref{fig:logistic-profile}.
    }%
    \label{fig:uci-convergence-logreg}
\end{figure}

\begin{figure}[t]
    \centering
    \includegraphics[width=0.98\textwidth]{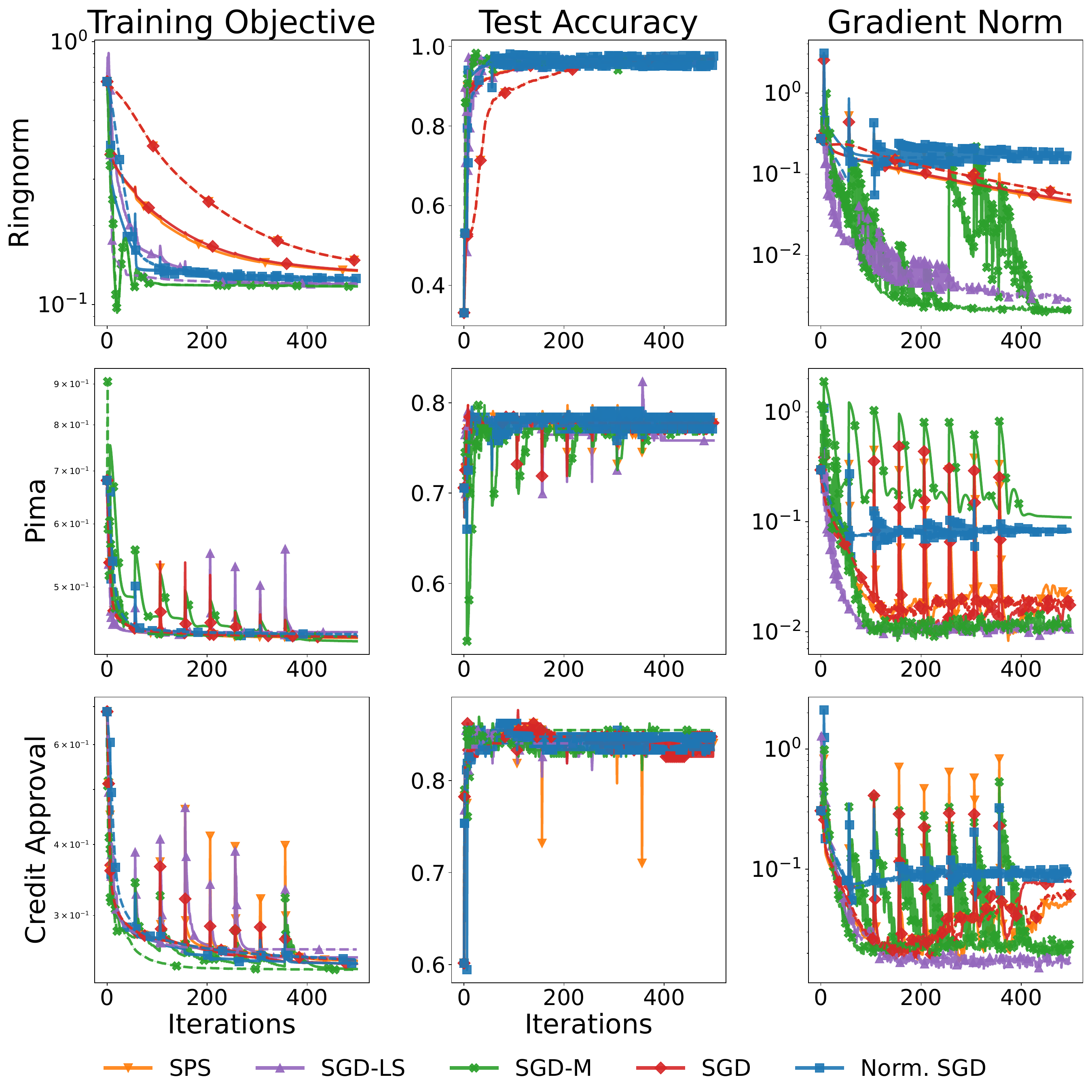}
    \caption{Convergence of deterministic gradient methods for three-layer
        ReLU networks with 100 units each on three datasets from the UCI repository.
        Solid lines show methods with teleportation, while dashed lines show the same
        algorithms without teleportation.
        This figure reproduces exact training curves from \cref{fig:network-profile}.
    }%
    \label{fig:uci-convergence-network}
\end{figure}

\subsection{Experimental Details}\label{app:experiment-details}

In this section we include additional details necessary to replicate our
experiments.
The code used to run the experiments in this paper is publicly available
on GitHub at
\url{https://github.com/aaronpmishkin/teleport}.
We run our experiments using PyTorch \citep{paszke2019pytorch}.
When selecting step-sizes for optimization methods, we perform a grid-search
using the grid \( \cbr{1000, 100, 10, 5, 2, 1, 10^{-1}, 10^{-2}, 10^{-3}, 10^{-4}} \).
All experimental results are shown for three random restarts excepting the
deterministic Newton experiments and the performance profiles, where averaging is not straightforward.
We plot the median and first/third quartiles.
Step-sizes are selected by minimizing the training loss at the end of the last
epoch.

For our teleportation method, we initialize the step-size at \( \rho = 0.1 \)
and use the tolerances \( \epsilon = \delta = 10^{-6} \).
In practice, we scale \( \gamma_t \) by \( 0.1 \) to bias the teleportation
solver towards more aggressive maximization of the gradient norm.
Unless otherwise stated, we limit the teleportation method to a maximum of \(
50 \) iterations, although it can terminate earlier based on the tolerances.
We allow the teleportation method to backtrack at most 25 iterations before returning
a default step-size of \( \rho_t = 10^{-16} \).

For SGD with momentum, we use the momentum parameter \( \beta = 0.9 \) and
dampening parameter \( \mu = 0.9 \).
For SGD-LS, we optimize over relaxation parameters for the Armijo line-search and
choose the best from the set \( \cbr{0.001, 0.01, 0.1, 0.5} \).
To ensure the step-size returned by line-search is always close to the maximum
attainable, at each iteration we increase the step-size until the Armijo
condition fails before backtracking.
We use forward-tracking and back-tracking parameters of \( 1.25 \) and \( 0.8
\), respectively.
We estimate \( f^* \) with zero for SPS.
All neural networks are initialized using the standard Kaiming initialization
\citep{he2015initialization}.

\textbf{Test Functions}:
We use open-source implementations of the Booth and Goldstein-Price functions
\footnote{\url{https://github.com/AxelThevenot/Python_Benchmark_Test_Optimization_Function_Single_Objective}}.
Gradient descent with and without teleportation are run with an Armijo
line-search starting from step-size \( \eta = 1 \).
Newton's method is run with a fixed step-size \( \eta = 0.8 \).

\textbf{Solving the Teleportation Problem}:
We use a two-layer ReLU network with fifty hidden units and soft-plus
activation function.
The strength of weight decay regularization is set at \( \lambda = 1.8 \).
We initialize the teleportation method with step-size \( \rho = 1 \), resulting
in slightly more aggressive steps.

\textbf{UCI Performance Profiles}:
We run on the following 20 binary classification datasets selected from the UCI
repository:
\texttt{blood},
\texttt{breast-cancer},
\texttt{chess-krvkp},
\texttt{congressional-voting},
\texttt{conn-bench-sonar},
\texttt{credit-approval},
\texttt{cylinder-bands},
\texttt{hill-valley},
\texttt{horse-colic},
\texttt{ilpd-indian-liver},
\texttt{ionosphere},
\texttt{magic},
\texttt{mammographic},
\texttt{musk-1},
\texttt{ozone},
\texttt{pima},
\texttt{tic-tac-toe},
\texttt{titanic},
\texttt{ringnorm},
\texttt{spambase}.
We use the pre-processed datasets provided by \citet{fernandez2014we},
although we do not use their splits since these are known to have test set
contamination.
To obtain 120 distinct problems, we also consider regularization parameters
from the grid \( \cbr{10^{-1}, 10^{-2}, 10^{-3}, 10^{-4}, 10^{-5}, 10^{-6}}
\).
For each problem, we estimate \( f(\wopt) \) by running SGD-M with teleportation
to obtain a very accurate solution.
For fairness, we estimate \( f(\wopt) \) separately for deterministic and
stochastic problems; in the deterministic setting, we run SGD-M for \( 2500 \)
iterations with teleportation every fifty iterations starting at \( k = 5 \).
In the stochastic setting, we run SGD-M for \( 250 \) epochs with teleportation
steps every \( 10 \) epochs starting at epoch \( 5 \).
In both cases, we select step-sizes independently for each problem
using the grid-search described above.

For the stochastic setting, we use batch-sizes of \( 64 \) and train for \(
100 \) epochs.
The teleportation schedule is \( \calT = \cbr{5, 15, 25, 35, 45, 55, 65, 75,
    85, 95} \) and we record metrics every five iterations.
In the deterministic setting, we run for \( 500 \) iterations and use
teleportation schedule \( \calT = \cbr{5, 55, 105, 155, 205, 255, 305, 355}
\).
We record metrics at every iteration.

We use different tolerances to generate our performance profiles.
All profiles in the stochastic setting use \( \tau = 0.05 \), while profiles
in the deterministic setting use \( \tau = 0.15 \) for logistic regression
and \( \tau = 0.1 \) for neural networks.
These tolerances were chosen so that the worst-performing method solved
approximately \( 50\% \) of problems.
As discussed, we estimate \( f(\wopt) \) separately in the stochastic
and deterministic settings using SGD with momentum and teleportation.
Thus, the estimate of \( f(\wopt) \) is not the same between
\cref{fig:network-profile} and \cref{fig:network-profile-init}.

\textbf{Initialization by Teleportation}:
All experimental details are as described for the other performance
profiles except that the teleportation schedule is \( \calT = \cbr{0} \)
for methods using teleportation.

\textbf{Timed Performance Profiles}:
All experimental details are as described for the other performance
profiles except that we run all methods for longer.
We run deterministic methods for \( 1000 \) iterations with teleportation
schedule \( \calT = \cbr{ 5 + 50k : k \in \bbN, k < 20 } \) iterations.
For stochastic methods, we run for \( 200 \) epochs with teleportation
schedule \( \calT = \cbr{ 5 + 10k : k \in \bbN, k < 20 } \).

\textbf{Image Classification}: We use a fixed strength of
\( \lambda = 10^{-2} \) for the weight decay regularization.
All other settings are as described above.
Depending on the experiment, we use either the soft-plus or ReLU activations.
All models are two-layer networks with \( 500 \) hidden units.
We plot the median and interquartile range out of three random restarts.

\textbf{Effects of Regularization}: The data for \cref{fig:regularization-ablation}
comes directly from the performance profiles in
\cref{fig:network-profile}.
All results are shown for ReLU networks with two hidden layers of \( 100 \)
neurons each.

\textbf{Additional UCI Plots}: The data for \cref{fig:uci-convergence-logreg}
comes directly from the performance profile in \cref{fig:logistic-profile}
and the data for \cref{fig:uci-convergence-network} comes from the performance
profile in \cref{fig:network-profile}.
All neural network results are shown for regularization parameter \( \lambda =
0.01 \), while logistic regression results use \( \lambda = 10^{-6} \).

\textbf{Comparison with Newton}:
We use the open source implementation of Newton-CG in PyTorch
by \citet{feinman2021newton}.
The number CG steps for each step of Newton-CG is fixed at \( 25 \).
Similarly, for GD with teleportation we limit the number of iterations used by
\cref{alg:teleport} to be \( 25 \).
This ensures both methods use the same number of Hessian-vector products.
For Newton-CG-LS, we set the step-size using line-search on the strong Wolfe
conditions.
For methods with fixed step-sizes, we pick the step-size by grid-search on
the training loss over the set
\[
    \{ \cbr{1000, 100, 10, 5, 2, 1, 10^{-1}, 10^{-2}, 10^{-3}, 10^{-4}} \}.
\]
We set the regularization parameter to \( 0 \) (unregularized) for both the
convex and non-convex problems.
The non-convex model is a three-layer ReLU network with \( 100 \) hidden units in
each hidden layer.

\subsection{Computational Details}\label{app:computational-details}

The experiments in \cref{fig:toy-teleport} were run on a MacBook Pro
(16 inch, 2019) with a 2.6 GHz 6-Core Intel i7 CPU and 16GB of memory.
All other experiments were run on a Slurm cluster with several different node
configurations.
For experiments requiring accurate timing, we used nodes with Nvidia A100 GPUs
with either 80GB or 40GB memory (their throughput is rated as the same)
and Icelake CPUs.
For experiments without accurate timing, we also allowed nodes with
Nvidia H100-80GB GPUs, Nvidia V100-32GB, or V100-16GB GPUs, where
nodes with the last two cards use Skylake CPUs instead of Icelake.
All jobs were allocated a single GPU and 24GB of RAM.
We thank the Scientific Computing Core, Flatiron Institute for support
running our experiments.

\end{document}